\patchcmd\@combinedblfloats{\box\@outputbox}{\unvbox\@outputbox}{}{\errmessage{\noexpand patch failed}}
\def\Dir{{\sf Dir}}
\def\intr{{\sf int}}
\def\ones{{\bf 1}}
\def\lspan{{\sf span}}
\long\def\comment#1{}
\newcommand{\Scal}{\ensuremath{\mathcal{S}}}
\newcommand{\Bscr}{\ensuremath{\mathscr{B}}}
\newcommand{\Qdist}{\ensuremath{\mathbb{Q}}}
\newcommand{\Cov}{\ensuremath{\textrm{Cov}}}
\DeclareMathOperator*{\conv}{\textrm{Conv}}
\def\bbP{\mathbb P}
\def\reals{\mathbb R}
\def\Ex{\mathbb E}
\DeclareMathOperator*{\argmax}{argmax}
\DeclareMathOperator*{\argmin}{argmin}
\theoremstyle{definition}
\newtheorem{definition}{Definition}
\newtheorem{theorem}{Theorem}
\newtheorem{lemma}{Lemma}
\icmltitlerunning{Dirichlet Simplex Nest and Geometric Inference}
\begin{document}

\twocolumn[
\icmltitle{Dirichlet Simplex Nest and Geometric Inference}



\icmlsetsymbol{equal}{*}

\begin{icmlauthorlist}
\icmlauthor{Mikhail Yurochkin}{ibm,mit-ibm,equal}
\icmlauthor{Aritra Guha}{umich,equal}
\icmlauthor{Yuekai Sun}{umich}
\icmlauthor{XuanLong Nguyen}{umich}
\end{icmlauthorlist}

\icmlaffiliation{ibm}{IBM Research, Cambridge}
\icmlaffiliation{mit-ibm}{MIT-IBM Watson AI Lab}
\icmlaffiliation{umich}{Department of Statistics, University of Michigan}

\icmlcorrespondingauthor{Mikhail Yurochkin}{mikhail.yurochkin@ibm.com}

\icmlkeywords{Admixtures, topic modeling, NMF}

\vskip 0.3in
]

\printAffiliationsAndNotice{\icmlEqualContribution}

\begin{abstract}
We propose Dirichlet Simplex Nest, a class of probabilistic models suitable for a variety of data types, and develop fast and provably accurate inference algorithms by accounting for the model's convex geometry and low dimensional simplicial structure. By exploiting the connection to Voronoi tessellation and properties of Dirichlet distribution, the proposed inference algorithm is shown to achieve consistency and strong error bound guarantees on a range of model settings and data distributions. The effectiveness of our model and the learning algorithm is demonstrated by simulations and by analyses of text and financial data.\footnote{Code:  \url{https://github.com/moonfolk/VLAD}}
\end{abstract}

\section{Introduction}
For many complex probabilistic models, especially those with latent variables, the probability distribution of interest can be represented as an element of a convex polytope in a suitable ambient space, for which model fitting may be cast as the problem of finding the extreme points of the polytope. For instance, a mixture density can be identified as a point in a convex set of distributions whose extreme points are the mixture components. In the well-known topic model \cite{blei2003latent} for text analysis, a document corresponds to a point drawn from the topic polytope, its extreme points are the topics to be inferred. This convex geometric viewpoint provides the basis for posterior contraction behavior analysis of topic models, as well as developing fast geometric inference algorithms~\cite{nguyen2015posterior,tang2014understanding,yurochkin2016geometric,yurochkin2017conic}.

The basic topic model -- the Latent Dirichlet Allocation (LDA) of \citet{blei2003latent}, as well as the comparable finite admixtures developed in population genetics \cite{pritchard2000inference} were originally designed for categorical data. However, there are many real world applications in which the convex geometric probabilistic modeling continues to be a sensible approach, even if observed measurements are no longer discrete-valued, but endowed with a variety of distributions. To expand the scope of admixture modeling for a variety of data types, we propose to study Dirichlet Simplex Nest (DSN), a class of probabilistic models that generalizes the LDA, and to develop fast and provably accurate inference algorithms by accounting for the model's convex geometry and its low dimensional simplicial structure. 

The generative process given by a DSN is simple to describe: starting from a simplex $\Bscr$ of $K$ vertices embedded in a high-dimensional ambient space $\Scal$, one draws random points from the $\Bscr$'s relative interior according to a Dirichlet distribution. Given each such point, a data point is generated according to a suitable probability kernel $F$. For the general simplex nest, $\Scal$ can be any vector space of dimensions $D \geq K-1$, while the probability kernel $F$ can be taken to be Gaussian, Multinomial, Poisson, etc, depending on the nature of the observed data (continuous, categorical or counts, resp.).
If $\Scal$ is standard probability simplex, and $F$ a Multinomial distribution over categories, then the model is reduced to the familiar LDA model of \citet{blei2003latent}.

Although several geometric aspects of the DSN can be found in a vast array of well-known models in the literature, they were rarely treated together. First, viewing data as noisy observations from the low-dimensional affine hull that contains $\Bscr$, our model shares an assumption that can be found in both classical factor analysis and non-negative matrix factorization (NMF) models~\cite{Lee2001Algorithms}, as well as the work of~\citet{anandkumar2012spectral,arora2012learning} arising in topic models. Second, the convex constraints (i.e., linear weights of a convex combination are non-negative and sum to one) are present in all latent variable probabilistic modeling, even though most dominant computational approaches to inference such as MCMC sampling \cite{griffiths2004finding} and variational inference \cite{blei2003latent,hoffman2013stochastic,kucukelbir2017automatic} do not appear to take advantage of the underlying convex geometry. 

As is the case with topic models, scalable parameter estimation is a key challenge for the Dirichlet Simplex Nest. Thus, our main contribution is a novel inference algorithm that accounts for the convex geometry and low dimensionality of the latent simplex structure endowed with a Dirichlet distribution. Starting with an original geometric technique of \citet{yurochkin2016geometric}, we present several new ideas allowing for more effective learning of asymmetric simplicial structures and the Dirichlet's concentration parameter for the general DSN model, thereby expanding its applicability to a broad range of data distributions. We also establish statistical consistency and estimation error bounds for the proposed algorithm.

The paper proceeds as follows. Section \ref{sec:simplex_nest} describes Dirichlet Simplex Nest models and reviews existing geometric inference techniques. Section \ref{sec:sn_estimation} elucidates the convex geometry of the DSN via its connection to the Voronoi Tessellation of simplices and the structure of Dirichlet distribution on low-dimensional simplices. This helps motivate the proposed Voronoi Latent Admixture (VLAD) algorithm. Theoretical analysis of VLAD is given in Section \ref{sec:theory}. Section \ref{sec:experiments} presents an exhaustive comparative study on simulated and real data. We conclude with a discussion in Section \ref{sec:discussion}.

\section{Dirichlet Simplex Nest}
\label{sec:simplex_nest}

\begin{figure*}[t]
\vskip -0.1in
\begin{subfigure}{.245\textwidth}
  \centering
  \includegraphics[width=\linewidth]{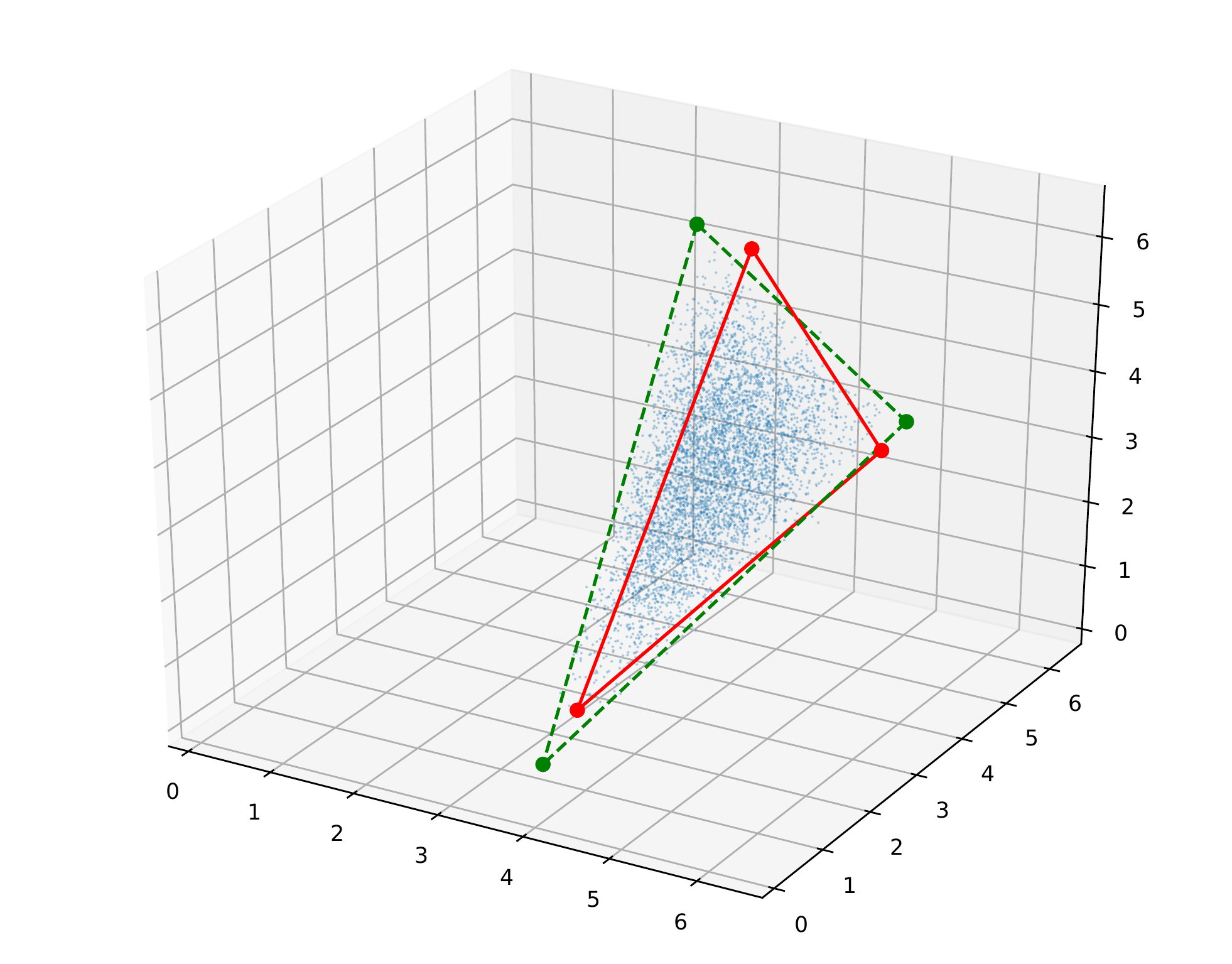}
  \caption{GDM; time $\approx$ 1s}
  \label{fig:gdm}
\end{subfigure}
\begin{subfigure}{.245\textwidth}
  \centering
  \includegraphics[width=\linewidth]{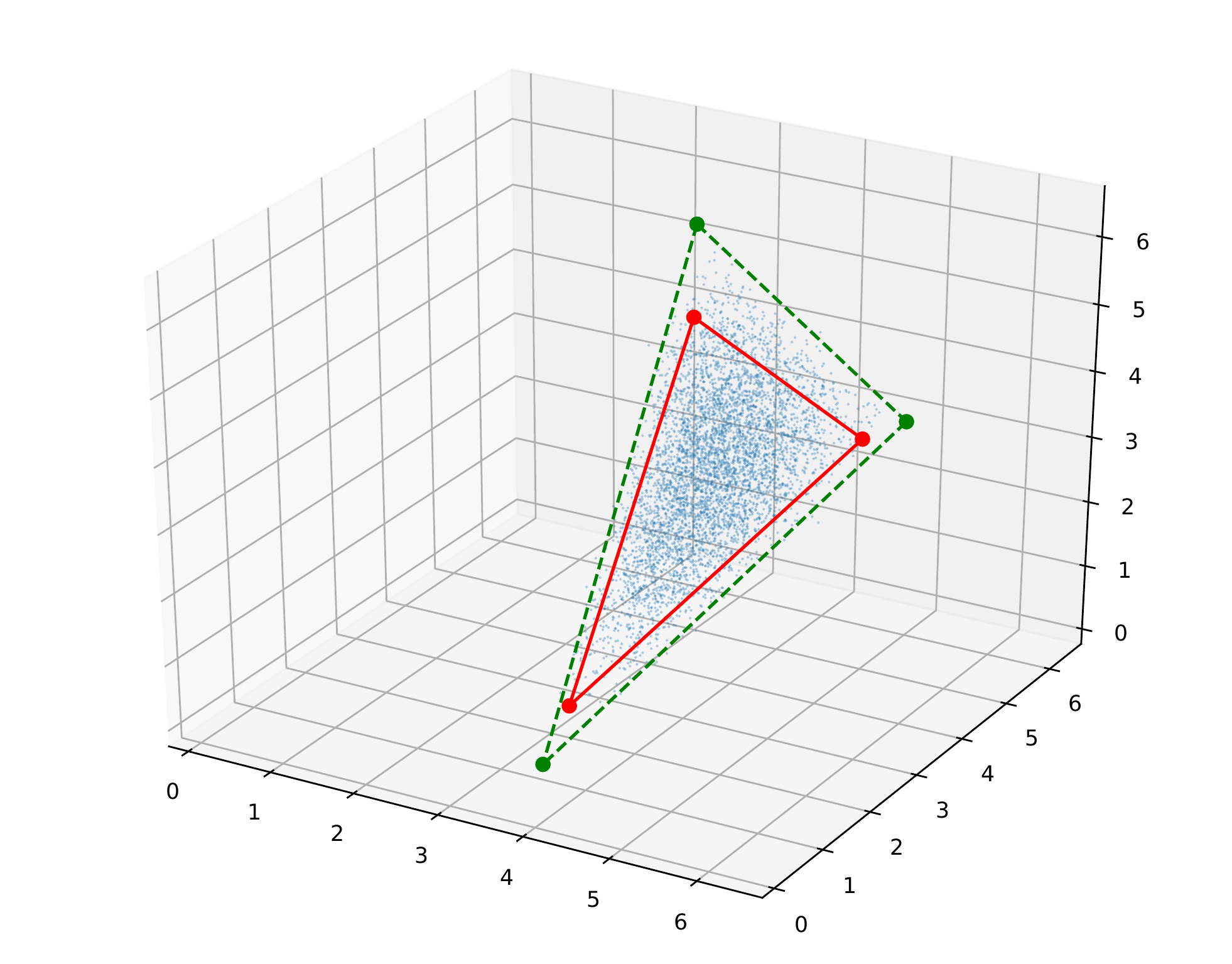}
  \caption{Xray; time < 1s}
  \label{fig:xray}
\end{subfigure}
\begin{subfigure}{.245\textwidth}
  \centering
  \includegraphics[width=\linewidth]{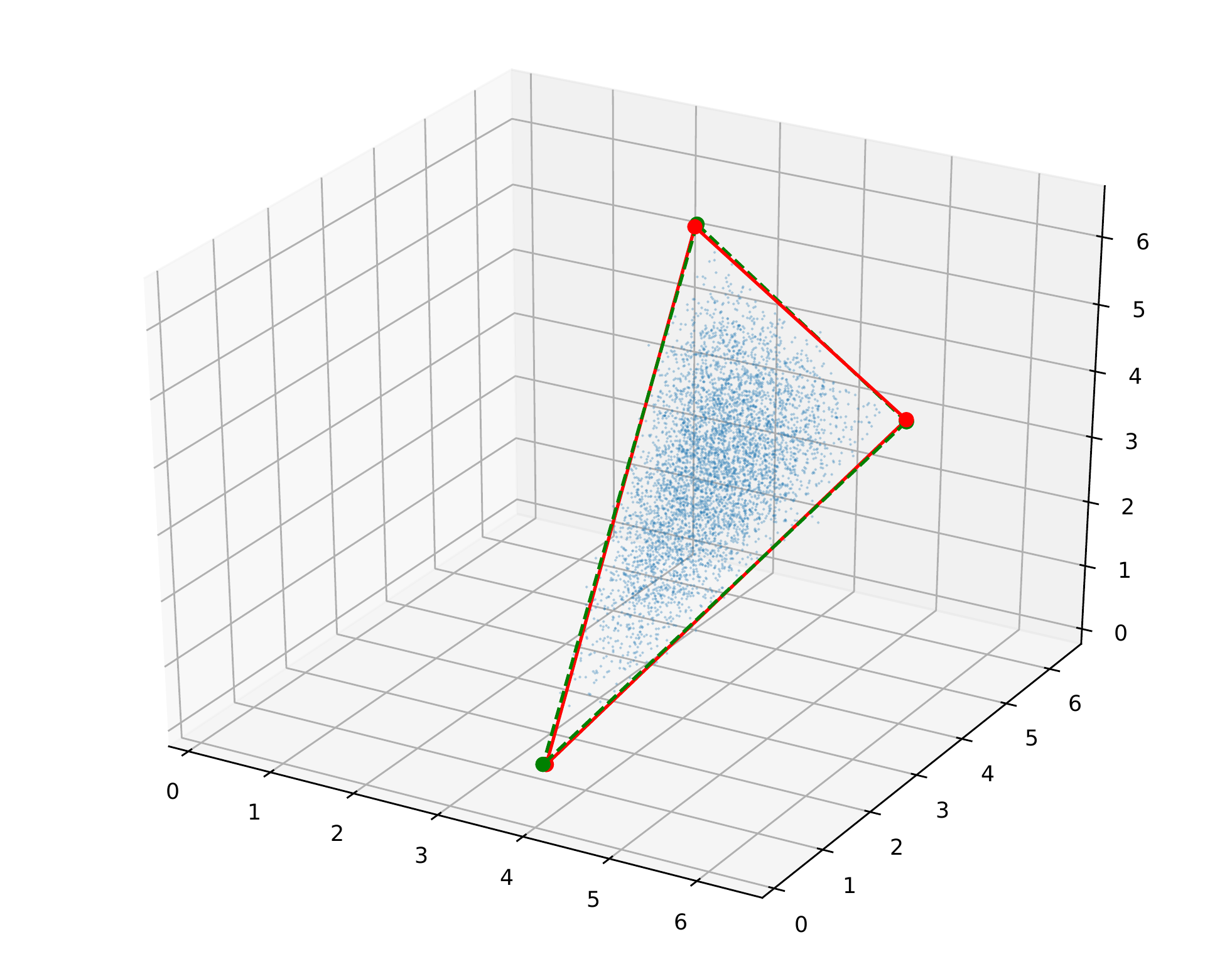}
  \caption{HMC; time $\approx$ 10m}
  \label{fig:stan-hmc}
\end{subfigure}
\begin{subfigure}{.245\textwidth}
  \centering
  \includegraphics[width=\linewidth]{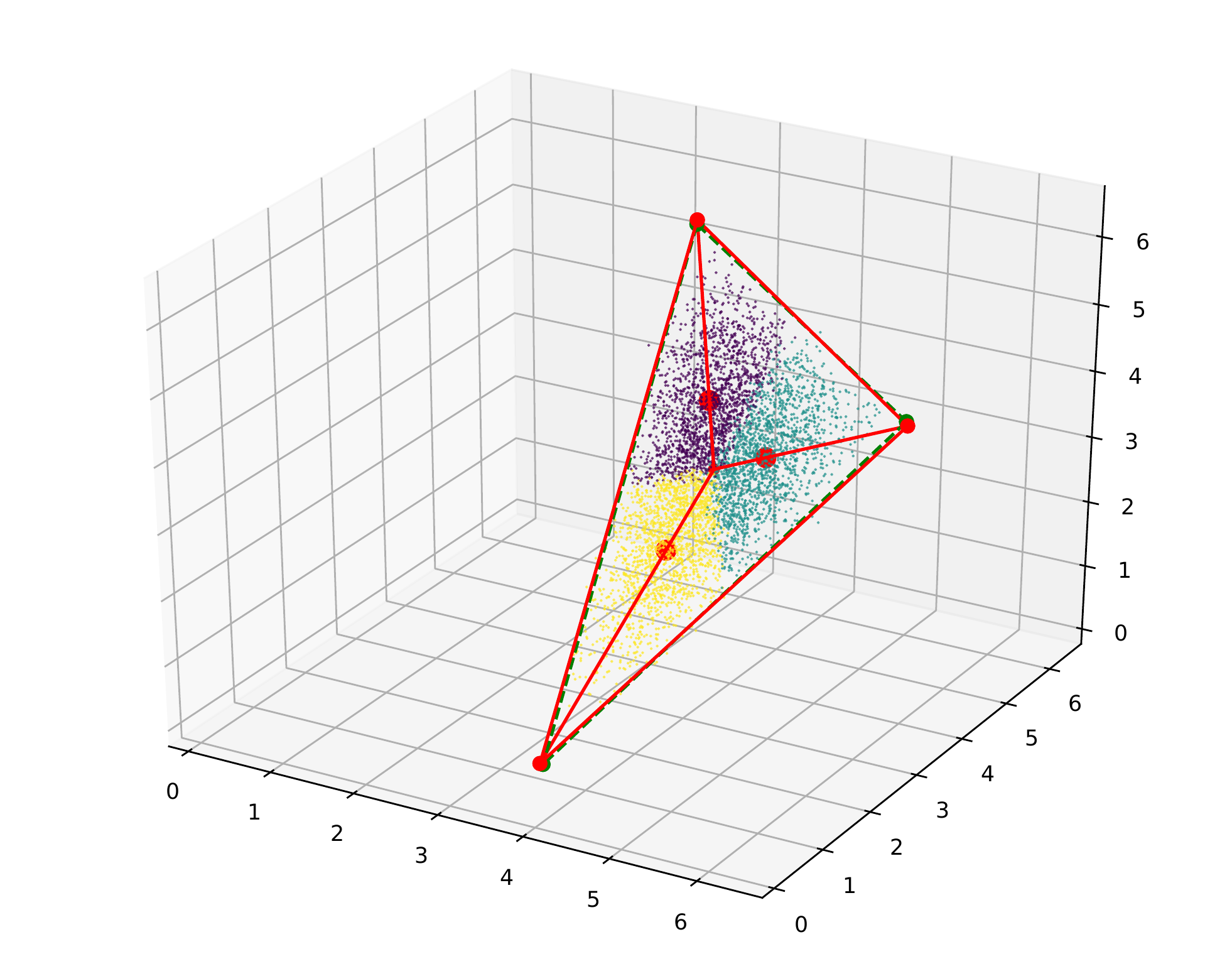}
  \caption{VLAD; time < 1s}
  \label{fig:gam}
\end{subfigure}
\caption{Toy simplex learning: $n=5000,D=3,K=3,\alpha=2.5,\sigma=0.1$.}
\label{fig:triangle}
\vskip -0.2in
\end{figure*}
We proceed to formally describe Dirichlet Simplex Nest as a generative model. Let $\beta_1,\ldots, \beta_K \in \mathcal{S}$ be $K$ elements in a $D$-dimensional vector space $\Scal$, and define $\Bscr = \textrm{Conv}(\beta_1,\ldots, \beta_K)$ as their convex hull. When $K \leq D+1$, $\Bscr$ is a simplex in general positions. Next, for each $i=1,\ldots, n$, generate a random vector $\mu_i \in \Bscr$ by taking $\mu_i := \sum_{k=1}^K \theta_{ik} \beta_k$, where the corresponding coefficient vector $\theta_i = (\theta_{i1},\ldots,\theta_{iK}) \in \Delta^{K-1}$
is generated by letting $\theta_i \thicksim \text{Dir}_K(\alpha)$ for some concentration parameter $\alpha \in \mathbb{R}_+^K$. 
Now, given $\mu_i$ the data point $x_i$ is generated by $x_i| \mu_i \thicksim F(\cdot\mid\mu_i)$, where $F$ is a given probability kernel such that $\Ex[x_i\mid\theta_i] = \mu_i$ for any $i=1,\ldots,n$.

\paragraph{Relation to existing models} The DSN encompasses several existing models in the literature. If we set $\Scal := \Delta^{D-1}$ and likelihood kernel $F(\cdot)$ to Multinomial, then we recover the LDA model \cite{blei2003latent}. Other specific instances include Gaussian-Exponential \cite{schmidt2009bayesian} and Poisson-Gamma models \cite{cemgil2009bayesian}. 

Estimating $\Bscr$ is a challenging task for the general Dirichlet Simplex Nest model. Taking the perspective of Bayesian inference, a standard MCMC implementation for the DSN is likely computationally inefficient. In the case of LDA, as noted in~\citet{yurochkin2016geometric}, the inefficiency of posterior inference can be traced to the need for approximating the posterior distributions of the large number of latent variables representing the topic labels. With the DSN model, we bypass the representation of such latent variables by integrating them out, but doing so at the cost of losing conjugacy. An alternative technique is variational inference (cf. \citet{Blei2017Variational,paisley2014bayesian}). While very fast, this powerful method may be inaccurate in practice and does not carry a strong theoretical guarantee. 

\paragraph{Relation to NMF and archetypal analysis} The DSN provides a probabilistic justification for these methods, which often impose an additional geometric condition on the model known as {\it separability} that identifies the model parameters in a way that permits efficient estimation \cite{Donoho2003When,Arora2012Computing,Gillis2012Fast}. Separability is somewhat related to a control on the Dirichlet's concentration parameter $\alpha$, by setting $\alpha$ be sufficiently small. The DSN allows for a probabilistic description of the nature of the separation.  Moreover, by addressing \emph{also} the case where $\alpha$ is large, the DSN modeling provides an arguably more effective approach to archetypal analysis and non-negative matrix factorization for \emph{non-separable} data. We remark that an approach proposed by \citet{huang2016AnchorFree} also permits a more general geometric identification condition called sufficiently scattered, but this generality comes at the expense of efficient estimation.

\paragraph{Geometric inference}
Geometric Dirichlet Means (GDM) algorithm of \citet{yurochkin2016geometric} is a geometric technique for estimating the (topic) simplex $\Bscr$ that arises in the Latent Dirichlet Allocation model. The basic idea of GDM is simple: performing the $K$-means clustering algorithm on the $n$
points $\mu_i$ (or their estimates) to obtain $K$ centroids. These centroids cannot be a good estimate for $\Bscr$'s vertices, but they provide reasonable directions toward the vertices. Starting from the simplex's estimated centroid, the GDM constructs $K$ line segments connecting to the $K$ centroids and suitably extends the rays to provide an estimate for the $K$ vertices. The GDM method is shown to be accurate when either $\Bscr$ is equilateral, or the Dirichlet concentration parameter $\alpha$ is very small, i.e., most of the points $\mu_i$s are concentrated near the vertices. The quality of the estimates deteriorates in the absence of such conditions.

The deficiency of the GDM algorithm can be attributed to several factors: first, for a general simplex, the $K$-means centroids and the simplex's vertices do not line up. Fortunately, we will see that they may be lined up in a straight line by a suitable affine transformation of the simplex structure. Second, the nature of the Dirichlet distribution on the simplex is not pro-actively exploited, including that of parameter $\alpha$. Third, typically $K\ll D$, the affine hull of $\Bscr$ is a very low-dimensional structure, a fact not utilized by the GDM algorithm. It turns out that these shortcomings may be overcome by a careful consideration of the geometric structure of the simplex and the Dirichlet distribution.

For illustrations, we consider a toy problem of learning extreme points of simplex $\Bscr$, given Gaussian data likelihood  $x_i|\mu_i \thicksim \mathcal{N}(\mu_i,\sigma^2I_D)$ and $D=K=3$. The triangle is chosen to be non-equilateral and Dirichlet concentration parameter is set to $\alpha=2.5$. Figure \ref{fig:gdm} illustrates the deteriorating performance of the GDM. In Figure \ref{fig:xray}, we also observe Xray \cite{kumar2013fast}, another recent NMF algorithm, failing to solve the problem, as the aforementioned separability assumption is violated for large $\alpha$. On the other hand, Figure \ref{fig:stan-hmc} demonstrates the high accuracy of the posterior mean obtained by Hamiltonian Monte Carlo (HMC) \cite{neal2011mcmc,hoffman2014no} implemented using Stan \cite{carpenter2017stan}, albeit at the cost of 10 minutes training time. Lastly our new algorithm (VLAD) in Fig. \ref{fig:gam}, exhibits an accuracy comparable to that of the HMC and the run-time of the GDM algorithm.

\section{Inference of the Dirichlet Simplex Nest}
\label{sec:sn_estimation}

\subsection{Simplicial Geometry}

In order to motivate our algorithm, we elucidate the geometry of the DSN through the concept of Centroidal Voronoi Tessellation (CVT) \cite{du1999centroidal} of a simplex $\mathscr{B}$, a convex subset of $D$-dimensional metric space $\Scal$.

\begin{definition}[Centroidal Voronoi Tessellation]
\label{def:cvt}
Let $\Omega\subset\Scal$ be an open set equipped with a distance function $d$ and a probability density $\rho$. 
For a set of $K$ points $c_1,\dots,c_K$, the Voronoi cell corresponding to $c_k$ is the set
\[
V_k = \{x\in\Omega: d(x,c_k) < d(x,c_l)\text{ for any }l\ne k\}.
\]
The collection of Voronoi cells $V_1,\dots,V_K$ is a tessellation of $\Omega$; i.e.\ the cells are disjoint and $\cup_k V_k = \Omega$. If $c_1,\dots,c_K$ are also the centroids of their respective Voronoi cells, i.e., 
\[
c_k = \frac{1}{\int_{V_k} \rho(x)\textrm{d}x}\int_{V_k}x\rho(x) \textrm{d}x
\]
the tessellation is a Centroidal Voronoi Tessellation.
\end{definition}

CVTs are special: any set of $k$ points induces a Voronoi tessellation, but these points are generally not the centroids of their associated cells. One can check that a CVT minimizes
\[
J(c_1,\dots,c_K) = \int_{V_k}d(x,c_k)^2\rho(x)
\textrm{d}x.
\]
It is a fact that $J$ has a unique global minimizer as long as $\rho$ vanishes on a set of measure zero, the Voronoi cells are convex, and the distance function is convex in each argument \cite{du1999centroidal}. 
Moreover, 
it can be seen that
the centroids of the CVT of an equilateral simplex equipped with the $\text{Dir}_K(\alpha)$ distribution fall on the line segments between the centroid of the simplex and the extreme points of the simplex, but this is not the
case when the simplex shape is non-equilateral (cf. Fig. \ref{fig:gdm}).

The following lemma formalizes the aforementioned insight to a simplex of arbitrary shape $\Bscr$ by considering a suitably modified distance function $d(\cdot,\cdot)$ of the CVT.  (In Fig. \ref{fig:gam}, the blue, purple and yellow dots are the sample versions of the Voronoi cells of the CVT under the new distance function and the corresponding centroids are in red.) 

\begin{lemma}
\label{CVT_general_simplex}
Let $B\in\reals^{D\times K}$ denote the matrix form of simplex $\mathscr{B}$. Suppose it has full (column) rank, equipped with distance function $\|\cdot\|_{(BB^T)^\dagger}$ and the probability distribution $\bbP_B$ defined as
\[
\bbP_B(S) = \textrm{Prob}(\{\theta\in\Delta^{K-1}:B\theta\in S\}),
\]
where $\theta$ is distributed by symmetric Dirichlet density $\rho_\alpha :=\text{Dir}_K(\alpha)$, 
for any $S\subset\intr(\mathscr{B})$, and $A^{\dagger}$ denotes a pseudo-inverse of $A$.
The centroids of its CVT fall on the line segments connecting the centroid of $\mathscr{B}$ to $\beta_1,\dots,\beta_K$.
\end{lemma}

\begin{proof}
Let $c_1,\dots,c_K$ and $V_1,\dots,V_K$ be the centroids and cells of the CVT of $\Delta^{K-1}$ equipped with Euclidean distance and $\text{Dir}_K(\alpha)$ density $\rho_\alpha$. It suffices to verify that $Bc_1,\dots,Bc_K$ and $BV_1,\dots,BV_K$ are the centroids and cells of the CVT of $\mathscr{B} = B\Delta^{K-1}$. By a change of variables formula,
\begingroup\makeatletter\def\f@size{9}\check@mathfonts
\[
\begin{aligned}
&\argmin\biggr \{\frac{\int_{BV_k}\|x - Bv\|_{(BB^T)^\dagger}^2\rho_\alpha(B^{\dagger}x)|\det(B^{\dagger})|\textrm{d}x}{\int_{V_k}\rho_\alpha(B^{\dagger}x)|\det(B^{\dagger})|\textrm{d}x}:v\in V_k\biggr\} \\
&= \argmin\biggr\{\frac{\int_{V_k}\|B\theta - Bv\|_{(BB^T)^\dagger}^2\rho_\alpha(\theta)\textrm{d}\theta}{\int_{V_k}\rho_\alpha(\theta)\textrm{d}\theta}:v\in V_k\biggr\} \\
&= \argmin\biggr\{\frac{\int_{V_k}\|\theta - v\|_2^2\rho_\alpha(\theta)\textrm{d}\theta}{\int_{V_k}\rho_\alpha(\theta)\textrm{d}\theta}:v\in V_k\biggr\},
\end{aligned}
\] 
\endgroup
which we recognize as the centroids of the CVT of $\Delta^{K-1}$ under $\ell_2$ metric. Since $\Delta^{K-1}$ is a standard simplex and therefore equilateral, the centroids of the CVT of equilateral simplex fall on the line segments connecting the centroid of the simplex to its extreme points.
\end{proof}
Lemma \ref{CVT_general_simplex} suggests an algorithm to estimate the extreme points of $\mathscr{B}$. First, estimate the centroids of the CVT of $\Bscr$ (equipped with scaled Euclidean norm $\|\cdot\|_{(BB^T)^\dagger}$) and search along the rays extending from the centroid of $\Bscr$ through the CVT centroids for the simplicial vertices.
\subsection{The Voronoi Latent Admixture (VLAD) Algorithm} \label{algorithm}
We  first consider the noiseless problem, $F(\cdot\mid\mu) = \delta_\mu$. That is, $x_i=\mu_i$s are observed. 
In this case, Lemma~\ref{CVT_general_simplex} suggests estimating the CVT centroids by scaled $K$-means optimization:
\begin{equation}
\argmin_{c_1,\ldots,c_K}\bigl\{{\textstyle\frac12\sum_{k=1}^K\sum_{x_i\in V_k}(x_i - c_k)^T(BB^T)^\dagger(x_i - c_k)}\bigr\},
\label{eq:clusteringUncenteredData}
\end{equation}
Unfortunately, the scaled Euclidean norm $\|\cdot\|_{(BB^T)^\dagger}$ is unknown. We propose an equivalent approach that does not depend on knowledge of $BB^T$.

In the noiseless case, observe that the population covariance matrix of the samples takes the form $\Sigma = BSB^T$, where $S$ is the covariance matrix of a $\text{Dir}(\alpha)$ random variable on $\Delta^{K-1}$. By the standard properties of the $\text{Dir}(\alpha)$ distribution, it can be seen that $S = \frac{1}{K(K\alpha + 1)}P$, where $P = I_K - \frac1K\ones_K\ones_K^T$ is the centering matrix. Hence, knowledge of $\Sigma$ will be sufficient because the centered data points $x$ fall in $\lspan(\Sigma) = \lspan(BPB^T)$: For each $(\theta,x)$ pair,
\begin{equation}
\bar{x} := \underbrace{B\theta}_{x} - \underbrace{\textstyle\frac1KB\ones}_{\Ex[x]} = B\theta - {\textstyle\frac1K}B\ones(\underbrace{\ones^T\theta}_{=1}) = BP\theta := B\bar{\theta}.
\label{eq:centeredDataPoints}
\end{equation}
This suggests that the centroids of the CVT may be recovered by clustering the centered data points in the $\|\cdot\|_{\Sigma^{\dagger}}$-norm. This insight is formalized by
%
\begin{lemma}
\label{lemma:simplex_transform}
The centroids of the CVT of simplex $\mathscr{B}$ under $\|\cdot\|_{(BB^T)^{\dagger}}$-norm are given by $\{c^*_k + c_0 | k=1,\ldots, K\}$, where $(c^*_1,\dots,c^*_K)$ solves the minimization
\begin{equation}
\min_{ \substack{c_1,\ldots,c_K \\ V_1,\ldots,V_K}} \frac12\sum_{k=1}^K \bigintssss_{x\in BV_k}(\bar{x} - c_k)^T\Sigma^\dagger(\bar{x} - c_k) \rho(x) \mathrm{d}x
\label{eq:clusteringCenteredData}
\end{equation}
\comment{
\begin{equation}
\begin{aligned}
& \bigl\{{\textstyle\frac12\sum_{k=1}^K\bigints_{x\in V_k}(\bar{x}_i - \bar{c}_k)^T\Sigma^\dagger(\bar{x}_i - \bar{c}_k)}\bigr\} \\
&=\arg\min\bigl\{{\textstyle\frac12\sum_{k=1}^K\sum_{x_i\in V_k}(\bar{x}_i - c_k)^T\Sigma^\dagger(\bar{x}_i - c_k)}\bigr\}.
\label{eq:clusteringCenteredData}
\end{aligned}
\end{equation}
}
and $c_0 = \int x \rho(x) \textrm{d}x$ is the centroid of simplex $\mathscr{B}$.
\end{lemma}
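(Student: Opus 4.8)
The plan is to reduce the claim to Lemma~\ref{CVT_general_simplex} by two successive reductions: a translation that accounts for the shift $c_0$, and an identification of the weighted quadratic form $\Sigma^\dagger$ with $(BB^T)^\dagger$ on the relevant subspace. Concretely, I would compare the CVT of $\mathscr B$ under $\|\cdot\|_{(BB^T)^\dagger}$, whose centroids are characterized by Lemma~\ref{CVT_general_simplex}, with the minimizer of \eqref{eq:clusteringCenteredData}, and show the two coincide after translating by $c_0$.

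\emph{First reduction (centering).} The substitution $(c_k,V_k)\mapsto(c_k-c_0,V_k-c_0)$, together with the shift $x\mapsto\bar x=x-c_0$ of the integration variable, leaves each integrand $(\bar x-c_k)^T(\cdot)(\bar x-c_k)\,\rho$ unchanged, since the weighted norm depends only on differences and Lebesgue measure is translation-invariant; the Voronoi cells simply translate. Hence the CVT centroids of $\mathscr B$ and those of the centered configuration differ exactly by $c_0$, which explains the $c_k^*+c_0$ form and the replacement of $x$ by $\bar x$ in the objective. Moreover, by \eqref{eq:centeredDataPoints} the centered points are $\bar x=B\bar\theta$ with $\bar\theta=P\theta$, so they all lie in the $(K-1)$-dimensional subspace $\lspan(BP)$.

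\emph{Second reduction (the norm).} Using the established covariance identity $\Sigma=\lambda BPB^T$ with $\lambda=\tfrac{1}{K(K\alpha+1)}>0$, I would show that $\Sigma^\dagger$ and $(BB^T)^\dagger$ induce the same quadratic form, up to the positive scalar $1/\lambda$, on $\lspan(BP)$. Writing $P=UU^T$ with $U\in\reals^{K\times(K-1)}$ orthonormal and setting $\tilde B=BU$ (full column rank $K-1$), we have $BPB^T=\tilde B\tilde B^T$. For any full-column-rank $M$ one has $(MM^T)^\dagger=M(M^TM)^{-2}M^T$; applying this to $M=B$ gives $(B\theta)^T(BB^T)^\dagger(B\theta)=\|\theta\|_2^2$, and applying it to $M=\tilde B$ gives, for a centered $\bar\theta=P\theta$ so that $\bar x=\tilde B\,U^T\theta$, that $\bar x^T(BPB^T)^\dagger\bar x=\|U^T\theta\|_2^2=\theta^T P\theta=\|\bar\theta\|_2^2=\bar x^T(BB^T)^\dagger\bar x$. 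Thus on $\lspan(BP)$ we have $\Sigma^\dagger=\tfrac1\lambda(BB^T)^\dagger$. A positive scalar does not affect the $\argmin$, and since each optimal $c_k$ is a weighted average of the $\bar x$ it automatically lies in $\lspan(BP)$ (any component in $\ker\Sigma^\dagger$ is invisible to the objective), so the two CVT problems have identical solutions. Combining this with the centering reduction yields the $c_k^*+c_0$ characterization.

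The main obstacle is the norm-identification step: one must verify carefully that $\Sigma^\dagger$, which is rank-deficient, nonetheless agrees with the full-rank form $(BB^T)^\dagger$ once restricted to the centered subspace $\lspan(BP)$, and that the minimization genuinely confines the centroids to this subspace so the kernel of $\Sigma^\dagger$ plays no role. The factorization $BPB^T=\tilde B\tilde B^T$ together with the closed form of the pseudo-inverse for full-column-rank matrices is precisely what makes both quadratic forms collapse to $\|\bar\theta\|_2^2$, which is the crux of the argument.
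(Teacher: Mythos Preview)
Your proposal is correct and follows essentially the same strategy as the paper: both proofs reduce the $\Sigma^\dagger$-weighted problem to the $(BB^T)^\dagger$-CVT (equivalently, to Euclidean $K$-means on $\Delta^{K-1}$) by exploiting $\Sigma=\lambda BPB^T$ and the fact that the centered data lie in $\lspan(BP)$, after which Lemma~\ref{CVT_general_simplex} finishes the argument. The only cosmetic difference is that the paper pulls back to $\theta$-coordinates and verifies $PB^T\Sigma^\dagger BP=P$ directly, whereas you stay in $x$-space and verify the quadratic-form identity via the factorization $P=UU^T$, $\tilde B=BU$; both routes amount to the same computation, and your handling of the non-uniqueness in $\ker\Sigma^\dagger$ matches the paper's ``without loss of generality restrict $c_k\in\lspan(BPB^T)$'' step.
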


\begin{proof}
We first show that \eqref{eq:clusteringCenteredData} is equivalent to (unscaled) $K$-means clustering on $\Delta^{K-1}$.  
\comment{
We have
\begin{equation}
\begin{aligned}
&\frac12\sum_{k=1}^K\int_{x \in V_k}(\bar{x} - c_k)^T\Sigma^\dagger(\bar{x} - c_k)= \\
& \frac12\sum_{k=1}^K\int_{x \in V_k}(\bar{x} - c_k)^T(BPB^T)^\dagger(\bar{x} - c_k),
\end{aligned}
\end{equation}
}
Note that $\Sigma = \delta BPB^T$ for some $\delta > 0$.
Without loss of generality, we restrict to $c_k$'s in $\lspan\{BPB^T\}$. Write $c_k = BPv_k$ for $v_k \in \mathbb{R}^K$, for $k=1,\ldots, K$.
Recalling \eqref{eq:centeredDataPoints} and the fact $P$ is a projector, 
\begin{eqnarray}
& &(1/\delta)\textstyle\sum_{k=1}^K \int_{x\in BV_k}  (\bar{x} - c_k)^T\Sigma^\dagger(\bar{x} - c_k) \rho(x) \mathrm{d}x \nonumber\\
&=&{\textstyle\sum_{k=1}^K \int_{\theta\in V_k} }(\bar{\theta} - v_k)^TPB^T\Sigma^\dagger BP(\bar{\theta} - v_k) \rho_{\alpha}(\theta) \mathrm{d}\theta\nonumber\\
&=&{\textstyle\sum_{k=1}^K \int_{\theta\in V_k} }(\bar{\theta} - v_k)^T P(\bar{\theta} - v_k) \rho_{\alpha}(\theta) \mathrm{d}\theta\nonumber \\
&=&{\textstyle\sum_{k=1}^K \int_{\theta\in V_k} }\|\bar{\theta} - Pv_k\|_2^2  \rho_{\alpha}(\theta) \mathrm{d}\theta.
\end{eqnarray}
Since $\theta$ is distributed by the symmetric Dirichlet $\rho_\alpha = \text{Dir}(\alpha)$ on $\Delta^{K-1}$, the last equality entails that the optimal $v_k$'s are the points which represent the barycentric coordinate of the centroids of the CVT of $\Delta^{K-1}$. 
Thus, the optimal solution for $c_k = BPv_k$ represents the centroids of the CVT of simplex $\mathscr{B}$ under $\|\cdot\|_{(BB^T)^{\dagger}}$-norm (using the coordinating system that is centered at origin $c_0$). 
\end{proof}

We proceed to address the optimization \eqref{eq:clusteringCenteredData} applied to empirical data to arrive at Voronoi Latent Admixture (VLAD) algorithm in Algorithm \ref{algo:VLAD}. We   utilize the singular value decomposition (SVD) of the centered data points to simplify computation. Let $\bar{X}\in\reals^{n\times D}$ be the matrix whose rows are the centered data points and $\bar{X} = U\Lambda W ^T$ be its SVD. Each term in the objective of \eqref{eq:clusteringCenteredData} is equivalent to, with $\Sigma$ being replaced by its empirical version, $\Sigma_n=\frac{1}{n}W\Lambda^2 W^T$:
\begin{equation}
\begin{aligned}
&(\bar{x}_i - c_k)^T{\Sigma_n^\dagger}(\bar{x}_i - c_k)= \\ & n(u_i - \eta_k)^T\Lambda W^TW\Lambda^{-2}W^TW\Lambda(u_i - \eta_k) = n\|u_i - \eta_k\|_2^2,\nonumber
\end{aligned}
\end{equation}
where $\bar{x}_i = W\Lambda u_i$, and set $c_k=W\Lambda\eta_k $. Thus, instead of performing scaled $K$-means clustering in $\Scal$, it suffices to perform standard $K$-means in the low $(K-1)$ dimensional space. This yields a significant computational speed-up. After applying VLAD, the weights $\theta_i$'s can be obtained by projecting the data points onto $\mathscr{B}$ and compute the barycentric coordinates of the projected points. 
\begin{algorithm}[ht]
\caption{Voronoi Latent Admixture (VLAD)}
\label{algo:VLAD}
\begin{algorithmic}[1]
\REQUIRE data $x_1,\ldots,x_n$; $K$; extension parameter $\gamma$.
\ENSURE simplex vertices $\beta_1,\ldots,\beta_K$
\STATE $\widehat{c}_0\gets \frac{1}{n}\sum_i x_i $ \COMMENT{find data center}
\STATE $\bar{x}_i \gets x_i - \widehat{c}_0$, $i = 1,\dots,n$ \COMMENT{centering}
\STATE compute top $K-1$ singular factors of the centered data matrix $\bar{X}\in\reals^{n\times D}$: $\bar{X} = U\Lambda W^T$
\STATE $\eta_1,\ldots,\eta_K \gets \text{K-means}(u_1,\ldots,u_n$), where the $u_i$'s are the rows of $U\in\reals^{n\times (K-1)}$ 
\STATE $\widehat{c}_k \gets W\Lambda \eta_k + \widehat{c}_0$
\STATE $\widehat{\beta}_k \gets \widehat{c}_0 + \gamma(\widehat{c}_k - \widehat{c}_0)$
\end{algorithmic}
\end{algorithm}

It remains to estimate the extreme points $\beta_k$s given the CVT centroids $c_k$s. This task is simplified by two observations:
First, the CVT centroids reside on the line segment between the centroid of simplex $\mathscr{B}$ and its extreme points, per Lemma~\ref{CVT_general_simplex}. Thus we merely need to estimate the ratio of the distance from the extreme point to the centroids of $\Bscr$ and the distance from the CVT centroids to the centroid of $\Bscr$. Due to the symmetry of $\text{Dir}_K(\alpha)$ distribution on $\Delta^{K-1}$, this ratio is identical for all extreme points -- we refer to this ratio as the extension parameter $\gamma$.
Secondly, $\gamma$ does not depend on the geometry of $\mathscr{B}$, only that of the Dirichlet distribution. Thus, $\gamma$ can be easily estimated by appealing to a Monte Carlo technique on $\text{Dir}_K$. This subroutine is summarized in Algorithm \ref{algo:extensionParams}, provided that $\alpha$ is given.

\begin{algorithm}[ht]
\caption{Evaluating extension parameters}
\label{algo:extensionParams}
\begin{algorithmic}[1]
\STATE generate $\theta_1,\dots,\theta_m\sim\Dir_K(\alpha)$, where $m$ is the number of Monte Carlo samples
\STATE $v_1,\dots,v_K \gets \text{K-means}(\theta_1,\dots,\theta_m)$
\STATE $\gamma\gets \sqrt{K^2 - K}\left(\sum_{l=1}^K\|v_l - \frac1K\ones_K\|_2\right)^{-1}$
\end{algorithmic}
\end{algorithm}
%
%
%

\subsection{Estimating the Dirichlet Concentration Parameter}
\label{sec:alpha}
Next, we describe how to estimate concentration parameter $\alpha$ from the data, by employing a moment-based approach. Recall from the previous section that there is an one-to-one mapping between $\alpha$ and the extension parameter $\gamma$. For each $\alpha > 0$, let $\gamma(\alpha) > 0$ denote the corresponding extension parameter and $B(\gamma)\in\reals^{D\times K}$ the estimator of $B$ output by VLAD with extension parameter $\gamma$. In the absence of noise, the covariance matrix of the DSN model has the form $BS(\alpha)B^T$, where $S(\alpha)\in\reals^{K\times K}$ is the covariance matrix of a  $\text{Dir}(\alpha)$ random variable on $\Delta^{K-1}$. This suggests we estimate $\alpha$ by a generalized method of moments approach:
\begin{equation}
\hat{\alpha} = \argmin_{\alpha > 0}\|\hat{B}(\gamma(\alpha))S(\alpha)\hat{B}(\gamma(\alpha))^T - \hat{\Sigma}\|,
\label{eq:estimateAlpha}
\end{equation}
where $\hat{\Sigma}$ is the sample covariance matrix $\hat{\Sigma} = \frac1n \bar{X}^T\bar{X}$. We remark that there is no need to run VLAD multiple times to evaluate the objective in \eqref{eq:estimateAlpha} at multiple $\alpha$-values. After VLAD is run once, we may evaluate $\gamma(\alpha)$ for any value of $\gamma$ by affinely transforming the output of VLAD. Further, \eqref{eq:estimateAlpha} is a scalar optimization problem, so the computational cost of solving \eqref{eq:estimateAlpha} is negligible. 

In the presence of noise, the covariance matrix of the DSN model no longer has the form $BS(\alpha)B^T$. We need to add a correction term  to ensure a consistent estimator of $BS(\alpha)B^T$. For example, if the noise is Gaussian, a consistent estimator of $BS(\alpha)B^T$ is 
\[
\tilde{\Sigma} = \hat{\Sigma} - \hat{\sigma}^2I_D,
\]
where $\hat{\sigma}^2$ is an estimate of the noise variance. In Supplement~\ref{consistent estimator}, we give consistent estimators of $BS(\alpha)B^T$ for multinomial and Poisson noise. With a good estimator $\tilde{\Sigma}$ of $BS(\alpha)B^T$ in place, we replace $\hat{\Sigma}$ in \eqref{eq:estimateAlpha} by $\tilde{\Sigma}$ and then solve \eqref{eq:estimateAlpha} to obtain an estimate of $\alpha$. 

\section{Consistency and Estimation Error Bounds}
\label{sec:theory}
In this section we   establish consistency properties and error bound guarantees of the VLAD procedure.

For $c=(c_1,\dots,c_K)\in\reals^{K\times D}$, define $\phi_{A} : \mathbb{R}^D \times \mathbb{R}^{K \times D} \rightarrow \mathbb{R}$ as 
\[\textstyle
\phi_{A}(x,c)=\min_{k \in \{1,\dots,K\}} \|x-c_k\|^2_{A^{\dagger}}
\]
where $A$ is a positive semidefinite matrix. Recall $\Sigma$ as the covariance matrix of the data generating distribution 
and $\Sigma_n$ its empirical counterpart. In the algorithm, we work with the best rank $K-1$ approximation of $\Sigma_n$, which we denote by $(\Sigma_n)^K$. Let $\Qdist$ denote the distribution for $\mu_i$s. Recall that $X_i | \mu_i \sim F(\cdot|\mu_i)$. Let $\mathbb{P}$ be the induced distribution corresponding to $\tilde{X}_i$, which is the projection of $X_i$ on the affine space of dimension $K-1$ spanned by the top $K-1$ eigenvectors of $\Sigma$. We also use $\mathbb{P}_n$ to denote the empirical distribution of the data represented by random variables $X_i$.

Since $K$-means clustering is a subroutine of our algorithm, we expect at least some sort of condition requiring that the $K$-means clustering routine be well-behaved in some sense. To that end we need the following standard condition on the population $K$-means objective (cf. \citet{pollard1981strong}).
\begin{enumerate}
\item[(a.1)] Pollard's regularity criterion (PRC): The Hessian matrix of the function 
$c \mapsto  \mathbb{Q} \phi_{BSB^T}(\cdot,c)\text{ evaluated at }c^*$
for all optimizers $c^*$ of $\mathbb{Q} \phi_{BSB^T}(\cdot,c)$ is positive definite, with minimum eigenvalue $\lambda_0>0$.
\end{enumerate}
It turns out that this will be all we need for the following theorem in the noiseless setting, where we have $\Sigma=BSB^T= (\Sigma)^K$ has rank $K-1$ and so, $\mathbb{P} = \mathbb{Q}$ and $\tilde{X_i} \overset{\mathscr{L}}{=} X_i$. 
\begin{theorem}
\label{thm:noiselessConsistency} Consider the noiseless setting, i.e., $F(\cdot\mid\mu) = \delta_\mu$.
Suppose that $\Bscr= \conv(\beta_1,\ldots,\beta_K)$ is the true topic simplex, while
$(\beta_{1n},\dots,\beta_{Kn})$ are the vertex estimates obtained by VLAD algorithm. 
Moreover, assume the error due to Monte Carlo estimates of the extension parameter is negligible.
Provided that condition (a.1) holds, 
\begin{eqnarray}
\min_{\pi}\|(\beta_{\pi_{(1)}n},\dots,\beta_{\pi_{(K)}n}) - (\beta_1,\dots,\beta_K)\| = O_{\mathbb{P}}(n^{-1/2}) \nonumber,
\end{eqnarray}
where the minimization is taken over all permutations $\pi$ of $\{1,\ldots,K\}$.
\end{theorem}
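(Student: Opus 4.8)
The plan is to cast vertex estimation as a weighted ($K$-means) M-estimation problem and to obtain the $\sqrt n$ rate from a standard argmin-theorem argument, with the twist that the weighting metric is itself estimated. First I would reduce the vertex error to the two centroid errors. By Lemma~\ref{CVT_general_simplex} the population vertices satisfy $\beta_k = c_0 + \gamma(c_k - c_0)$, while the algorithm outputs $\hat\beta_k = \hat c_0 + \gamma(\hat c_k - \hat c_0)$ with $\gamma$ treated as exact, so linearity gives
\[
\hat\beta_k - \beta_k = (1-\gamma)(\hat c_0 - c_0) + \gamma(\hat c_k - c_k).
\]
It therefore suffices to show $\|\hat c_0 - c_0\| = O_{\mathbb P}(n^{-1/2})$ and $\|\hat c_k - c_k\| = O_{\mathbb P}(n^{-1/2})$ for the CVT centroids. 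The first bound is immediate: $\hat c_0$ is a sample mean of bounded vectors (the data lie in the compact simplex $\Bscr$), so the ordinary CLT applies.

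The bulk of the work is the centroid bound, which I would organize around two perturbations. First, \emph{metric estimation}: because $\Sigma = BSB^T$ has exact rank $K-1$ in the noiseless case, there is a strictly positive eigengap between $\lambda_{K-1}(\Sigma)>0$ and $\lambda_K(\Sigma)=0$. The sample covariance of bounded data satisfies $\|\Sigma_n - \Sigma\|_{\mathrm{op}} = O_{\mathbb P}(n^{-1/2})$, so a Davis--Kahan/Weyl argument yields $\|(\Sigma_n)^K - \Sigma\|_{\mathrm{op}} = O_{\mathbb P}(n^{-1/2})$, and the pseudo-inverse used as the clustering metric converges at the same rate. I would analyze the ambient-space output $\hat c_k = W\Lambda\eta_k + \hat c_0$ directly, noting that it is invariant to the non-unique choice of orthonormal basis $W$ for the estimated subspace, thereby sidestepping rotational ambiguity in the SVD. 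Second, \emph{sampling}: with the metric fixed at $\Sigma$, the empirical centroids minimize $\mathbb Q_n\phi_{\Sigma}(\cdot,c)$, whose population counterpart $\mathbb Q\phi_{\Sigma}(\cdot,c)$ is minimized at the CVT centroids $c^*$ by Lemma~\ref{lemma:simplex_transform}.

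I would then invoke the classical theory of $K$-means M-estimation. Pollard's strong consistency (\citet{pollard1981strong}) gives $\hat c_k \to c_k$, and the rate follows from a quadratic expansion of the population objective around $c^*$: condition (a.1) supplies a Hessian with minimum eigenvalue $\lambda_0>0$, yielding the local curvature bound $\mathbb Q\phi_\Sigma(\cdot,c) - \mathbb Q\phi_\Sigma(\cdot,c^*) \gtrsim \lambda_0\|c-c^*\|^2$. It then remains to show the empirical ``score'' at $c^*$ is $O_{\mathbb P}(n^{-1/2})$; this combines (a) the centered empirical process $(\mathbb Q_n - \mathbb Q)\phi_\Sigma(\cdot,c)$, which is $O_{\mathbb P}(n^{-1/2})$ by stochastic equicontinuity of the $K$-means criterion (the class $\{\phi_\Sigma(\cdot,c)\}$ is Lipschitz in $c$ on the compact parameter region and forms a Euclidean/Donsker class, cf.\ \citet{pollard1981strong}), and (b) the contribution of the metric perturbation, controlled by the smoothness of $A\mapsto\phi_A(\cdot,c)$ near $\Sigma$ composed with the $O_{\mathbb P}(n^{-1/2})$ bound of the first perturbation. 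Feeding an $O_{\mathbb P}(n^{-1/2})$ score and the $\lambda_0$-curvature into the standard argmin inequality gives $\|\hat c_k - c_k\| = O_{\mathbb P}(n^{-1/2})$, and matching estimates to truth by the optimal permutation $\pi$ completes the proof.

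The hard part will be item (b): rigorously decoupling the estimated metric from the sampling randomness. Pollard's CLT is stated for a fixed objective, whereas here the weighting matrix $(\Sigma_n)^K$ is random and correlated with $\mathbb Q_n$. The key technical step I anticipate is showing that replacing $(\Sigma_n)^K$ by $\Sigma$ inside the objective perturbs both the minimizer location and the argmin inequality only at order $O_{\mathbb P}(n^{-1/2})$ — i.e.\ that the map from metric to weighted-$K$-means solution is Lipschitz near $\Sigma$ — so that the two $\sqrt n$-order effects simply add rather than interacting at a slower rate. This Lipschitz stability relies precisely on the positive eigengap, which makes the rank-$(K-1)$ projection and pseudo-inverse differentiable at $\Sigma$, together with the PRC curvature $\lambda_0>0$, which makes the argmin well-separated.
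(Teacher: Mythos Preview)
Your proposal is correct and shares the paper's essential strategy—reduce to a $K$-means rate statement via Pollard and then push the rate through the linear extension map—but the route differs in a useful way. The paper works in \emph{intrinsic} coordinates: invoking Lemma~\ref{lemma:simplex_transform}, it identifies the weighted problem on $\Bscr$ with ordinary Euclidean $K$-means on $\Delta^{K-1}$, cites \citet{pollard1982CLT} directly for the $O_{\mathbb P}(n^{-1/2})$ rate of the empirical centroids on the standard simplex, and maps back via $v\mapsto BPv + c_0$ together with the invariance of the extension ratio $\|Pe_k\|_2/\|Pv_k\|_2 = \|B(e_k-\tfrac1K\ones_K)\|_2/\|B(v_k-\tfrac1K\ones_K)\|_2$ under $B$. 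In effect, the change of coordinates is the paper's device for dissolving the random-metric issue you flag as ``the hard part'': once on $\Delta^{K-1}$ the metric is fixed Euclidean and Pollard applies verbatim, with no decoupling needed. You instead stay in the ambient space and treat the estimated weighting $(\Sigma_n)^K$ as an explicit $O_{\mathbb P}(n^{-1/2})$ perturbation of $\Sigma$ via the eigengap and Davis--Kahan, then run a curvature-plus-score argmin argument. This is more work, but it tracks what the algorithm literally computes (the SVD uses $\Sigma_n$, not $BB^T$) and is exactly the template the paper itself adopts for the noisy Theorem~\ref{thm:consistency}, where no clean intrinsic reduction is available.
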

Note that the constant corresponding to the rate $O_{\mathbb{P}}(n^{-1/2})$ is dependent on the  Hessian matrix of the function $c \mapsto  \mathbb{P} \phi_{\Sigma}(\cdot,c)$. The proof for Theorem~\ref{thm:noiselessConsistency} is in Supplement~\ref{Proof of Theorem 1}.

In general, $F(\cdot\mid\mu)$ is not degenerate. Due to the presence of "noise" in the $K-1$ SVD subspace, the estimates of the CVT centroids may be inconsistent, which entails inconsistency of the VLAD's estimate for $\Bscr$. The following theorem provides an error bound in the general setting.
We need a strengthening of Pollard's Regularity Criterion. Let $(\Sigma)^K$ denote the best $K-1$ rank approximation of $\Sigma$ with respect to the Frobenius norm. Assume: 
\begin{enumerate}
\item[(a.2)]  The Hessian matrix of the function 
$c \mapsto  \mathbb{P} \phi_{(\Sigma)^K}(\cdot,c)$ evaluated at $c^*$
for all optimizers $c^*$ of $\mathbb{P} \phi_{(\Sigma)^K}(\cdot,c)$ is uniformly positive definite with minimum  eigenvalue  $ \lambda_0>0$, for all $(\Sigma)^K$ such that $(\Sigma-BSB^T) \leq \tilde{\epsilon} I_D $, for some $\tilde{\epsilon}>0$.
\end{enumerate}
The noise level is formalized by the following conditions:
\begin{enumerate}
\item [(b)] There is $\epsilon_0>0$ such that $ \epsilon_0 I_D- Cov(X|\theta)$ is positive semi-definite uniformly over $\theta \in \Delta^{K-1}$.
\item [(c)]There exists $M_0$ such that for all $M> M_0$, $\int_{\mathcal{B}(\sqrt{M},c_0)^c}\|x-c_0\|^2_2 g(x) \mathrm{d}x \leq \frac{k_1}{M}$,
for some universal constant $k_1$, where $\mathcal{B}(\sqrt{M},c_0)$ is a ball of radius $\sqrt{M}$ around population centroid $c_0$ and $g(\cdot)$ is the density of $\mathbb{P}$ with respect to the Lebesgue measure on the $K-1$ dimensional space which contains the top $K-1$ eigenvectors of $BSB^T + \epsilon_0 I_D$.
\end{enumerate}
\begin{theorem}
\label{thm:consistency}
Suppose that $\Bscr= \conv(\beta_1,\ldots,\beta_K)$ is simplex corresponding to extreme points of the DSN. Let $(\beta_{1n},\dots,\beta_{Kn})$ be the corresponding extreme point estimates obtained by the VLAD algorithm. 
Assume the error in the Monte Carlo estimates of the extension parameter is negligible.
Provided that (a.2), (b) and (c) hold, then
\begin{eqnarray}
\label{eqn:thm consistency}
& \min_{\pi} \|(\beta_{\pi_{(1)}n},\dots,\beta_{\pi_{(K)}n}) - (\beta_1,\dots,\beta_K)\|_2 = \nonumber\\ & O\left(\sqrt{\epsilon_0^{1/3}/\lambda_0}\right) + O_{\mathbb{P}}(n^{-1/2}),
\end{eqnarray}
where $\pi$ ranges over permutations of $\{1,\dots,K\}$.
\end{theorem}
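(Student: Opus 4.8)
The plan is to split the total error into a deterministic \emph{bias}, arising because in the noisy setting the top $(K-1)$ eigenspace and the scaled metric that VLAD actually uses no longer coincide with those of the noiseless problem, and a \emph{stochastic} fluctuation governed by the sample size. By the law of total covariance, $\Sigma = BSB^T + N$ with $0 \preceq N \preceq \epsilon_0 I_D$ by (b). Lemma~\ref{lemma:simplex_transform} shows that VLAD's population target is the minimizer $c^\circ$ of $c \mapsto \bbP\phi_{(\Sigma)^K}(\cdot,c)$, whereas the centroids that lie exactly on the rays to $\beta_1,\dots,\beta_K$ (Lemma~\ref{CVT_general_simplex}) are the minimizer $c^*$ of the noiseless objective $c \mapsto \Qdist\phi_{BSB^T}(\cdot,c)$. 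The two quantities that enter the bound would be isolated by the triangle inequality
\begin{equation}
\|\hat{c} - c^*\| \;\le\; \|\hat{c} - c^\circ\| \;+\; \|c^\circ - c^*\|,
\end{equation}
bounding the stochastic term $\|\hat{c}-c^\circ\|$ and the bias term $\|c^\circ-c^*\|$ separately and then pushing the result through the affine extension map.

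For the bias I would first control the perturbation of the geometry. Since $\|\Sigma - BSB^T\| \le \epsilon_0$ and $BSB^T$ has rank $K-1$ with a positive eigengap, Weyl's inequality and the Davis--Kahan theorem bound the eigenvalue perturbation and the rotation between the top $(K-1)$ eigenspaces of $\Sigma$ and $BSB^T$; together these control $\|((\Sigma)^K)^\dagger - (BSB^T)^\dagger\|$ and the displacement of the projected point $\tilde X$ from the ideal projection of $\mu$. Because the domain is unbounded, I would truncate at a ball $\mathcal{B}(\sqrt{M},c_0)$: inside the ball the pointwise perturbation of the integrand $\phi$ grows with the radius and with $\epsilon_0$, while the mass outside contributes at most $O(1/M)$ to each objective by condition (c). Balancing the in-ball term against the tail by optimizing over $M$ yields a uniform bound on the objective gap $\sup_c|\bbP\phi_{(\Sigma)^K}(\cdot,c) - \Qdist\phi_{BSB^T}(\cdot,c)| = O(\epsilon_0^{1/3})$ over the compact set of admissible centroids. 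Condition (a.2) then supplies quadratic growth of the population objective with curvature $\lambda_0$, uniformly over all admissible $(\Sigma)^K$; the standard comparison $\tfrac{\lambda_0}{2}\|c^\circ - c^*\|^2 = O(\epsilon_0^{1/3})$ gives $\|c^\circ - c^*\| = O(\sqrt{\epsilon_0^{1/3}/\lambda_0})$.

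For the stochastic term, conditional on the perturbed population problem (metric $(\Sigma)^K$, measure $\bbP$), the VLAD centroids are the empirical minimizer formed from $(\Sigma_n)^K$ and $\bbP_n$. Reusing the Pollard-type machinery behind Theorem~\ref{thm:noiselessConsistency}---a uniform law of large numbers with integrability supplied by (c), nondegeneracy from (a.2), and the $n^{-1/2}$-consistency of $\hat{c}_0$, $\hat{\Sigma}$, and hence of $(\Sigma_n)^K$---gives $\|\hat{c} - c^\circ\| = O_{\bbP}(n^{-1/2})$. Since the vertex estimates are $\hat\beta_k = \hat{c}_0 + \gamma(\hat{c}_k - \hat{c}_0)$, an affine map with Lipschitz constant $O(\gamma)$, composing the two bounds and minimizing over label permutations $\pi$ yields the claimed rate $O(\sqrt{\epsilon_0^{1/3}/\lambda_0}) + O_{\bbP}(n^{-1/2})$.

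The main obstacle is the bias term, and specifically making the truncation bookkeeping precise enough to extract the exponent. One must simultaneously track how the scaled-metric perturbation (through the pseudo-inverse, sensitive to the small signal eigenvalues of $BSB^T$) and the subspace rotation (through Davis--Kahan, sensitive to the eigengap) scale with the truncation radius, verify that the curvature bound in (a.2) holds uniformly over the entire family of perturbed metrics $(\Sigma)^K$ that arise---not merely at $BSB^T$---and check that balancing the in-ball growth against the $O(1/M)$ tail produces exactly the $\epsilon_0^{1/3}$ rate rather than a worse power. Any degeneration of the eigengap or of the signal eigenvalues would inflate the hidden constants, so the argument implicitly treats the geometry of $\Bscr$ and the Dirichlet parameter $\alpha$ as fixed.
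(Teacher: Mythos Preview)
Your proposal is essentially the paper's proof: the same bias/stochastic decomposition, the same truncation-and-balance argument yielding the $\epsilon_0^{1/3}$ exponent for the uniform objective gap, the same strong-convexity step via (a.2) to convert that gap into $O(\sqrt{\epsilon_0^{1/3}/\lambda_0})$ for the centroids, Pollard's CLT for the $O_{\bbP}(n^{-1/2})$ stochastic term, and the affine/continuous-mapping push-through to the vertices.

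The one substantive technical difference is how you handle the in-ball measure-change term. The paper splits the bias as $|\Qdist\phi_{BSB^T}-\Qdist\phi_{\Sigma^K}| + |\bbP\phi_{\Sigma^K}-\Qdist\phi_{\Sigma^K}|$: the first summand is a pure metric perturbation bounded directly by $\lambda_{\max}((BSB^T)^\dagger-(\Sigma^K)^\dagger)$, while for the second the paper observes that $\phi_{\Sigma^K}(\cdot,c)$ is Lipschitz on the ball with constant $O(\sqrt{M}/\lambda_{\min}(BSB^T))$ and invokes Kantorovich--Rubinstein duality, bounding $W_1(\bbP,\Qdist)\le W_2(\bbP,\Qdist)\le\sqrt{(K-1)\epsilon_0}$ via the natural coupling $(X,\mu)$. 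Balancing $\sqrt{M}\cdot\sqrt{\epsilon_0}$ against the $O(1/M)$ tail is what produces the $1/3$ exponent. Your Davis--Kahan route controls the subspace rotation but does not by itself absorb the noise $X-\mu$ inside the ball; you would still need a coupling or moment argument on top of it. The paper's Wasserstein step handles both the projection displacement and the noise in one stroke, which is why it is a bit cleaner and makes the balancing calculation transparent.
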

The constant corresponding to the rate $ O_{\mathbb{P}}(n^{-1/2})$ in the theorem depends on the Hessian matrix of the function $c \mapsto \mathbb{P} \phi_{\Sigma}(\cdot,c)$; constant corresponding to the $O\left(\sqrt{ \epsilon_0^{1/3}/\lambda_0}\right)$ depends on the minimum and maximum eigenvalues of the matrix $BSB^T$. Proof is in Supplement \ref{sec:supp:theorem2_proof}.

The preceding results control the error incurred by the VLAD algorithm when the concentration parameter $\alpha$ is known. When $\alpha$ is unknown, our proposed solution in Section \ref{sec:alpha} performs well in both simulated and real-data experiments.
We do not know in theory whether the concentration parameter $\alpha$ is identifiable, we shall present empirical results in Supplement \ref{sec:supp:identif} which suggest identifiability. Assuming a condition which guarantees model identifiability, we can establish that the estimate obtained by the VLAD algorithm via \eqref{eq:estimateAlpha} is in fact consistent.
\begin{theorem}
\label{thm:alpha_consistency}
Assume that function $\varphi(\tilde{\alpha})=\frac{\gamma(\tilde{\alpha})^2}{K(K\tilde{\alpha} + 1)}$ is monotonically increasing in $\tilde{\alpha}$, where $\gamma(\tilde{\alpha})$ is the extension parameter corresponding to $\tilde{\alpha}$. Let $\alpha_0 \in \mathscr{C}$ be the true concentration parameter for some compact set $\mathscr{C}$. Let $\hat{\alpha}_n = \argmin_{\alpha \in \mathscr{C}}\| \hat{B}(\gamma(\alpha))S(\alpha)\hat{B}(\gamma(\alpha))^T - \tilde{\Sigma}_n\|$, where $\tilde{\Sigma}_n$ is a consistent estimator of $BS(\alpha)B^T$. Then,
\begin{eqnarray}
\|\hat{\alpha}_n -\alpha_0\| \overset{\mathbb{P}}{\longrightarrow} 0.
\end{eqnarray}
\end{theorem}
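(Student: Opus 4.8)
The plan is to treat $\hat\alpha_n$ as a minimum-distance (GMM-type) M-estimator and invoke the standard consistency theorem for such estimators (e.g.\ van der Vaart, \emph{Asymptotic Statistics}, Thm.~5.7): if the empirical criterion converges uniformly in probability to a population criterion whose minimizer $\alpha_0$ is \emph{well-separated}, then the empirical minimizer is consistent. Writing $M_n(\alpha)=\|\hat B(\gamma(\alpha))S(\alpha)\hat B(\gamma(\alpha))^T-\tilde\Sigma_n\|$ for the empirical criterion, the two things to establish are (i) a population limit $M(\alpha)$ with a well-separated minimum at $\alpha_0$, and (ii) $\sup_{\alpha\in\mathscr{C}}|M_n(\alpha)-M(\alpha)|\xrightarrow{\bbP}0$.

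For the population criterion I would first exploit the \emph{affine} dependence of the VLAD output on the extension parameter. Since the CVT centroids lie on the rays from the simplex centroid $c_0=\frac1KB\ones$ to the vertices (Lemma~\ref{CVT_general_simplex}), the population VLAD map with parameter $\gamma$ satisfies $B(\gamma)=\frac1KB\ones\ones^T+\frac{\gamma}{\gamma_0}BP$, where $\gamma_0=\gamma(\alpha_0)$ recovers the truth $B(\gamma_0)=B$. Combining this with $S(\alpha)=\frac{1}{K(K\alpha+1)}P$ and the identities $P\ones=0$, $P^2=P$, the cross terms vanish and the model covariance collapses to
\[
B(\gamma(\alpha))\,S(\alpha)\,B(\gamma(\alpha))^T=\frac{\gamma(\alpha)^2}{K(K\alpha+1)}\cdot\frac{BPB^T}{\gamma_0^2}=\varphi(\alpha)\,G_0,\qquad G_0:=\frac{BPB^T}{\gamma_0^2}.
\]
Because $\tilde\Sigma_n$ is consistent for $BS(\alpha_0)B^T=\varphi(\alpha_0)G_0$, the natural population criterion is $M(\alpha)=|\varphi(\alpha)-\varphi(\alpha_0)|\,\|G_0\|$, which vanishes exactly at $\alpha_0$. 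Here the monotonicity hypothesis does its work: it makes $\varphi$ injective, so $M(\alpha)=0$ iff $\alpha=\alpha_0$; with continuity of $\varphi$ and compactness of $\mathscr{C}$, $M$ attains a strictly positive minimum on $\{\alpha\in\mathscr{C}:|\alpha-\alpha_0|\ge\epsilon\}$ for every $\epsilon>0$, giving the well-separated minimum.

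For uniform convergence I would use the remark that VLAD is run \emph{once}: the estimator $\hat B(\gamma(\alpha))$ is obtained from a single set of estimated CVT centroids $(\hat c_0,\hat C)$ by an explicit affine transformation whose coefficients are the deterministic, continuous functions $\gamma(\alpha)$ and $S(\alpha)$ on the compact set $\mathscr{C}$. These are therefore bounded and uniformly continuous, so the map $(\hat c_0,\hat C)\mapsto\big[\alpha\mapsto\hat B(\gamma(\alpha))S(\alpha)\hat B(\gamma(\alpha))^T\big]$ is continuous into $C(\mathscr{C})$ with the sup-norm. Since the CVT-centroid estimates are consistent by Theorems~\ref{thm:noiselessConsistency}--\ref{thm:consistency}, the continuous mapping theorem yields $\sup_{\alpha\in\mathscr{C}}\|\hat B(\gamma(\alpha))S(\alpha)\hat B(\gamma(\alpha))^T-B(\gamma(\alpha))S(\alpha)B(\gamma(\alpha))^T\|\xrightarrow{\bbP}0$; adding $\|\tilde\Sigma_n-BS(\alpha_0)B^T\|\xrightarrow{\bbP}0$ via the triangle inequality gives $\sup_{\alpha}|M_n(\alpha)-M(\alpha)|\xrightarrow{\bbP}0$, and the M-estimator theorem finishes the argument.

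The main obstacle is the \emph{uniformity} in the convergence step: pointwise convergence $M_n(\alpha)\to M(\alpha)$ is immediate from consistency of $\hat B$ and $\tilde\Sigma_n$, but consistency of the argmin needs the sup over $\alpha\in\mathscr{C}$. The leverage that makes this tractable is exactly the observation that no re-estimation is needed across $\alpha$: all $\alpha$-dependence is funneled through the deterministic, continuous (hence uniformly bounded and equicontinuous) maps $\gamma(\cdot)$ and $S(\cdot)$, so the single randomness source $(\hat c_0,\hat C)$ transfers its consistency uniformly. A secondary delicate point is continuity of $\gamma(\cdot)$, and hence of $\varphi$, in $\alpha$, which underlies both the identification and the uniform-convergence arguments; this rests on continuity of the Dirichlet density and uniqueness of the CVT, consistent with the standing assumption that the Monte Carlo error in estimating $\gamma$ is negligible.
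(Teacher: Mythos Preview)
Your proposal is correct and follows essentially the same route as the paper: both arguments reduce the model covariance to the scalar form $B(\gamma(\alpha))S(\alpha)B(\gamma(\alpha))^T=\varphi(\alpha)G_0$, use monotonicity of $\varphi$ for identification, and combine consistency of $\hat B(\gamma(\alpha))$ with that of $\tilde\Sigma_n$ via the triangle inequality. Your treatment is in fact more explicit than the paper's, which compresses the uniform-convergence/argmin step into a one-line ``subsequence argument'' and only derives the scalar reduction later in the identifiability discussion; your framing via the standard M-estimator consistency theorem and your explicit handling of uniformity over $\mathscr{C}$ (leveraging that the $\alpha$-dependence enters only through the deterministic continuous maps $\gamma(\cdot),S(\cdot)$) make the same argument cleaner and more self-contained.
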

See Supplement \ref{sec:supp:theorem3_proof} for the proof.
\section{Experiments}
\label{sec:experiments}
The goal of our experimental studies is to demonstrate the applicability and efficiency of our algorithm for a number of choices of the DSN probability kernel: Gaussian, Poisson and Multinomial (i.e. LDA). 
We summarize all competing estimation procedures in our comparative study and their corresponding underlying assumptions in Table \ref{table:baselines}. 

We remark that Gibbs sampler \cite{griffiths2004finding}, Stan implementation of No U-Turn HMC \cite{hoffman2014no, carpenter2017stan} and Stochastic Variational Inference (SVI) \cite{hoffman2013stochastic} may be augmented with techniques such as empirical Bayes to estimate hyperparameter $\alpha$, although it may slow down convergence. We instead allow these baselines to use true values of $\alpha$ in all simulated experiments to their advantage; when latent simplex is of general geometry (i.e. non-equilateral), GDM \cite{yurochkin2016geometric} requires $\alpha \rightarrow 0$ to perform well, which is alike separability. Not all baselines are suitable for all three probability kernels, i.e. Gibbs sampler and SVI rely on (local) conjugacy and are only applicable in the LDA scenario; RecoverKL \cite{arora2013practical} is an algorithm that relies on a separability condition (i.e. anchor words) designed for topic models.

In simulated experiments we will consider both VLAD with estimated concentration parameter $\alpha$ following our results in Section \ref{sec:alpha} and VLAD trained with the knowledge of true data generating $\alpha$ (VLAD-$\alpha$). For real data analysis, we estimate the concentration parameter by \eqref{eq:estimateAlpha} and apply VLAD to a text corpus and 
stock market data set.

\begin{table}[t]
\caption{Baselines and required conditions}
\label{table:baselines}
\vskip -0.1in
\begin{center}
\begin{small}
\begin{tabular}{lccc}
\toprule
Method                                      & Conjugacy & True $\alpha$ & Separability \\
\midrule
VLAD (this work)                            & $\times$  & $\times$      & $\times$ \\
VLAD-$\alpha$ (this work)                   & $\times$  & $\surd$       & $\times$ \\
Gibbs \citeyearpar{griffiths2004finding}   & $\surd$   & $\surd^\star$ & $\times$ \\
Stan-HMC \citeyearpar{carpenter2017stan}   & $\times$  & $\surd^\star$ & $\times$ \\
SVI \citeyearpar{hoffman2013stochastic}    & $\surd$   & $\surd^\star$ & $\times$ \\
GDM \citeyearpar{yurochkin2016geometric}   & $\times$  & $\times$      & $\surd^\star$ \\
RecoverKL \citeyearpar{arora2013practical} & $\times$  & $\times$      & $\surd$ \\
SPA \citeyearpar{Gillis2012Fast}           & $\times$  & $\times$      & $\surd$ \\
MVES \citeyearpar{chan2009convex}          & $\times$  & $\times$      & $\surd$ \\
Xray \citeyearpar{kumar2013fast}           & $\times$  & $\times$      & $\surd$ \\
\bottomrule
\end{tabular}
\end{small}
\end{center}
\vskip -0.1in
\end{table}

\subsection{Comparative Simulation Studies}

\begin{figure*}[t]
\vskip -0.1in
\begin{subfigure}{.33\textwidth}
  \centering
  \includegraphics[width=\linewidth]{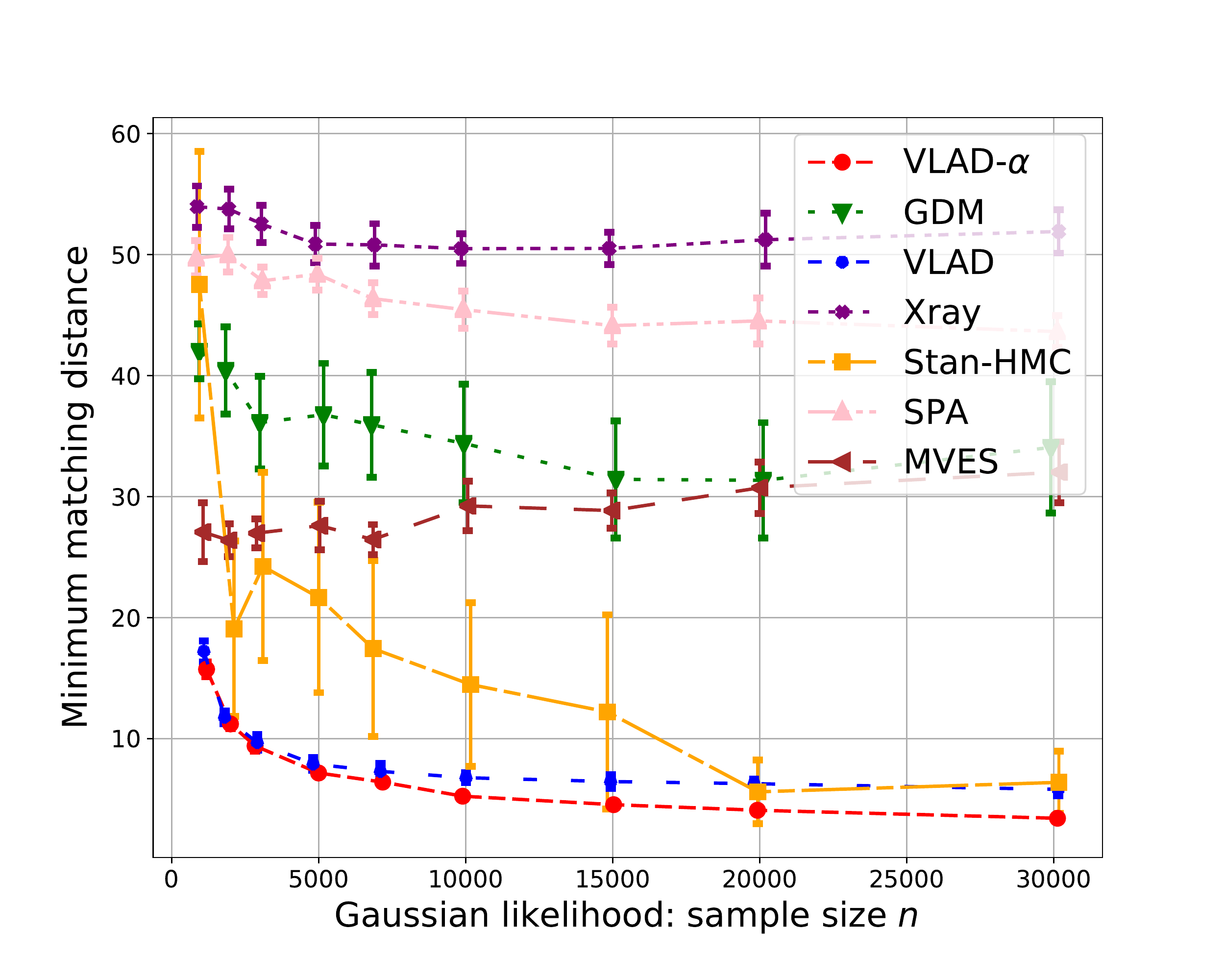}
  \vskip -0.1in
  \caption{Gaussian data}
  \label{fig:sample_size_gaus}
\end{subfigure}
\begin{subfigure}{.33\textwidth}
  \centering
  \includegraphics[width=\linewidth]{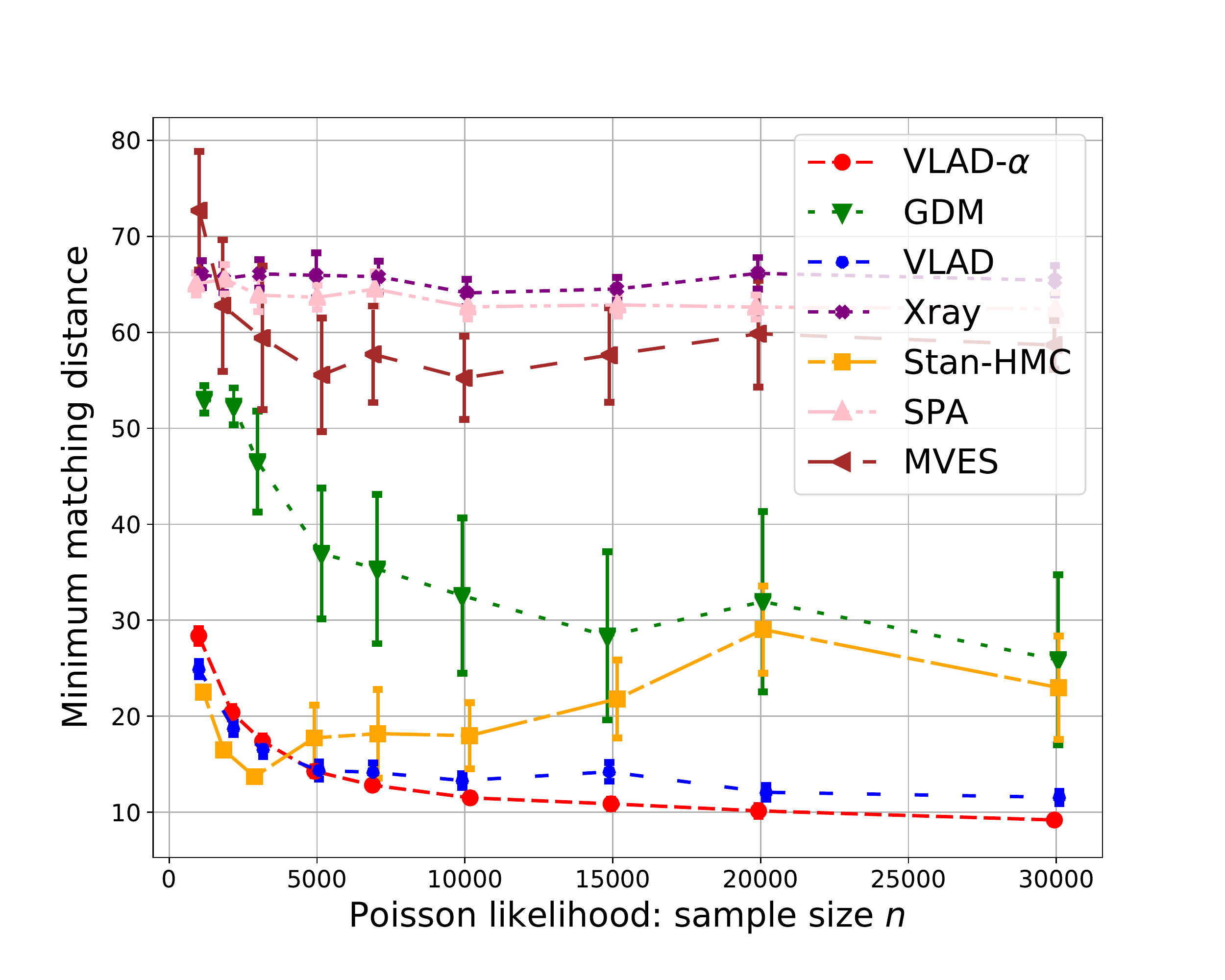}
  \vskip -0.1in
  \caption{Poisson data}
  \label{fig:sample_size_pois}
\end{subfigure}
\begin{subfigure}{.33\textwidth}
  \centering
  \includegraphics[width=\linewidth]{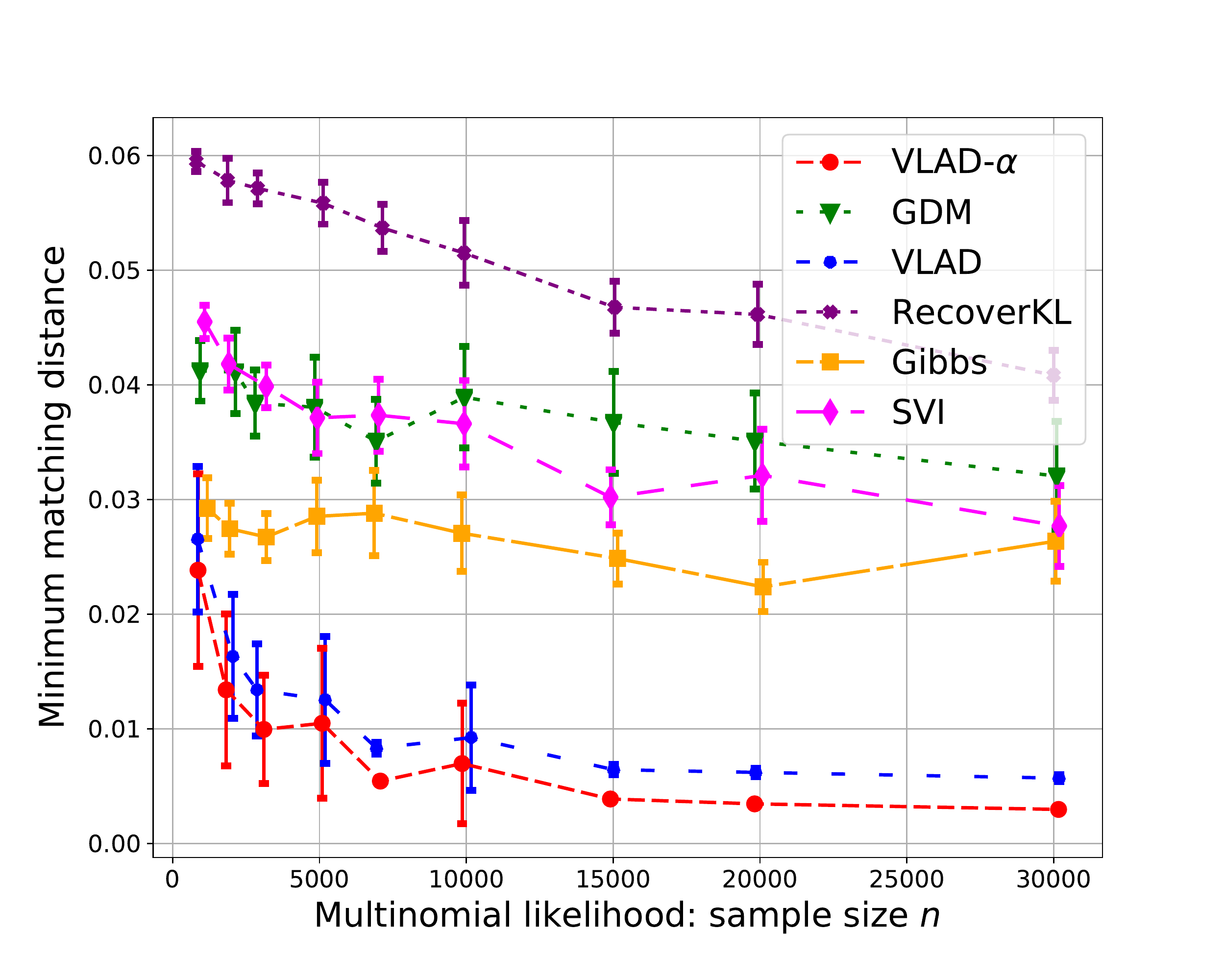}
  \vskip -0.1in
  \caption{Categorical data}
  \label{fig:sample_size_lda}
\end{subfigure}
\vskip -0.1in
\caption{Minimum matching distance for increasing $n$}
\label{fig:sample_size}
\end{figure*}

\begin{figure*}[t]
\vskip -0.1in
\begin{subfigure}{.33\textwidth}
  \centering
  \includegraphics[width=\linewidth]{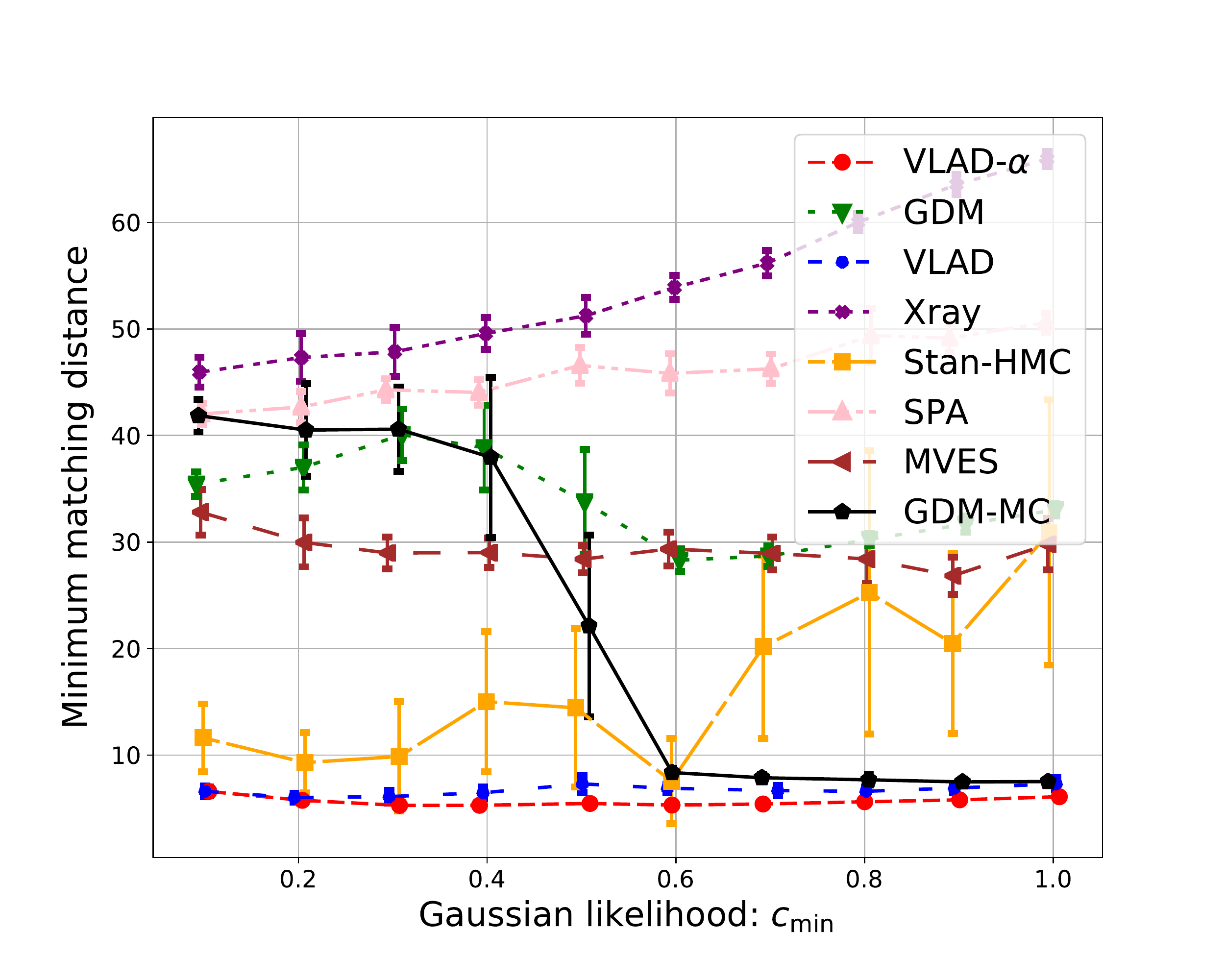}
  \vskip -0.1in
  \caption{Gaussian data}
  \label{fig:scale_gaus}
\end{subfigure}
\begin{subfigure}{.33\textwidth}
  \centering
  \includegraphics[width=\linewidth]{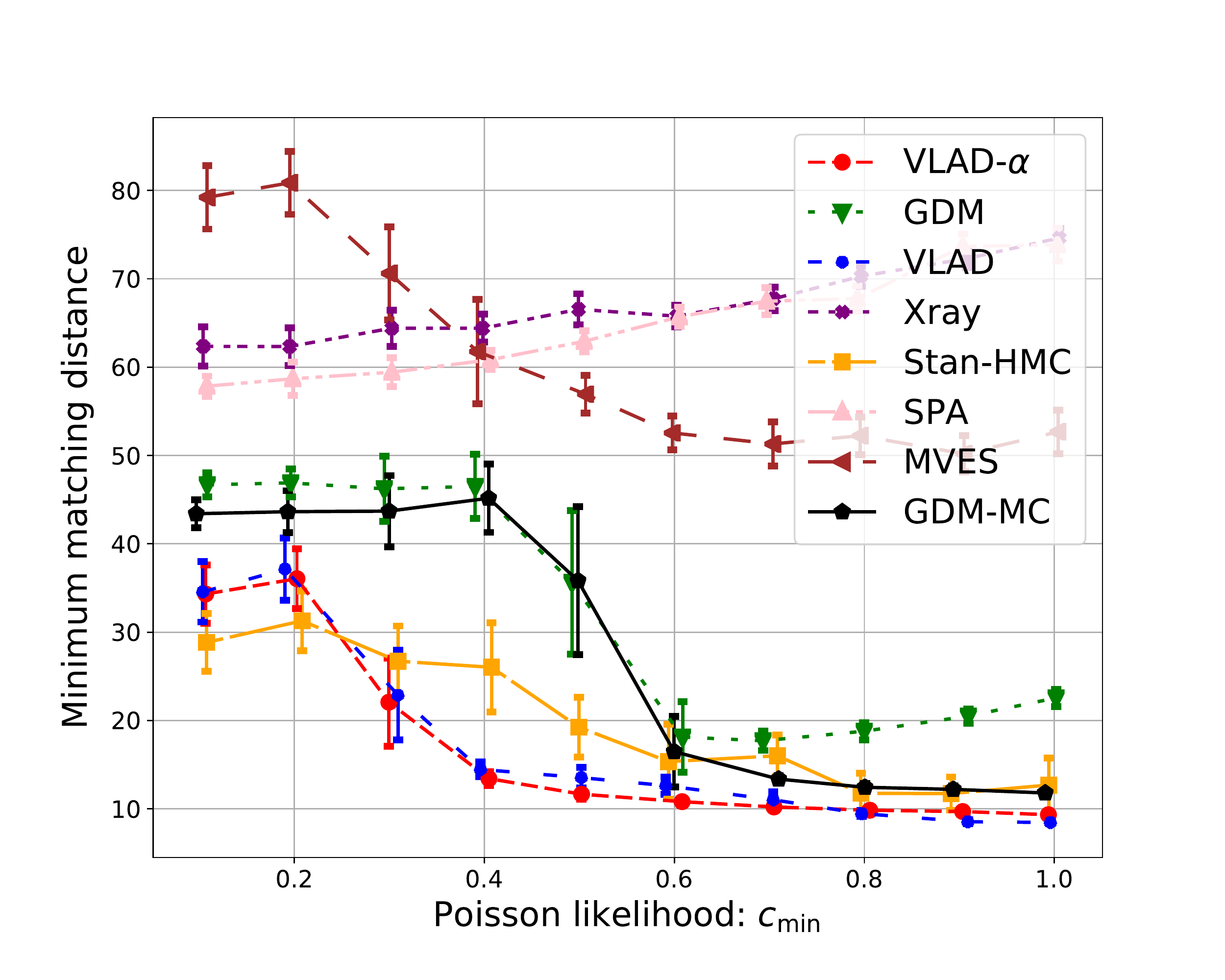}
  \vskip -0.1in
  \caption{Poisson data}
  \label{fig:scale_pois}
\end{subfigure}
\begin{subfigure}{.33\textwidth}
  \centering
  \includegraphics[width=\linewidth]{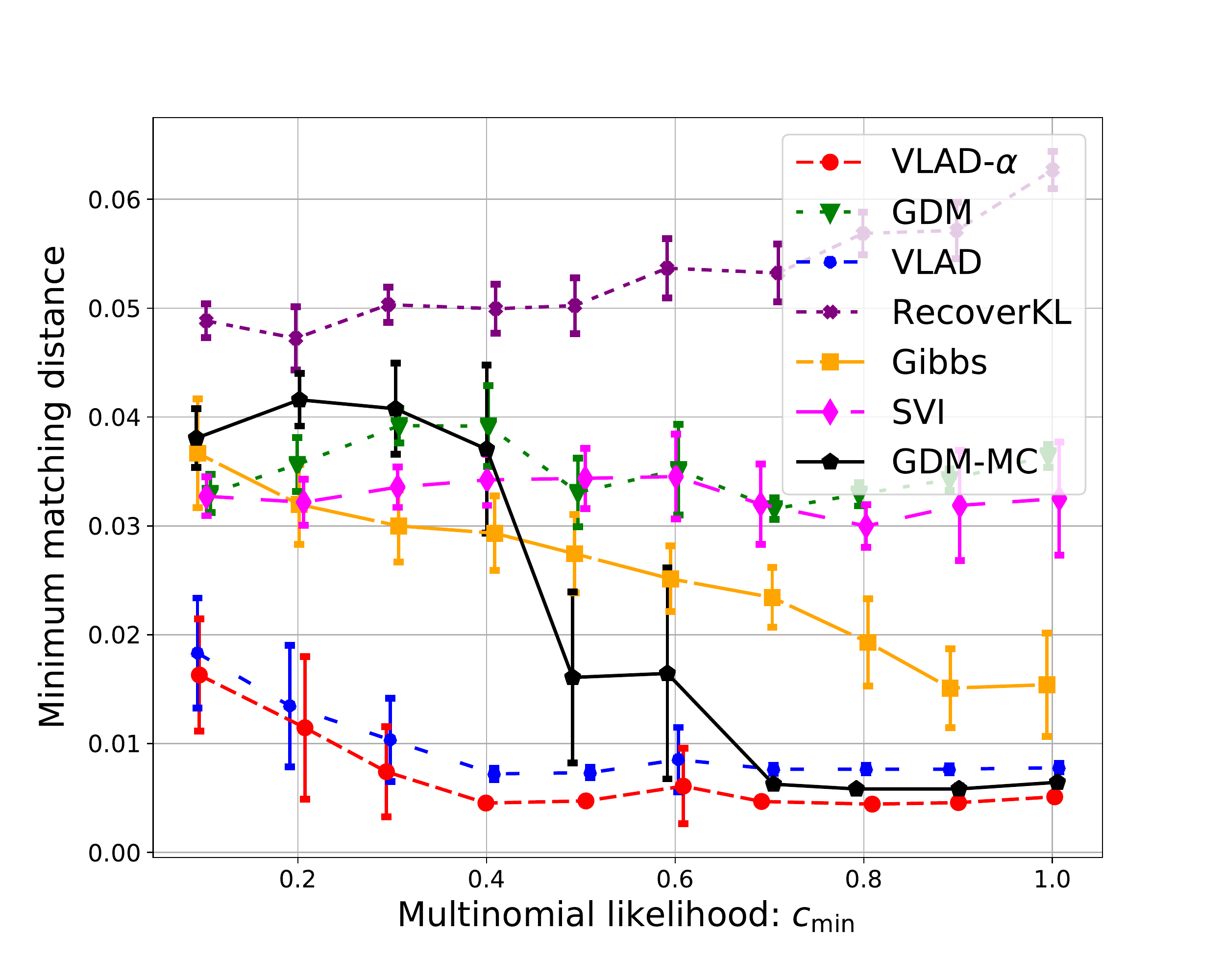}
  \vskip -0.1in
  \caption{Categorical data}
  \label{fig:scale_lda}
\end{subfigure}
\vskip -0.1in
\caption{Minimum matching distance for varying DSN geometry.}
\label{fig:scale}
\end{figure*}

\begin{figure*}[t]
\vskip -0.1in
\begin{subfigure}{.33\textwidth}
  \centering
  \includegraphics[width=\linewidth]{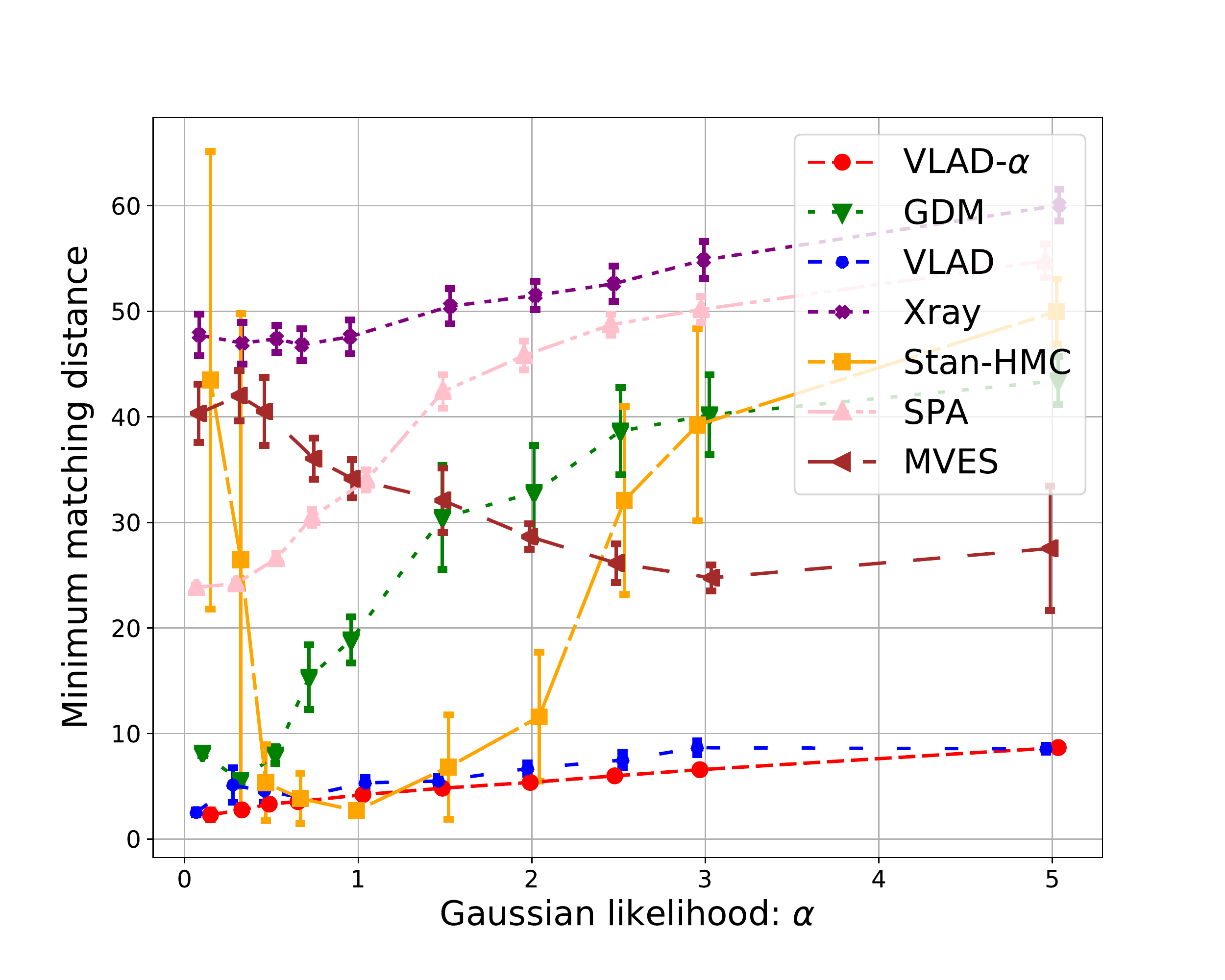}
  \vskip -0.1in
  \caption{Gaussian data}
  \label{fig:alpha_gaus}
\end{subfigure}
\begin{subfigure}{.33\textwidth}
  \centering
  \includegraphics[width=\linewidth]{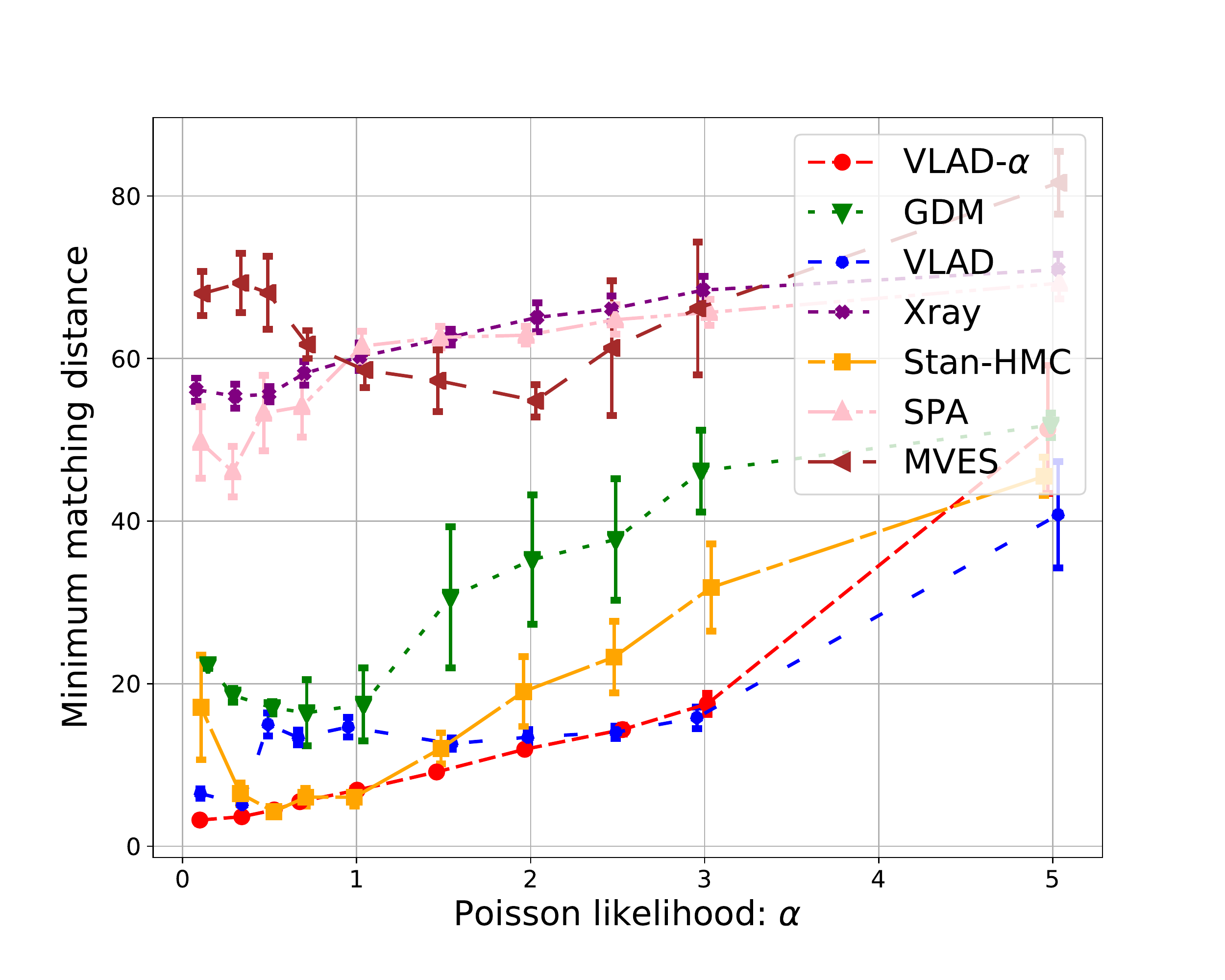}
  \vskip -0.1in
  \caption{Poisson data}
  \label{fig:alpha_pois}
\end{subfigure}
\begin{subfigure}{.33\textwidth}
  \centering
  \includegraphics[width=\linewidth]{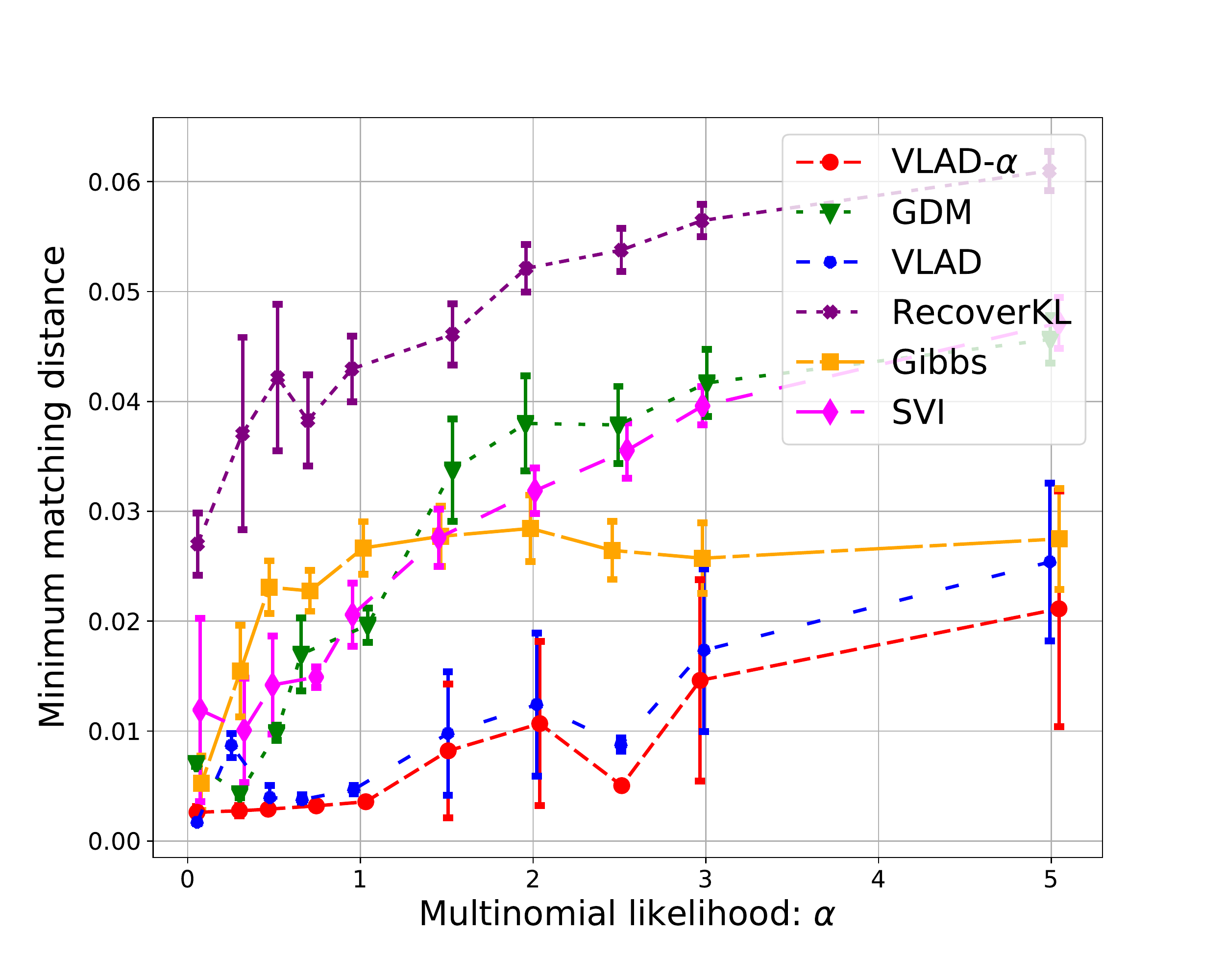}
  \vskip -0.1in
  \caption{Categorical data}
  \label{fig:alpha_lda}
\end{subfigure}
\vskip -0.1in
\caption{Minimum matching distance for increasing $\alpha$.}
\label{fig:alpha}
\end{figure*}

\begin{table*}[ht]
\caption{\hspace{1cm} NYT topic modeling (categorical data) \hspace{1.cm} || Stock data factor analysis (continuous data)}
\vskip -0.1in
\centering
\begin{tabular}{lrrr || crr}
\toprule
{} &  Perplexity & Coherence & Time & Frobenius norm & Volume & Time \\
\midrule
VLAD  &  1767 &  0.86 &      \textbf{6min} & 0.300 & \textbf{0.14} & \textbf{1s}   \\
GDM 		& 1777 &  0.88 &    30min     & 0.294 & 1499 & \textbf{1s} \\
Gibbs || HMC & \textbf{1520} & 0.80 & 5.3hours & 0.299 & 1.95 & 10min \\
RecoverKL || MVES & 2365 &  0.70 &  17min &  0.287 & $5.39 \times 10^9$ & 3min \\
SVI || SPA & 1669 & 0.81 &  40min & 0.392 & $3.31 \times 10^7$ & \textbf{1s} \\
\bottomrule
\end{tabular}
\label{table:data}
\end{table*}

\paragraph{Convergence behavior}
We investigate the convergence of the estimates of the DSN extreme points for the three likelihood kernels under the increasing sample size. The hyperparameter settings are $D=500,K=10,\alpha=2$ (for LDA vocabulary size $D=2000$). To ensure non-trivial geometry of the DSN we rescale extreme points towards their mean by uniform random factors between 0.5 and 1. We use the Minimum Matching distance - a metric previously studied in the context of polytopes estimation \cite{nguyen2015posterior} to compare the quality of the fitted DSN model returned by a variety of inference algorithms. We defer additional details to Supplement \ref{sec:supp:experiment_details}.

In Fig. \ref{fig:sample_size} we see that VLAD and VLAD-$\alpha$ significantly outperform all baselines. Further, the estimation error reduces with increased sample size verifying statements of Theorems \ref{thm:consistency} and \ref{thm:alpha_consistency}. We note that Stan HMC may also achieve good performance, however it is very costly to fit (e.g., 40 HMC iterations for Poisson case and $n=30000$ took 14 hours compared to 7 seconds for VLAD), therefore we had to restrict number of iterations, which explains its wider error bars across experiments.

\textbf{Geometry of the DSN}
To study the role of geometry of the DSN we rescale extreme points towards their mean by uniform random factors $c_k \thicksim \text{Unif}(c_{\min},1)$ for $k=1,\ldots,K$ and vary $c_{\min}$ in Fig. \ref{fig:scale} (smaller values imply more severe skewness of the latent simplex). To isolate the effect of the geometry of the DSN, we compare to GDM combined with knowledge of true $\alpha$ and extension parameter estimation using Algorithm \ref{algo:extensionParams} (GDM-MC). If the underlying simplex is equilateral, GDM-MC will be equivalent to VLAD-$\alpha$.

In Fig. \ref{fig:scale} we see that VLAD and VLAD-$\alpha$ are robust to varying skewness of the DSN. On the contrary, GDM-MC is only accurate when the latent simplex becomes closer to equilateral. This experiment verifies geometric motivation of our work --- in practice we can not expect latent geometric structure to be necessarily equilateral and geometrically robust method such as VLAD is more reliable.


\textbf{Varying Dirichlet prior} To complete our simulation studies we verify $\alpha$ estimation procedure proposed in Section \ref{sec:alpha} and analyzed in Theorem \ref{thm:alpha_consistency}. It is also interesting to compare performance of other baselines for larger $\alpha$ --- scenario often overlooked in the literature.

In Fig. \ref{fig:alpha} (and in previous experiments) we see that performance gap between VLAD and VLAD-$\alpha$ is very small, supporting effectiveness of our $\alpha$ estimation procedure across probability kernels. Additionally, we see that higher values of $\alpha$ lead to degrading performance of all considered methods, however VLAD degrades more gracefully.

\subsection{Real Data Analysis}
\paragraph{Topic modeling}
We analyze a collection of news articles from the New York Times. After preprocessing, we have 5320 unique words and 100k training documents with 25k left out for perplexity evaluation. We also compare semantic coherence of the topics \cite{newman2010automatic}.

In Table \ref{table:data} (left) we present results for $K=80$ topics.
The Gibbs sampler has the best perplexity score, but it falls behind in topic coherence. VLAD estimated $\alpha=0.05$ and has approximately same perplexity and coherence as GDM, while being 5 times faster. VLAD identified contextually meaningful topics, as can be seen from good coherence score and by eye-balling the topics --- they cover a variety of concepts from fishing and cooking to the Enron scandal and cancer.
The top 20 words for each of the VLAD topics are provided along with the code.
\vspace{-1pc}

\paragraph{Stock market analysis}
We collect variations (closure minus opening price) for 3400 days and 55 companies. We train several algorithms on data from the first 3000 days and report the average distance between the data points from the last 400 days and fitted simplices (i.e., Frobenius norm). This metric alone might be misleading since stretching any simplex will always reduce the score, therefore we also report the volumes of corresponding simplices. Results are summarized in Table \ref{table:data} (right) --- our method (estimated $\alpha=0.05$) achieves comparable fit in terms of the Frobenius norm with a more compact simplex.
Among the factors identified by VLAD, we notice a growth component related to banks (e.g., Bank of America, Wells Fargo). Another factor suggests that the performance of fuel companies like Valero Energy and Chevron are inversely related to the performance of defense contractors (Boeing, Raytheon).
\vspace{-1pc}
\section{Summary and Discussion}
\label{sec:discussion}
The Dirichlet Simplex Nest model generalizes a number of popular models in machine learning applications, including LDA and several variants of non-negative matrix factorization (NMF). We also develop an algorithm that exploits the geometry of the DSN to perform fast and accurate inference. We demonstrate the superior statistical and computational properties of the algorithm on several real datasets and verify its accuracy through simulations.

One of the key distinctions between the DSN model and NMF models is we replace the separability assumption by a Dirichlet prior on the weights. The main benefit of this approach is it enables us to model data that does not contain archetypal points \cite{Cutler1994Archetypal}.
%
Among the limitations of our approach is the reliance on the Dirichlet distribution assumption in a crucial way, that the Dirichlet distribution is symmetric on the standard probability simplex $\Delta^{K-1}$. In theory, the algorithm breaks down when the Dirichlet distribution is asymmetric. Surprisingly, in simulations at least, we found that VLAD seems quite robust in recovering the correct direction of extreme points, even as most existing methods break down in such situations. These findings are reported in Supplement \ref{sec:supp:assymetric_alpha}.

%
%
\textbf{Acknowledgement} Support provided by NSF grants DMS-1830247, CAREER DMS-1351362, CNS-1409303 and a Margaret and Herman Sokol Faculty Award are gratefully acknowledged.

\clearpage
\bibliographystyle{icml2019}
\bibliography{MY_ref,yuekai}

\clearpage 

\appendix

\section{Proofs of Theorems}
In this section we present the proofs of main theorems in Section~\ref{sec:theory}. We will first reintroduce some notations for the reader's convenience.

{\bf Notation}
Let $\lambda_{\max}(A)$ and $\lambda_{min}(A)$ denote the largest and smallest non-zero singular values of the matrix $A$. We use $f(\cdot)$ to denote the density of $\mathbb{Q}$ with respect to Lebesgue measure on the $K-1$ dimensional subspace containing the simplex $\mathscr{B}$. Let $g(\cdot)$ be the density of $\mathbb{P}$ with respect to the Lebesgue measure on the $K-1$ dimensional space containing the eigenvectors of $\Sigma_{tot}^K$, where $\Sigma_{tot}^K $ is best $K-1$-rank approximation matrix of $\Sigma_{tot}:=BSB^T + \epsilon_0 I_D $ and $\epsilon_0 I_D $ is a uniform upper bound on $\Cov[x_i\mid\theta]$. Let $\Sigma$ be the population covariance matrix with $\Sigma^K$ as the best $K-1$ rank approximation. Note that
\begin{align}
    \Sigma & = \Cov(X_i)=\mathbb{E}[\Cov(X_i|\mu_i)]  + \Cov(\mathbb{E}[X_i|\mu_i]) \nonumber \\ & \leq \epsilon_0 I_D  + BSB^T.
    \end{align}
\subsection{Proof of Theorem 1}
\label{Proof of Theorem 1}

The following is a standard assumption to ensure the consistency of the $k$-means procedure embedded in our algorithm:
\begin{enumerate}
\item[(a.1)] Pollard's regularity criterion (PRC): The Hessian matrix of the function 
$c \mapsto  \mathbb{Q} \phi_{BSB^T}(\cdot,c)\text{ evaluated at }c^*$
for all optimizer $c^*$ of $\mathbb{Q} \phi_{BSB^T}(\cdot,c)$ is positive definite, with minimum eigenvalue $\lambda_0>0$.
\end{enumerate}

\begin{proof}
First, we note that under the assumption of the noiseless setting, by following along the lines of the proof of  Lemma~\ref{lemma:simplex_transform}, it can be seen that if $c^*=(c^*_1,\dots, c^*_K)$ optimize Eq.~\eqref{eq:clusteringUncenteredData} and $v_k$'s are such that $(v_1,\dots, v_K)$ form the empirical CVT centroids of $\Delta^{K-1}$, then $c^*_i=BPv_i + c_0$, where $c_0$ is the population centroid. 

Next, the convergence of the empirical CVT centroids to the  corresponding population CVT centroids occurs at rate $O_{\mathbb{P}}(\frac{1}{\sqrt{n}})$ rate following \citet{pollard1982CLT}. The consistency of the extreme points of the Dirichlet Simplex Nest follows by the continuous mapping theorem since  
\begin{eqnarray}
\frac{\|Pe_k\|_2}{\|Pv_k \|_2} =\frac{\|e_k - \frac1K\ones_K\|_2}{\|v_k - \frac1K\ones_K\|_2} = \frac{\|B(e_k - \frac1K\ones_K)\|_2}{\|B(v_k - \frac1K\ones_K)\|_2},
\end{eqnarray}
where $e_1,\dots,e_K$ are the canonical basis vectors on $\mathbb{R}^K$ denoting the vertices of $\Delta^{K-1}$. 

Finally, the knowledge of $\alpha$ enables us to compute $\frac{\|e_k - \frac1K\ones_K\|_2}{\|v_k - \frac1K\ones_K\|_2}$. This concludes the proof.
\end{proof}

\subsection{Proof of Theorem 2}
\label{sec:supp:theorem2_proof}
It is considerably more challenging to establish the error bounds for our algorithm in the general setting where the observations are noisy. First, let us define the following:
 \begin{align*}
\mathscr{C}_{\mathbb{P}_n}= &\{ c^*:c^*=\argmin_{c \in \mathbb{R}^{kD}} \mathbb{P}_n \phi_{(\Sigma_n)^K}(\cdot,c)\\
&=\argmin_{c \in \mathbb{R}^{kD}} \frac{1}{n}\sum_{i=1}^n \phi_{(\Sigma_n)^K}(\tilde{X}_i,c)\},\\
\mathscr{C}_{\mathbb{Q}}=&\{c^*:c^*=\argmin_{c \in \mathbb{R}^{kD}}  \mathbb{Q} \phi_{BSB^T}(\cdot,c)\}.
\end{align*}

Recall the following assumptions from the main text: 
\begin{enumerate}
\item[(a.2)]  The Hessian matrix of the function 
$c \mapsto  \mathbb{P} \phi_{(\Sigma)^K}(\cdot,c)$  evaluated at $c^*$
for all optimizer $c^*$ of $\mathbb{P} \phi_{(\Sigma)^K}(\cdot,c)$ is uniformly positive definite with minimum  eigenvalue bounded below from  some $\lambda_0>0$, for all $(\Sigma)^K$ such that $(\Sigma-BSB^T) \leq \tilde{\epsilon} I_D $, for some $\tilde{\epsilon}>0$.

 \item [(b)] There exists $\epsilon_0 > 0$ such that $ \epsilon_0 I_D-\conv(X|\theta)$ is positive semi-definite uniformly over $\theta \in \Delta^{K-1}$.
 \item [(c)]There exists $M_0$ such that for all $M> M_0$,
\begin{eqnarray}
  \int_{\mathcal{B}(\sqrt{M},c_0)^c}\|x-c_0\|^2_2 g(x) \mathrm{d}x \leq \frac{k_1}{M}, \nonumber
\end{eqnarray}
for some universal constant $k_1$, where $\mathcal{B}(\sqrt{M},c_0)$ is a ball of radius $\sqrt{M}$ around the population centroid, $c_0$.
\end{enumerate}
The assumptions (b) and (c) are very general assumptions and satisfied by a vast array of noise distributions, especially those with subexponential tails. In particular, the noise distributions considered in this work all satisfy these assumptions.

\begin{proof}
The proof proceeds by the following steps:

First, in \textbf{Step 1}, we show that it is enough to restrict attention to the population estimates instead of empirical estimates. Next, in \textbf{ Step 2}, we show that the k-means objectives for distributions of $\mu_i$'s and $x_i$'s are close. \textbf{Step 3} shows that the objective values at the respective minimizers are also close to each other for the distributions considered in \textbf{Step 2}. Finaly, \textbf{ Step 4} uses the strong convexity condition of (a.2) to bound the distance between respective k-means centers, and \textbf{Step 5} translates this bound to the estimation of the simplex vertices.

In that regard,
\paragraph{Step 1:} Following \citet{pollard1982CLT}, the empirical estimates of CVT centroids optimizing $\mathbb{P} \phi_{\Sigma^K}(\cdot,c)$ converges to the corresponding population estimate at rate $O_{\mathbb{P}_n}(n^{-1/2})$. Thus it is enough to restrict attention to the population estimates.
 \paragraph{Step 2:} We will show that for all $\epsilon_0$ sufficiently small,
\begin{equation}
|\mathbb{Q}\phi_{BSB^T}(\cdot,c) - \mathbb{P} \phi_{\Sigma^K}(\cdot,c)| = O(\epsilon_0^{1/3})  \nonumber
\end{equation}
uniformly over $c \in \mathscr{B}^K$.

Since $\mathbb{Q}$ denotes the distribution corresponding to $\mu_i$'s, this distribution places its entire mass inside the simplex, therefore all minimizers of the function  $\mathbb{Q} \phi_{BSB^T}(\cdot,c)$ lie inside $\mathscr{B}^K$. We can hence restrict our attention to $c \in \mathscr{B}^K$.  By assumption (b), we have $BSB^T  \leq \Sigma^K $. Thus, it is enough to establish a bound for  $|\mathbb{Q}\phi_{BSB^T}(\cdot,c) - \mathbb{P} \phi_{\Sigma^K}(\cdot,c)| \ \ \forall \  c \ \in \mathscr{B}^K$.
\begin{equation}
\label{objective_difference}
\begin{split}
|\mathbb{Q}\phi_{BSB^T}(\cdot,c) - \mathbb{P}\phi_{\Sigma^K}(\cdot,c)| \leq|\mathbb{P}\phi_{\Sigma^K}(\cdot,c) - \mathbb{Q} \phi_{\Sigma^K}(\cdot,c)| \\
 \hspace{80pt} + |\mathbb{Q}\phi_{BSB^T}(\cdot,c) - \mathbb{Q}\phi_{\Sigma^K}(\cdot,c)|.
\end{split}
\end{equation}

 \paragraph{Step 2.1:} Now, to bound  the second term on the right hand side of Eq. \eqref{objective_difference} we use, 
\begin{alignat*}{2}
 &|\mathbb{Q}\phi_{BSB^T}(\cdot,c) - \mathbb{Q} \phi_{\Sigma^K}(\cdot,c)| \\
 &\quad \leq  \int |\phi_{BSB^T}(x,c)-\phi_{\Sigma^K}(x,c)|f(x) \text{d}x \\
 &\quad \leq \lambda_{\max}([BSB^T]^{\dagger}-[\Sigma^K]^{\dagger})  \\
 &\quad \leq  \lambda_{\max}([BSB^T]^{\dagger}-[(BSB^T + \epsilon_0I_{D}^K]^{\dagger}) \\
 &\quad \leq \frac{\epsilon_0}{\lambda_{\min}(BSB^T) \lambda_{\min}(BSB^T + \epsilon_0I_{D}^K)},
\end{alignat*}
where $B^{\dagger}$ denotes the pseudo-inverse of $B$, and $I_D^K$ is the matrix with top $K-1$ diagonal elements as $1$, the rest zeros.

 \paragraph{Step 2.2:} Turning to the first term on right hand side of Eq. \eqref{objective_difference}, 
we note that $\|\beta_i -\beta_j\|^2 \leq \frac{K-1}{K} \lambda_{\max}(BSB^T)$. Therefore a compact ball of radius $ a\lambda_{\max}(BSB^T)$ around the centroid $c_0$  of the simplex $\mathscr{B}$ for all sufficiently large constants $a > \frac{K-1}{K}$ contains the simplex completely. Consider a ball $\mathcal{B}(\sqrt{M},c_0)$ of radius $\sqrt{M}$, with $M=a \lambda_{\max}(BSB^T)$ around the centroid $c_0$, the scalar $a$ to be chosen later. For any $M>0$,
\begin{align}
\label{boundedness-lipschitz}
|\mathbb{P}\phi_{\Sigma^K}(\cdot,c) - \mathbb{Q} \phi_{\Sigma^K}(\cdot,c)| \leq \biggr |\int_{\mathcal{B}(\sqrt{M},c_0)^c} \phi_{\Sigma^K}(x,c) g(x)\text{d}x\biggr | \nonumber \\ 
+ \biggr |\int_{\mathcal{B}(\sqrt{M},c_0)} \phi_{\Sigma^K}(x,c)  [ g(x)- f(x)]\text{d}x\biggr |.
\end{align}
 \paragraph{Step 2.2.1:} For the first term on the right hand side of Eq. \eqref{boundedness-lipschitz}, we see that,
\begin{equation}
\label{boundedness}
\begin{split}
&\int_{\mathcal{B}(\sqrt{M},c_0)^c} \phi_{\Sigma^K}(x,(c_1,\dots,c_K)) g(x)\text{d}x\\
&\leq \min_{i \in \{1,\dots, K\}} \int_{\mathcal{B}(\sqrt{M},c_0)^c} \|x-c_i\|^2_{\Sigma^K} g(x) \text{d}x \\
 &\leq \max 2\|c_i-c_0\|^2_2 \mathbb{P}(X \in \mathcal{B}(\sqrt{M},c_0)^c ) \\
&\hspace{50pt} + \frac{2}{\lambda_{\min}(BSB^T)}\int_{\mathcal{B}(\sqrt{M},c_0)^c} \|x-c_0\|^2_2 g(x) \text{d}x .
\end{split}
\end{equation}
The first inequality follows from Fatou's lemma, while the second follows from the fact that $\|a +b\|^2_2 \leq 2(\|a\|^2_2 + \|b\|_2^2)$. 

Suppose that the noise distribution is subexponential for all latent locations $\theta \in \mathscr{B}$. Combining this with the Chebyshev inequality and condition (c), Eq. \eqref{boundedness} can be re-written as:
\begin{equation}
\begin{split}
&\int_{\mathcal{B}(\sqrt{M},c_0)^c} \phi_{\Sigma^K}(x,(c_1,\dots,c_K)) g(x)\text{d}x \\
& \leq \tilde{C}\lambda_{\max}(BSB^T) \frac{Var(X)}{M} + \frac{2k_1 }{\lambda_{\min}(BSB^T)M}\\
& \leq \tilde{C} \frac{2(K-1)\lambda_{\max}^2(BSB^T)}{M} + \frac{2k_1 }{\lambda_{\min}(BSB^T)M}
\end{split}
\end{equation}
 for some universal constant $k_1$. 
\paragraph{Step 2.2.2:} For the second term on the right hand side on Eq. \eqref{boundedness-lipschitz}, we use the following result. 
\paragraph{Claim 1.} For $ M= a\lambda_{\max}(BSB^T)$, when centroids $c_i \in \mathscr{B} \ \ \forall \ \ i$, $\phi_{\Sigma^K}(x,c=(c_1,\dots,c_K))$ as a function of $x$ is Lipschitz on $\mathcal{B}(\sqrt{M},c_0)$, with Lipschitz constant $\frac{4 \sqrt{M}}{\lambda_{\min}(BSB^T)}$.

Now using the above result, we can easily extend $\phi_{\Sigma^K}(x,c=(c_1,\dots,c_K))$ to a Lipschitz function on the entire domain.
For the particular choice of $a$, 
\begin{equation}
\begin{split}
&\biggr|\int_{B(\sqrt{M},c_0)} \phi_{(\Sigma)^K}(x,c)  ( g(x) - f(x)) \text{d}x\biggr | \\ 
&\quad\leq \frac{2 \sqrt{a\lambda_{\max}(BSB^T)}}{\lambda_{\min}(BSB^T)} \sup_{\|l\|_{Lip} \leq 1} \biggr |\int l(x)  (g(x) - f(x))\text{d}x \biggr |  \\
&\quad\leq \frac{2 \sqrt{a\lambda_{\max}(BSB^T)}}{\lambda_{\min}(BSB^T)} W_1(g,f) \\
&\quad\leq \frac{2 \sqrt{a\lambda_{\max}(BSB^T)}}{\lambda_{\min}(BSB^T)} \sqrt{(K-1) \epsilon_0}.
\end{split}
\end{equation}
In the above, $\|l\|_{Lip}$ denotes the Lipschitz constant of the function $l(\cdot)$.
The second inequality in the above equation follows from Kantorovich-Rubinstein duality while for the last inequality, we use the definition of the Wasserstein distance and take $(X,\mu)$ as the coupling with densities $X \sim g $  and $\mu \sim f$ marginally (cf.~\citet{Villani2008Optimal}). Then, for any upper bound $M_1$ on the variance of $\|X-\mu\|_2$ , $W_2(g,f) \leq M_1$, and we use the fact that $\sqrt{(K-1)\epsilon_0}$ forms such an upper bound.

Now, for the noise level $\epsilon_0>0$ sufficiently small,   there exists $\epsilon>0$, which is dependent on $\epsilon_0$, such that the open interval $\biggr (C' \frac{(K-1)\lambda_{\max}^2(BSB^T)}{\epsilon}, \frac{\lambda_{\min}^2(BSB^T)}{{\lambda_{\max}(BSB^T) (K-1) \epsilon_0}}\epsilon ^2/16 \biggr )$ is non-empty for any fixed constant $C'$. Whenever $a$ is chosen in this range, $|\mathbb{Q}\phi_{BSB^T}(\cdot,c) - \mathbb{P} \phi_{\Sigma^K}(\cdot,c)| \leq \epsilon$. Note that we can choose $\epsilon =O(\epsilon_0^{1/3})$ and $a=O(\epsilon_0^{-1/3})$ to satisfy the above condition.


\paragraph{Step 3:} In this step, we show that objective function values for k-means corresponding to that of the population distributions of $x_i$'s and $\mu_i$'s are close. Notice that the bounds obtained in Step 2 are uniform over $c \in \mathscr{B}$. For ease of writing, we denote $R_q(c)=\mathbb{Q}\phi_{BSB^T}(\cdot,c)$ and $R_p(c)=\mathbb{P} \phi_{\Sigma^K}(\cdot,c)$. Also, let $\argmin R_p(c)=c_p$ and  $\argmin R_q(c)=c_q$.  Then, for $\epsilon_0$ sufficiently small, it follows from the discussion above that

\begingroup\makeatletter\def\f@size{9}\check@mathfonts
\begin{equation}
\begin{split}
&|R_q(c_p)-R_q(c_q)| \\
&=|R_q(c_p)-R_p(c_p) + R_p(c_q)-R_q(c_q) + R_p(c_p)- R_p(c_q) | \\
& \leq |R_q(c_p)-R_p(c_p) + R_p(c_q)-R_q(c_q) | = O(\epsilon_0^{1/3}).
\end{split}
\end{equation}
\endgroup

\paragraph{Step 4:}  In this step, we show that $\|\argmin_{c}\mathbb{P} \phi_{\Sigma^K}(\cdot,c)- \argmin_{c}\mathbb{Q}\phi_{BSB^T}(\cdot,c) \|_2 \rightarrow 0$  as  $\epsilon_0 \rightarrow 0$. The intuition behind this is that since the functions $\mathbb{Q}\phi_{BSB^T}(\cdot,c)$ and $R_p(c)=\mathbb{P} \phi_{\Sigma^K}(\cdot,c)$ are point-wise close, and their minimized values are also close to one another, therefore, the points of minima must also be close. 
By a standard strong convexity argument, employing condition (a.2), for $\epsilon_0$ sufficiently small, we get,
 \begingroup\makeatletter\def\f@size{7}\check@mathfonts
 \begin{eqnarray}
 \|\argmin_{c}\mathbb{P} \phi_{\Sigma^K}(\cdot,c)- \argmin_{c}\mathbb{Q}\phi_{BSB^T}(\cdot,c) \|_2 \nonumber \\
 = O\left(\sqrt{\epsilon_0^{1/3}/\lambda_0}\right). 
 \end{eqnarray}
 \endgroup

  \paragraph{Step 5 :} 
  Finally, the error bound for the simplex vertices follows from a continuous mapping theorem's argument in a similar manner to that of the proof for Theorem \ref{thm:noiselessConsistency}.
\end{proof}

\paragraph{Claim 1.} For $ M= a\lambda_{\max}(BSB^T)$, when centroids $c_i \in \mathscr{B} \ \ \forall \ \ i$, $\phi_{\Sigma^K}(x,c=(c_1,\dots,c_K))$ as a function of $x$ is Lipschitz on $\mathcal{B}(\sqrt{M},c_0)$, with Lipschitz constant $\frac{4 \sqrt{M}}{\lambda_{\min}(BSB^T)}$.

\begin{proof}[Proof of Claim 1.]
\begin{equation}
\begin{split}
& \frac{ |\phi_{\Sigma^K}(x,c={c_1,\dots,c_K})-\phi_{\Sigma^K}(y,c={c_1,\dots,c_K})|}{\|x-y\|} \\
& \leq \max_{i \in \{1,\dots,K\}}\frac{| \|x - c_i\|_{\Sigma^K} - \|y- c_i\|_{\Sigma^K} |}{\|x-y\|_2} \\
& \leq \sup \frac{2\|x-y\|}{\lambda_{\min}(BSB^T)} \leq \frac{4 \sqrt{M}}{\lambda_{\min}(BSB^T)}.
\end{split}
\end{equation}
\end{proof}

\subsection{Consistent estimation of concentration parameter }
\label{consistent estimator}
In this section we first provide several easy calculations required for the estimating equations for some commonly used noise distributions.

\begin{lemma}
Depending on the data generating distribution, the covariance matrix of the DSN model is given as follows.
\begin{enumerate}
    \item[(a)] Gaussian data: $\Sigma= BS(\alpha)B^{T} + \sigma^2I_d$, provided that $x_i|\mu_i \thicksim \mathcal{N}(\mu_i,\sigma^2I_D)$.
    \item[(b)] Poisson data: $\Sigma= BS(\alpha)B^{T} + \text{Diag}(\sum_i B_i/K)$, provided that $x_{ij}|\mu_i \overset{ind}{\thicksim} Poi(\mu_{ij})$, where $B_i$ denotes the $i^{th}$ column of $B$ and $\text{Diag}(a)$ is a diagonal matrix with the $i^{th}$ diagonal element denoting the $i^{th}$ element of the vector $a$. Here, $\mu_i=(\mu_{i1},\ldots,\mu_{iD})$. 
    \item[(c)] Multinomial data: $\Sigma= (1-\frac{1}{N})BS(\alpha)B^{T} + \frac{1}{N}\text{Diag}(\sum_i B_i/K) - \frac{1}{N}(\sum_i B_i/K)(\sum_i B_i/K)^{T}$, provided that $x_i|\mu_i \thicksim \text{Multinomial}(N,\mu_{i1},\mu_{iD})$. Here, $\mu_i=(\mu_{i1},\ldots,\mu_{iD})$ is a probability vector. ($N$ resembles the number of words per document in the LDA model).
\end{enumerate}
\end{lemma}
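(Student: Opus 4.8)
The plan is to compute the marginal covariance $\Sigma = \Cov(X_i)$ by the law of total covariance,
\[
\Sigma = \Ex[\Cov(X_i\mid\mu_i)] + \Cov(\Ex[X_i\mid\mu_i]),
\]
and to exploit the fact that the second term is identical across all three noise models. Indeed, by the model's mean assumption $\Ex[X_i\mid\theta_i]=\mu_i=B\theta_i$, so $\Cov(\Ex[X_i\mid\mu_i]) = \Cov(B\theta_i) = B\,\Cov(\theta_i)\,B^T = BS(\alpha)B^T$, using that $S(\alpha)$ is by definition the covariance of a symmetric $\text{Dir}_K(\alpha)$ vector. Thus the only model-specific effort is to evaluate the noise contribution $\Ex[\Cov(X_i\mid\mu_i)]$, and the whole lemma reduces to three elementary conditional-variance computations followed by an expectation over $\mu_i=B\theta_i$.

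Before treating the cases I would record two Dirichlet moments used throughout. By symmetry of $\text{Dir}_K(\alpha)$ we have $\Ex[\theta_i]=\tfrac1K\ones_K$, hence $\Ex[\mu_i]=\tfrac1K B\ones_K = \sum_i B_i/K$; and, combining the definition of $S(\alpha)$ with the first moment, $\Ex[\mu_i\mu_i^T]=\Cov(\mu_i)+\Ex[\mu_i]\Ex[\mu_i]^T = BS(\alpha)B^T + (\sum_i B_i/K)(\sum_i B_i/K)^T$. The two easy cases then follow immediately. For (a) Gaussian, $\Cov(X_i\mid\mu_i)=\sigma^2 I_D$ is constant in $\mu_i$, so $\Ex[\Cov(X_i\mid\mu_i)]=\sigma^2 I_D$ and $\Sigma=BS(\alpha)B^T+\sigma^2 I_D$. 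For (b) Poisson, coordinatewise independence gives the diagonal conditional covariance $\Cov(X_i\mid\mu_i)=\text{Diag}(\mu_i)$, whose expectation is $\text{Diag}(\Ex[\mu_i])=\text{Diag}(\sum_i B_i/K)$, yielding $\Sigma=BS(\alpha)B^T+\text{Diag}(\sum_i B_i/K)$.

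The multinomial case (c) is the only one requiring care, and it is where I expect the lone subtlety to lie. The mean constraint $\Ex[X_i\mid\theta_i]=\mu_i$ forces the normalized interpretation $X_i = Y_i/N$ with raw counts $Y_i\sim\text{Multinomial}(N,\mu_i)$; this normalization is exactly what produces the $1/N$ factors in the stated formula. The standard multinomial covariance $\Cov(Y_i\mid\mu_i)=N(\text{Diag}(\mu_i)-\mu_i\mu_i^T)$ then scales as $\Cov(X_i\mid\mu_i)=\tfrac1N(\text{Diag}(\mu_i)-\mu_i\mu_i^T)$. Taking expectations and substituting the two Dirichlet moments gives
\[
\Ex[\Cov(X_i\mid\mu_i)] = \tfrac1N\Big(\text{Diag}(\sum_i B_i/K) - BS(\alpha)B^T - (\sum_i B_i/K)(\sum_i B_i/K)^T\Big),
\]
and adding the common term $BS(\alpha)B^T$ collapses to the claimed $(1-\tfrac1N)BS(\alpha)B^T + \tfrac1N\text{Diag}(\sum_i B_i/K) - \tfrac1N(\sum_i B_i/K)(\sum_i B_i/K)^T$. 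There is no genuine obstacle beyond bookkeeping: the two things to get right are the normalization $X_i=Y_i/N$ and the cross-term $\Ex[\mu_i\mu_i^T]$, whose $BS(\alpha)B^T$ piece is precisely what combines with the common term to produce the $(1-1/N)$ coefficient.
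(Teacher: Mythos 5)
Your proposal is correct and follows essentially the same route as the paper's proof: the law of total covariance, the observation that $\Cov(\Ex[X_i\mid\mu_i]) = BS(\alpha)B^T$ is common to all three kernels, and a per-model computation of $\Ex[\Cov(X_i\mid\mu_i)]$. If anything, your multinomial case is written more carefully than the paper's---you make the normalization $X_i = Y_i/N$ explicit and correctly track $\Ex[\mu_i\mu_i^T] = BS(\alpha)B^T + (\sum_i B_i/K)(\sum_i B_i/K)^T$, whereas the paper's sketch writes $\Cov(\mu_i\mu_i^T)$ where it means $\Ex[\mu_i\mu_i^T]$ and drops the rank-one mean term from its displayed intermediate expression (it reappears only in the final lemma statement).
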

\begin{proof}
 We compute $Cov(x_i)$ for each of the models. Note that $Cov(X_i)=\mathbb{E}(Cov(x_i|\mu_i)) + Cov(\mathbb{E}(x_i|\mu_i))$ from the tower property of conditional covariance, and $Cov(\mathbb{E}(x_i|\mu_i))= BS(\alpha)B^{T}$ for all the models. Therefore we just need the computation for $\mathbb{E}(Cov(x_i|\mu_i))$ for each of the models.

For the Gaussian model,  $\mathbb{E}(Cov(x_i|\mu_i)) =\sigma^2 I_D$.
   
 For the Poisson model, $\mathbb{E}(Cov(x_i|\mu_i)) = \mathbb{E}(\mu_i)=B\mathbb{E}(\theta_i)=\text{Diag}(\sum_i B_i/K)$, where the second equality follows as $\mu_i=B\theta_i$ by the model, and the last equality follows because $\theta_i \thicksim \text{Dir}(\alpha)$.
 
 For the multinomial model, $\mathbb{E}(Cov(x_i|\mu_i)) = \frac{1}{N}\mathbb{E}(\text{Diag}(\mu_i))- \frac{1}{N}Cov(\mu_i\mu_i^{T})=\frac{1}{N} (\text{Diag}(\sum_i B_i/K)-B S(\alpha)B^{T})$ from which the result follows.

\end{proof}

Equation~\ref{eqn:thm consistency}, for estimating $\alpha$ uses the data covariance matrix, $\hat{\Sigma}_n$. While this gives the correct estimating equation in the noiseless scenario, but for the noisy version we need to use $\tilde{\Sigma}_n$ instead where $\tilde{\Sigma}_n$ is a consistent estimator for $BS(\alpha)B^{T}$. The estimator estimator for different noise distributions can be obtained via the above lemma.

\subsubsection{Proof of Theorem 3}
\label{sec:supp:theorem3_proof}
The proof of consistency of the proposed estimate for the Dirichlet concentration parameter is given as follows.

\begin{proof}
 Notice that $\|\tilde{\Sigma}_n-BS(\alpha_0)B^T\| = o_P(1)$. Also, $ \|\hat{B}(\gamma(\alpha)) -B(\gamma(\alpha))\|=O_P(n^{-1/2})$ for all $\alpha \in \mathscr{C}$. Therefore 
 $ \| \hat{B}(\gamma(\alpha))S(\alpha)\hat{B}(\gamma(\alpha))^T - {B}(\gamma(\alpha))S(\alpha){B}(\gamma(\alpha))^T \| = O_P(n^{-1})$ for all $\alpha \in \mathscr{C}$. By monotonicity of the function $\varphi$, $BS(\alpha_0)B^T-{B}(\gamma(\alpha))S(\alpha){B}(\gamma(\alpha))^T$ as a function of $\alpha$ is injective for all $\alpha \in \mathscr{C}$. Therefore, $ \| \hat{B}(\gamma(\alpha_0))S(\alpha_0)\hat{B}(\gamma(\alpha_0))^T - \tilde{\Sigma}_n\| =o_P(1)$, by triangle inequality. The statement of the theorem then follows by employing a subsequence argument.
\end{proof}


\subsubsection{Identifiability of the concentration parameter}
\label{sec:supp:identif}
In the statement of Theorem 3, we require a condition which amounts to a identifiability condition of the parameter $\alpha$. 
In this section, we provide empirical evidence that the DSN model with unknown concentration parameter $\alpha$ is identifiable from second moments. 

As we shall see, the identifiability of $\alpha$ boils to the invertibility of a scalar function. Recall the covariance matrix of a $\Dir(\alpha)$ distribution is
\[
S(\alpha) = \frac{I_K-P_K}{K(K\alpha + 1)},
\]
where $P_K = \frac1K\ones_K\ones_K^T$ is the projector onto $\lspan\{\ones_K\}$. Let $B(\gamma) = \gamma(C - \mu) + \mu$ be the $\gamma$-extension of the (scaled) $K$-means centroids $C$ from the center of the DSN $\mu = \frac1KB\ones_K$. The question of the identifiability of the concentration parameter boils down to whether there are distinct $\alpha_1$ and $\alpha_2$ such that 
\begin{equation}
\begin{aligned}
&B(\gamma(\alpha_1))S(\alpha_1)B(\gamma(\alpha_1))^T \\
&\quad= B(\gamma(\alpha_2))S(\alpha_2)B(\gamma(\alpha_2))^T,
\end{aligned}
\label{eq:nonidentifiability}
\end{equation}
where $\gamma(\alpha)$ is the extension parameter that corresponds to concentration parameter $\alpha$. As long as $C$ has full column rank, we may pre and post-multiply \eqref{eq:nonidentifiability} by $C^\dagger$ and $(C^\dagger)^T$ respectively to see that \eqref{eq:nonidentifiability} is equivalent to 
\[
\begin{aligned}
&(\gamma(\alpha_1)(I_K - P_K) + P_K)S(\alpha_1)(\gamma(\alpha_1)(I_K - P_K) + P_K) \\
&\quad= (\gamma(\alpha_2)(I_K - P_K) + P_K)S(\alpha_2)(\gamma(\alpha_2)(I_K - P_K) + P_K).
\end{aligned}
\]
Recalling $S(\alpha)$ is a scalar multiple of $I_K - \frac1K\ones_K\ones_K^T$, we see that \eqref{eq:nonidentifiability} is equivalent to whether there are distinct $\alpha_1$ and $\alpha_2$ such that 
\[
\frac{\gamma(\alpha_1)^2}{K(K\alpha_1 + 1)} = \frac{\gamma(\alpha_2)^2}{K(K\alpha_2 + 1)}.
\]
This is equivalent to the invertibility of the function 
\begin{equation}
\varphi(\alpha) = \frac{\gamma(\alpha)^2}{K(K\alpha + 1)}.
\label{eq:scalarIdentifiability}
\end{equation}
Figure \ref{fig:identifiability} shows this function for $K = 10$ over a range of reasonable values of $\alpha$. We see that the function is in fact invertible.

\begin{figure}[h]
\centerline{\includegraphics[width=\columnwidth]{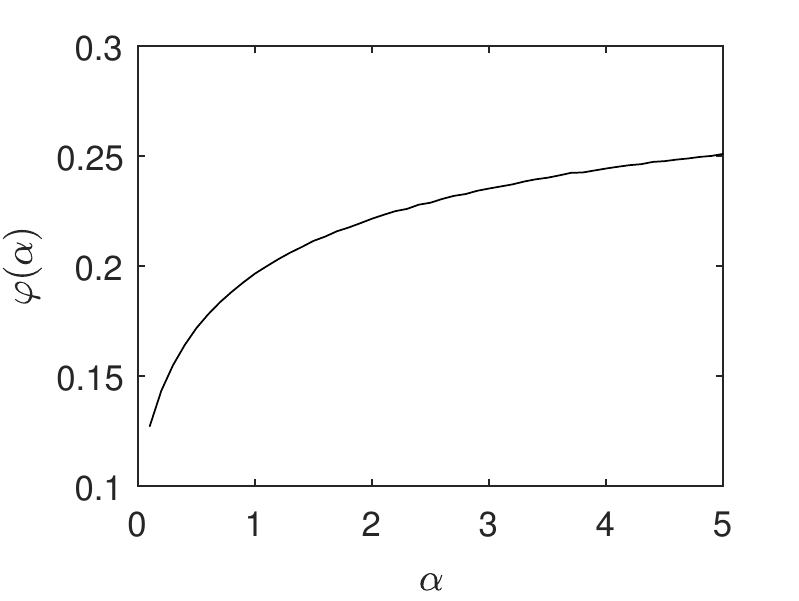}}
\caption{Empirical study of $\alpha$ identifiability.}
\label{fig:identifiability}
\end{figure}

Although Figure \ref{fig:identifiability} suggests \eqref{eq:scalarIdentifiability} is invertible, we do not have a rigorous proof. The main challenge is obtaining precise control on the growth of \eqref{eq:nonidentifiability}. Inspecting Figure \ref{fig:identifiability} shows that $\varphi(\alpha)$ is almost flat as soon as $\alpha$ exceeds $\frac52$. Intuitively, this is a consequence of the hardness of distinguishing between DSNs with large $\alpha$'s (and correspondingly large extension parameters). Mathematically, it is hard to obtain precise control on the growth of $\varphi(\alpha)$ because it is not possible to evaluate $\gamma(\alpha)$ explicitly. Although it is possible to show that
\begin{equation}
\gamma(\alpha) = \frac{1 - \frac1K}{\int_{V_k}e_k^T\theta p_\alpha(x)dx - \frac1K},
\label{eq:extPars}
\end{equation}
where $V_k = \{\theta\in\Delta^{K-1}:\argmax\{\theta_l:l\in[K]\} = k\}$ is the $k$-th Voronoi cell in a centroidal Voronoi tessellation of $\Delta^{K-1}$, $e_k$ is the $k^{th}$ canonical basis vector and $p_\alpha$ is the $\Dir(\alpha)$ density, it is hard to evaluate the integral. We defer an investigation of the identifiability of the concentration parameter to future work.

\section{Experimental Details}
\label{sec:supp:experiment_details}

\subsection{Computational cost of VLAD}

In this section, we tally up the computational cost of VLAD. The dominant cost it that of computing the top $K$ singular factors of the centered data matrix $\bar{X}$. This costs $O(DKn)$ floating point operations (FLOP's). The cost of the subsequent clustering step is asymptotically negligible compared to the cost of the SVD. Assuming each step of the $K$-means algorithm costs $O(Kn)$ FLOP's and the algorithm converges linearly, we see that the cost of obtaining an $O(\frac1n)$-suboptimal solution is $O(Kn\log n)$. We discount the cost of Monte Carlo estimates of the extension parameter because it can be tabulated. Thus the computational cost of the algorithm is dominated by the cost of computing the SVD.

\begin{figure*}[t]
\vskip -0.1in
\begin{subfigure}{.33\textwidth}
  \centering
  \includegraphics[width=\linewidth]{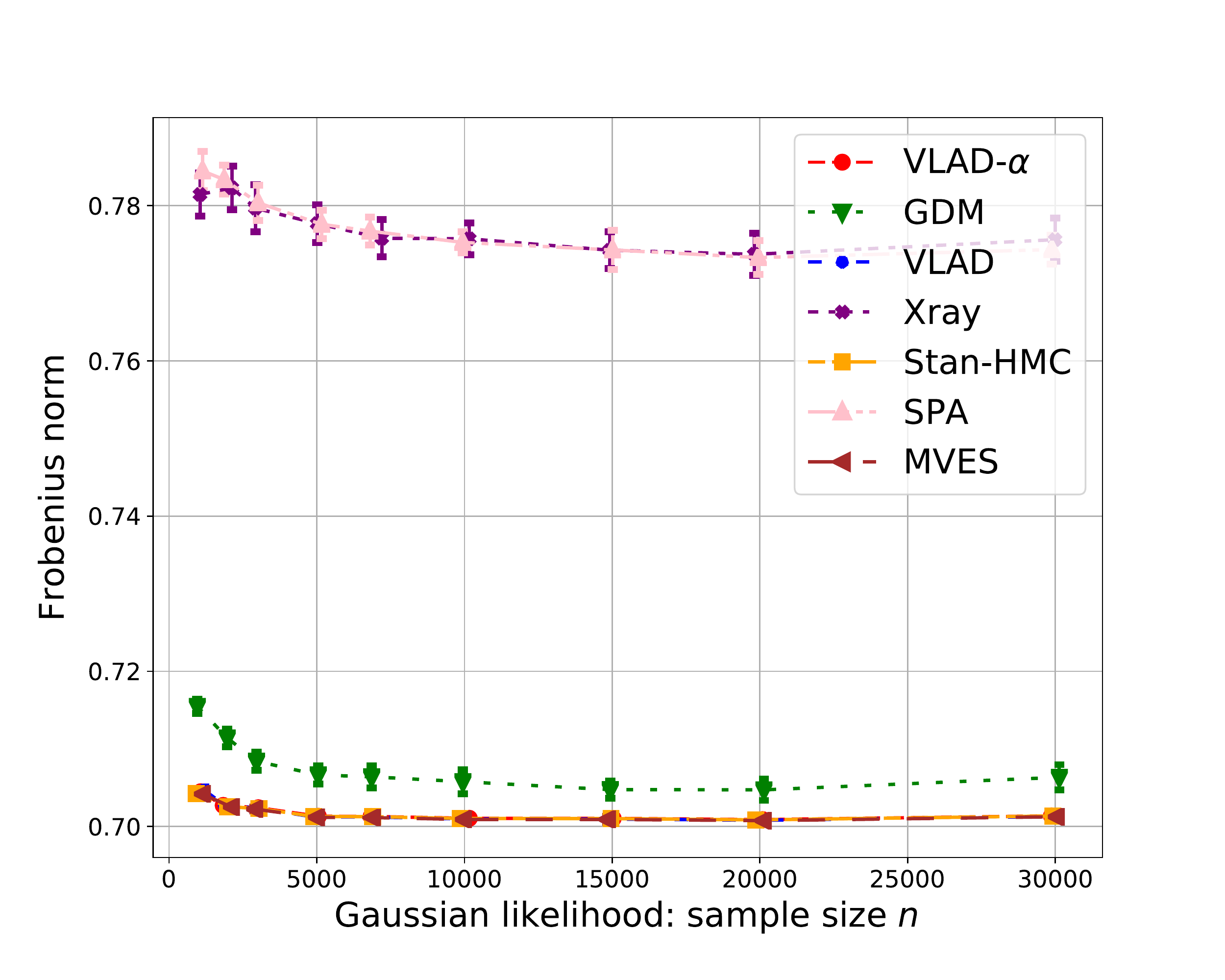}
  \vskip -0.1in
  \caption{Frobenius norm for Gaussian data}
  \label{fig:sample_size_gaus_ll}
\end{subfigure}
\begin{subfigure}{.33\textwidth}
  \centering
  \includegraphics[width=\linewidth]{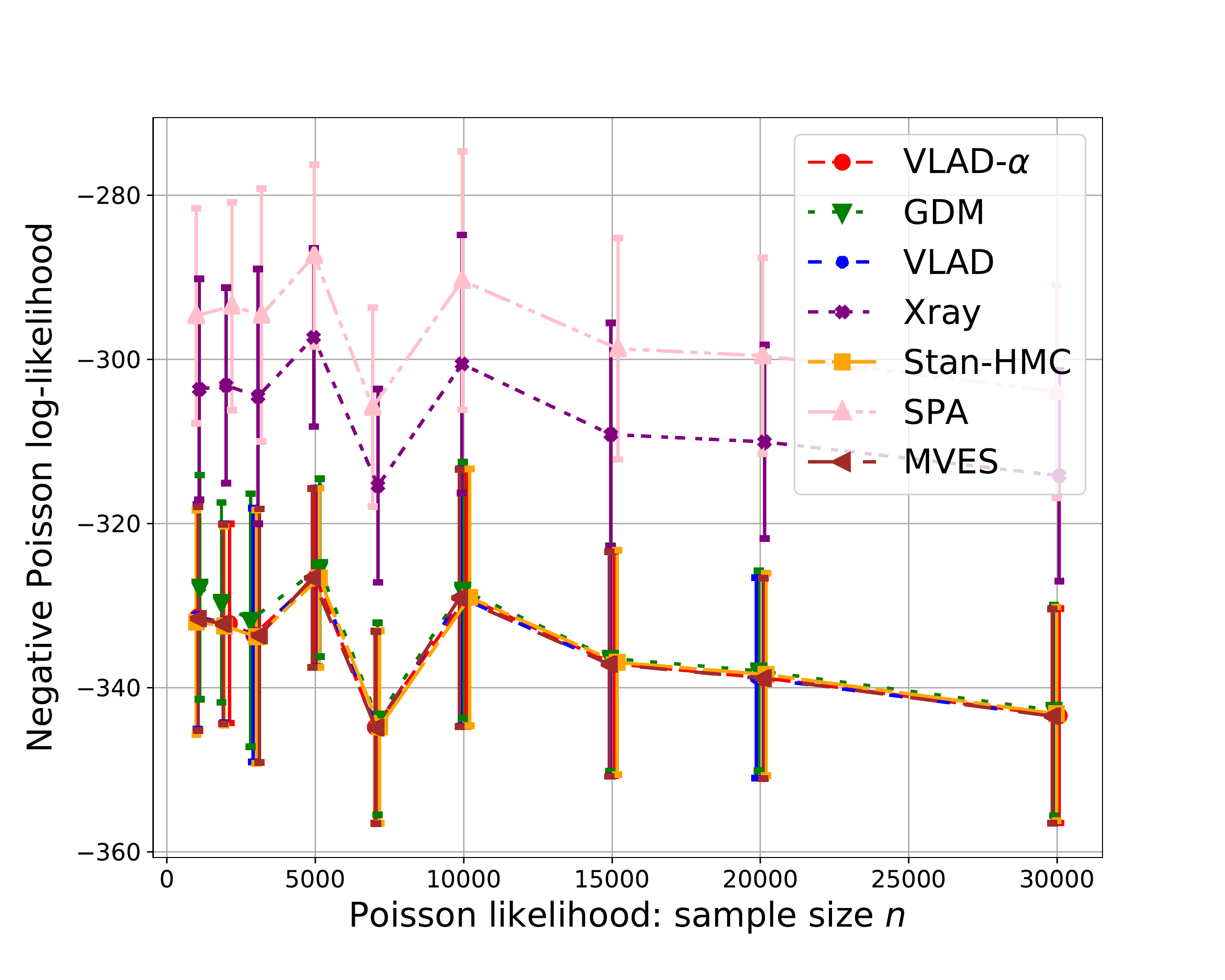}
  \vskip -0.1in
  \caption{Negative log-likelihood for Poison data}
  \label{fig:sample_size_pois_ll}
\end{subfigure}
\begin{subfigure}{.33\textwidth}
  \centering
  \includegraphics[width=\linewidth]{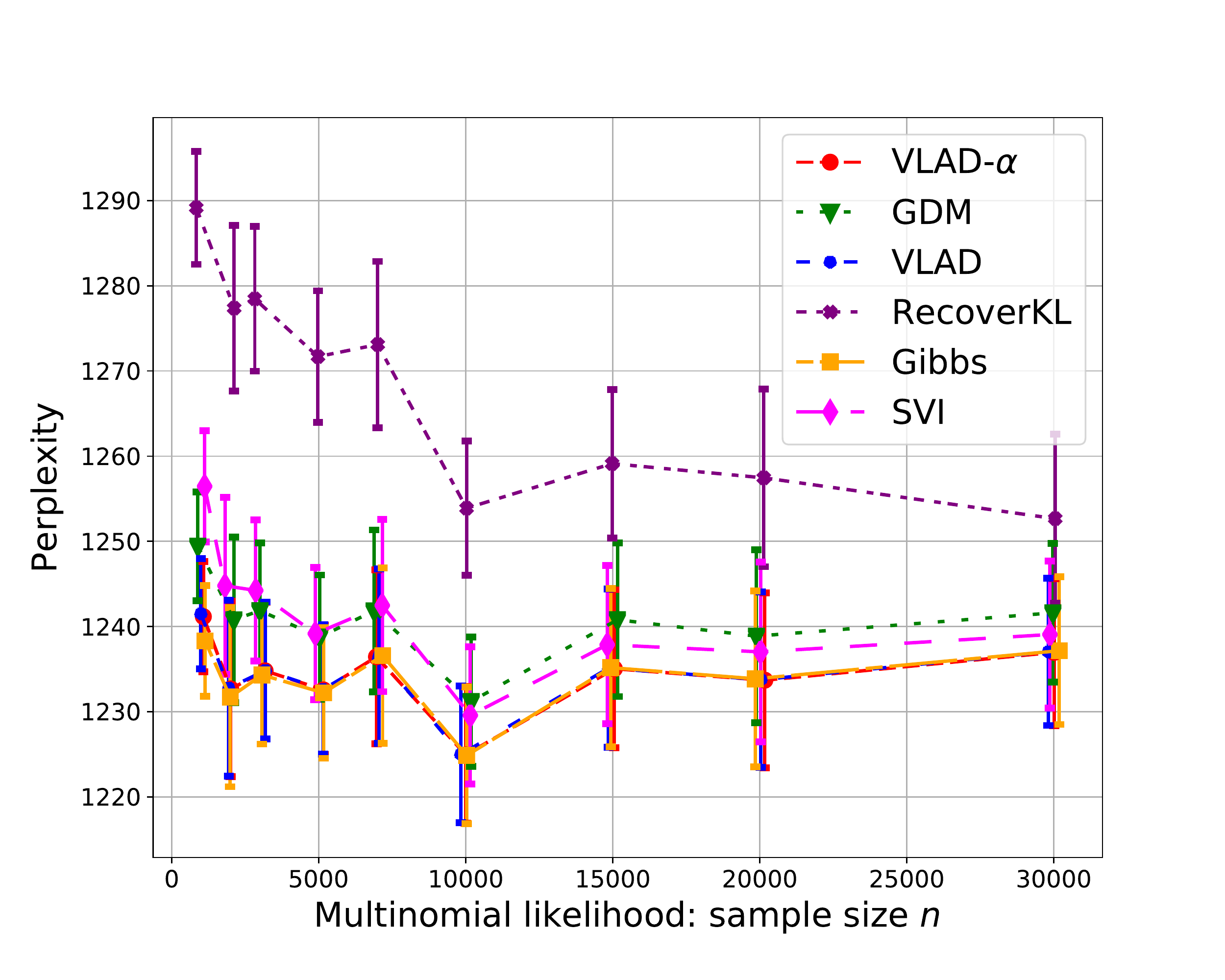}
  \vskip -0.1in
  \caption{Perplexity for LDA data}
  \label{fig:sample_size_lda_ll}
\end{subfigure}
\vskip -0.1in
\caption{Held out data performance for increasing sample size $n$}
\label{fig:sample_size_ll}
\end{figure*}

\begin{figure*}[t]
\vskip -0.1in
\begin{subfigure}{.33\textwidth}
  \centering
  \includegraphics[width=\linewidth]{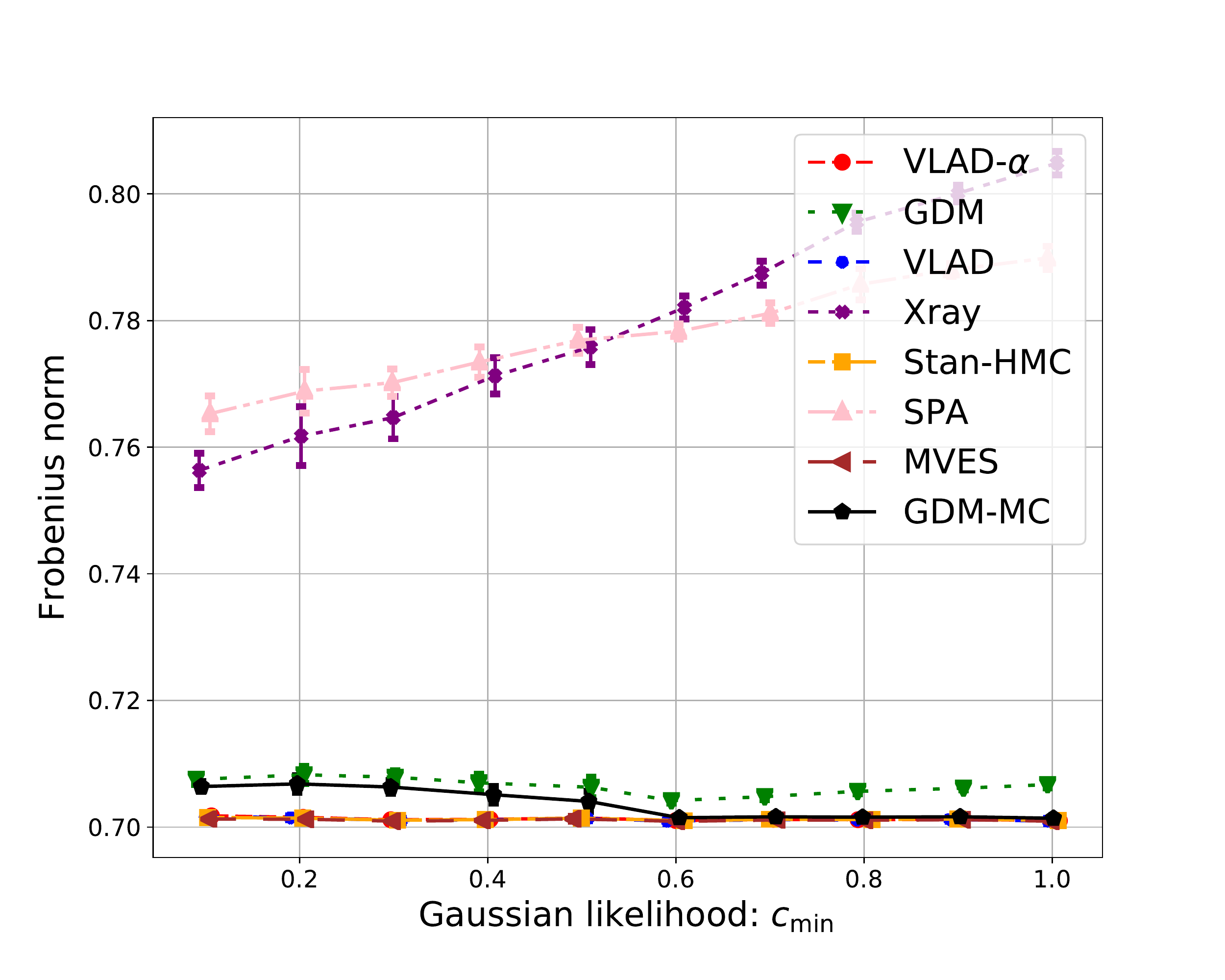}
  \vskip -0.1in
  \caption{Frobenius norm for Gaussian data}
  \label{fig:scale_gaus_ll}
\end{subfigure}
\begin{subfigure}{.33\textwidth}
  \centering
  \includegraphics[width=\linewidth]{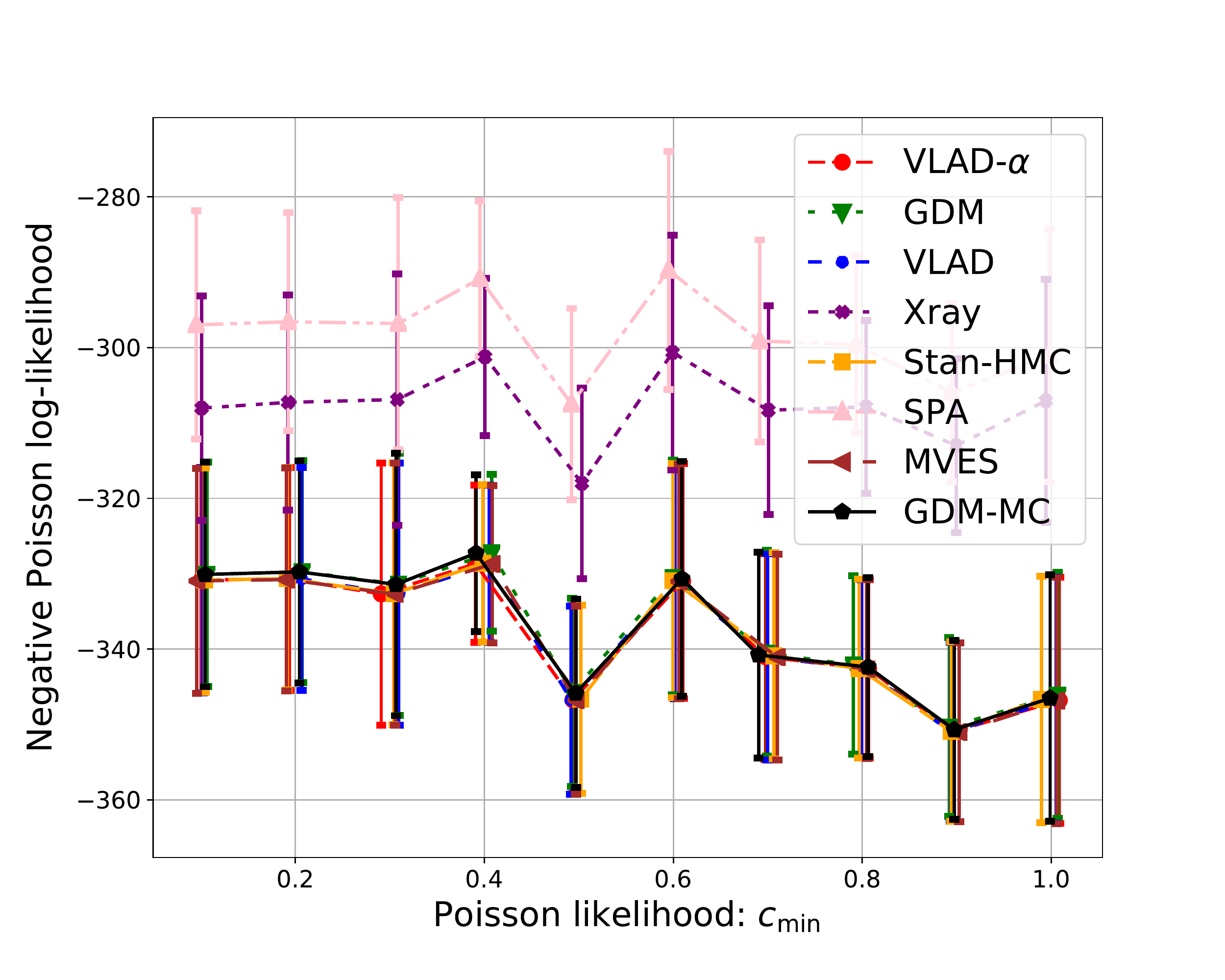}
  \vskip -0.1in
  \caption{Negative log-likelihood for Poison data}
  \label{fig:scale_pois_ll}
\end{subfigure}
\begin{subfigure}{.33\textwidth}
  \centering
  \includegraphics[width=\linewidth]{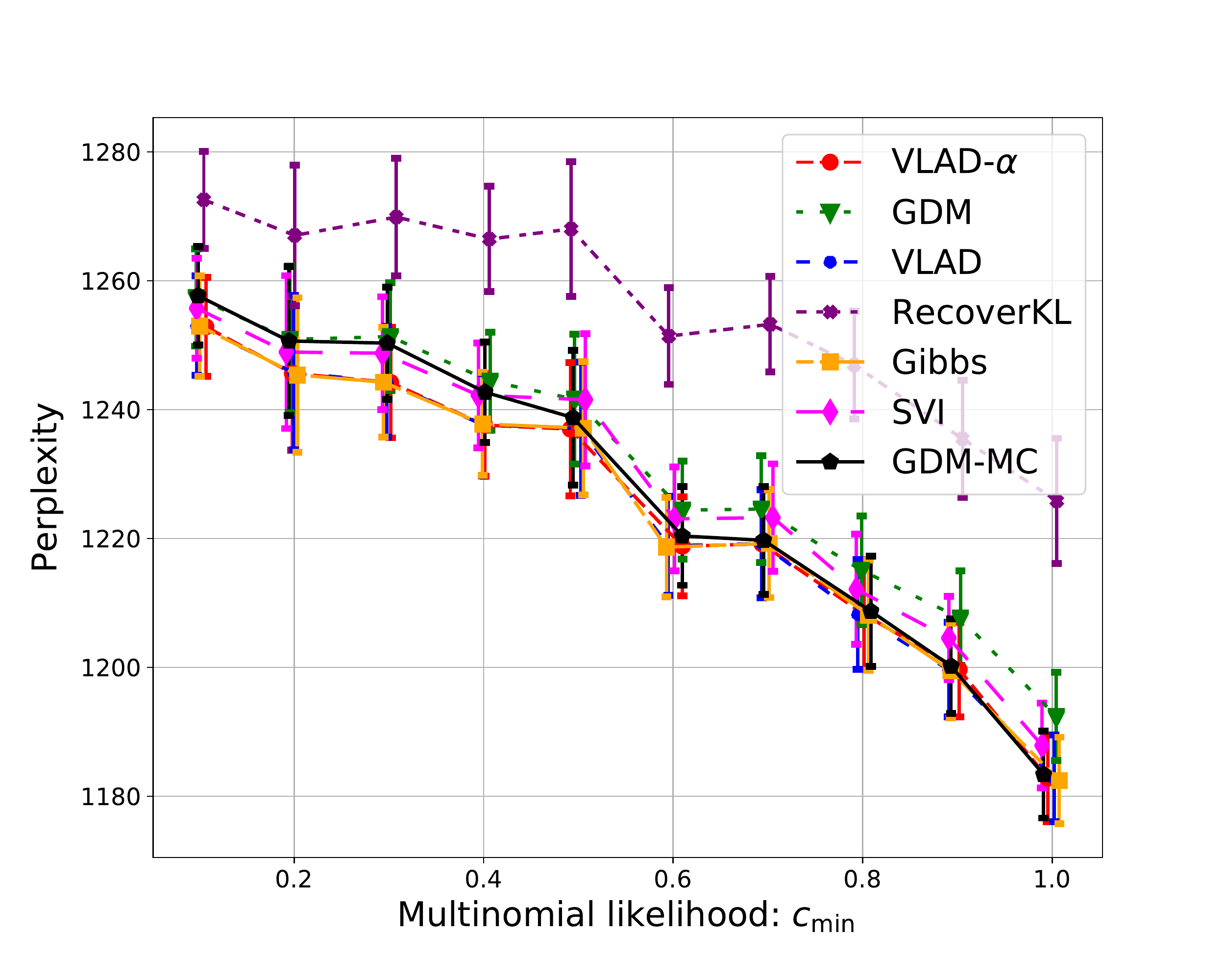}
  \vskip -0.1in
  \caption{Perplexity for LDA data}
  \label{fig:scale_lda_ll}
\end{subfigure}
\vskip -0.1in
\caption{Held out data performance for varying DSN geometry}
\label{fig:scale_ll}
\end{figure*}

\begin{figure*}[t]
\vskip -0.1in
\begin{subfigure}{.33\textwidth}
  \centering
  \includegraphics[width=\linewidth]{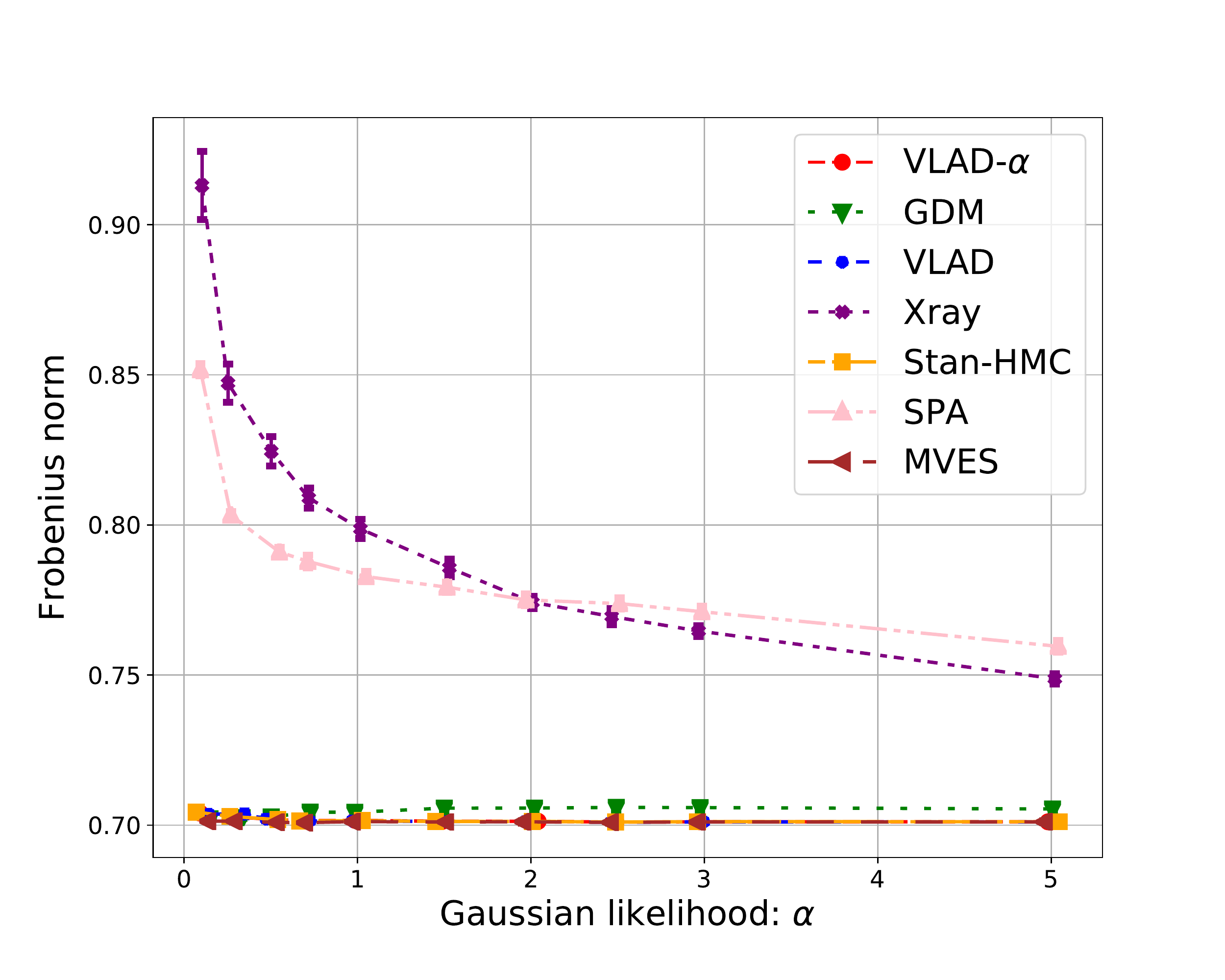}
  \vskip -0.1in
  \caption{Frobenius norm for Gaussian data}
  \label{fig:alpha_gaus_ll}
\end{subfigure}
\begin{subfigure}{.33\textwidth}
  \centering
  \includegraphics[width=\linewidth]{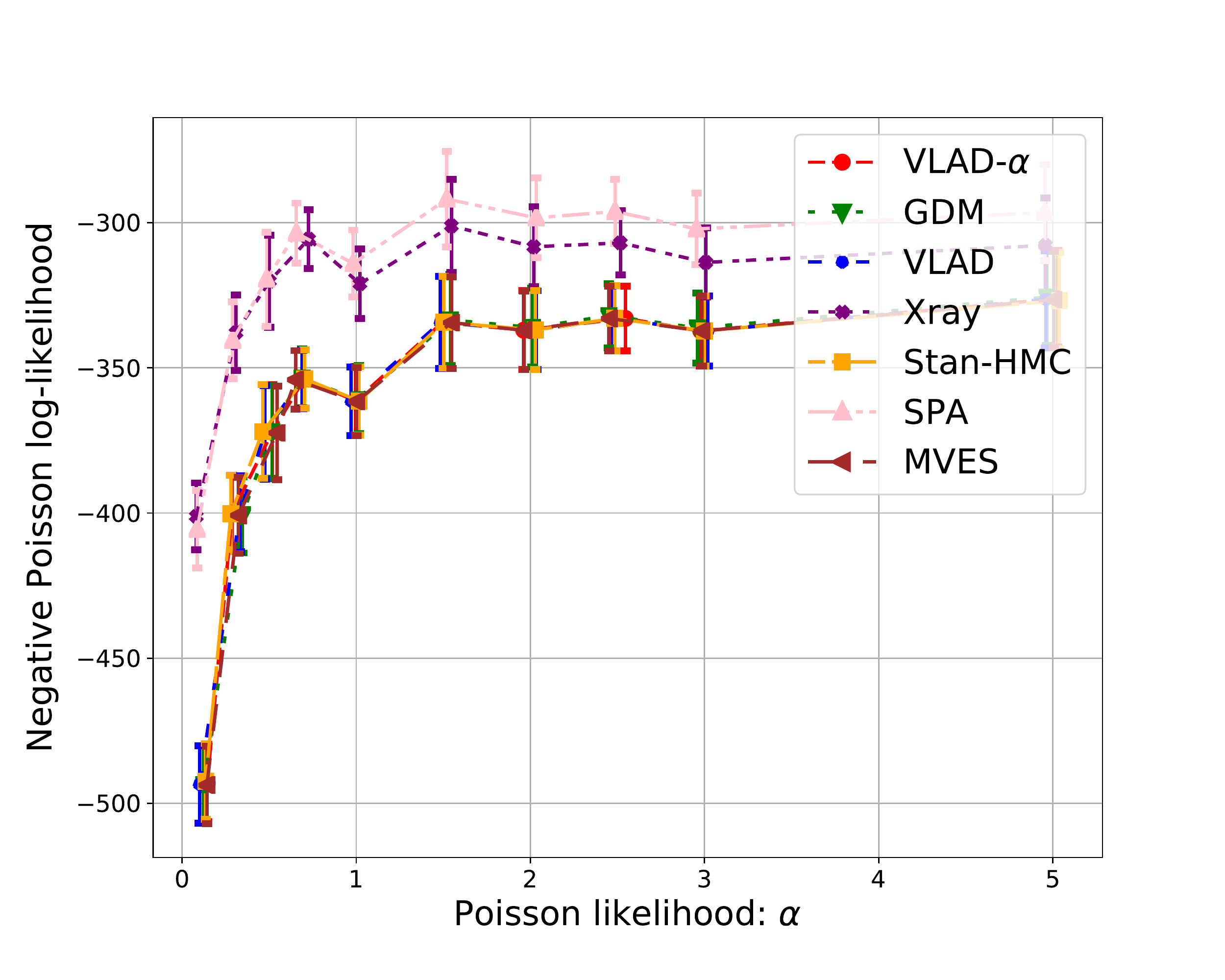}
  \vskip -0.1in
  \caption{Negative log-likelihood for Poison data}
  \label{fig:alpha_pois_ll}
\end{subfigure}
\begin{subfigure}{.33\textwidth}
  \centering
  \includegraphics[width=\linewidth]{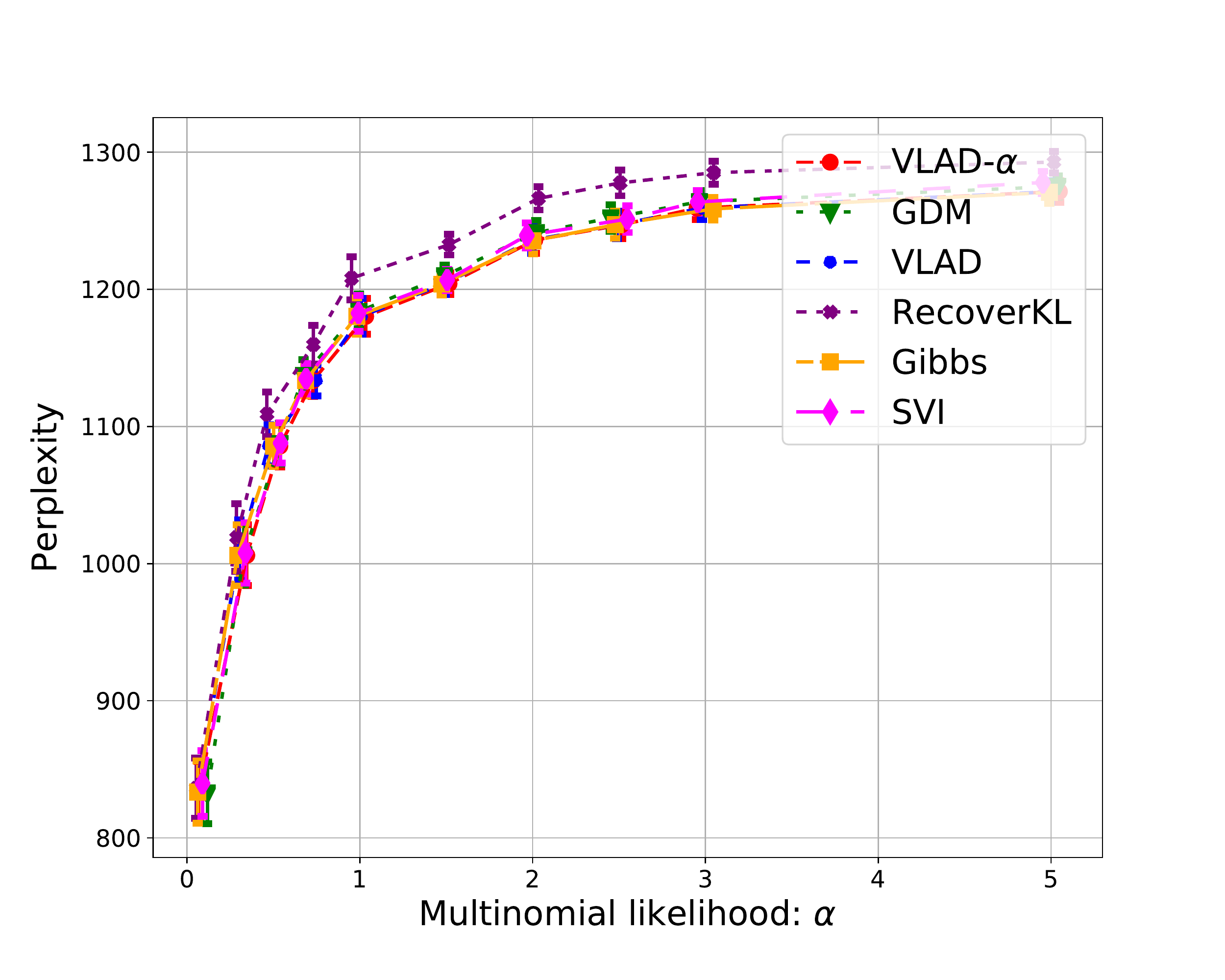}
  \vskip -0.1in
  \caption{Perplexity for LDA data}
  \label{fig:alpha_lda_ll}
\end{subfigure}
\vskip -0.1in
\caption{Held out data performance for increasing $\alpha$}
\label{fig:alpha_ll}
\end{figure*}

\subsection{Additional results}

\paragraph{Additional results for convergence behavior}
We complement the results presented in Fig.~\ref{fig:sample_size}  with the corresponding plots of the likelihood evaluated on a set of held out data. These results are summarized in Fig. \ref{fig:sample_size}. For all plots, the smaller value is better. We see that VLAD shows performance as good as HMC and Gibbs sampler at a much lower computational time. 

\paragraph{Additional results for geometry of the DSN}
Again, we further support our results of Fig.~\ref{fig:scale} with the corresponding held out data likelihood scores. Fig. \ref{fig:scale_ll} summarizes the results - VLAD shows competitive performance.

\paragraph{Additional results for varying Dirichlet prior}
In Figure \ref{fig:alpha_ll} we demonstrate held out data likelihood corresponding to experiments of Fig. ~\ref{fig:alpha}. We see that VLAD performs well in the whole range of analyzed values and likelihood kernels.

\paragraph{Data generation for simulations studies} For all experiments, unless otherwise specified, we set $D = 500, K=10, \alpha=2, n=10000$ (for LDA vocabulary size $D=2000$). To generate DSN extreme points, for Gaussian data we sample $\beta_1,\ldots,\beta_K \thicksim \mathcal{N}(0,K)$; for Poisson data $\beta_1,\ldots,\beta_K \thicksim \text{Gamma}(\ones,K\ones)$; for the LDA $\beta_1,\ldots,\beta_K \thicksim \text{Dir}_D(\eta)$ with $\eta=0.1$. To ensure skewed geometry we further rescale extreme points towards their mean by uniform random factors between 0.5 and 1. To do so first compute the mean of extreme points $C=\frac{1}{K}\sum_k \beta_k$ and then rescale each one with $\beta_k = C + c_k(\beta_k - C)$, where $c_k \thicksim \text{Unif}(c_{\text{min}},1)$. Except for the DSN geometry experiment, we set $c_{\text{min}}=0.5$.

Then we sample weights $\theta_i \thicksim \text{Dir}_K(\alpha)$ and data mean $\mu_i = \sum_k \theta_{ik} \beta_k$. For Gaussian data $x_i | \mu_i \thicksim \mathcal{N}(\mu_i, \sigma^2 I_D)$, $\sigma=1$; for Poisson data $x_i | \mu_i \thicksim \text{Pois}(\mu_i)$; for LDA we follow standard generating process \cite{blei2003latent} with 3000 words per document. All experiments were run for 20 repetitions and mean was used in the plots along with half standard deviation error bars.

\paragraph{Baseline methods and algorithms setups}
We considered four separability based NMF algorithms: Xray \cite{kumar2013fast} with code from \url{https://github.com/arbenson/mrnmf}; MVES \cite{chan2009convex} with code from \url{http://www.ee.nthu.edu.tw/cychi/source_code_download-e.php}; Sequential Projection Algorithm \cite{Gillis2014Fast} that we implemented in Python; RecoverKL \cite{arora2013practical} for the LDA case with code from \url{https://github.com/MyHumbleSelf/anchor-baggage}.

Bayesian NMF approaches often assume positive weights without the simplex constraint imposed by the Dirichlet prior on weights. Incorporating the simplex constraint complicates the inference \cite{paisley2014bayesian} as Dirichlet distribution is not conjugate to popular choices of data likelihood such as Gaussian or Poisson. Therefore we are not aware of any implementation for DSN type of models outside of the LDA scenario. We instead chose to compare to automated Bayesian inference methods. We implemented DSN inference with Poison and Gaussian likelihoods in Stan \cite{carpenter2017stan} and considered all three supported estimation procedures: HMC with No U-Turn Sampler \cite{hoffman2014no}, MAP optimization and \cite{kucukelbir2017automatic} Automatic Differentiation Variational Inference. MAP optimization and ADVI performed poorly and we did not report their performance. HMC was always trained with true value of $\alpha$ and with knowledge of $\sigma=1$ for the Gaussian scenario. Number of iterations was set to 80 for $n<3000$, 60 for $n=3000$ and 40 for $n>3000$. We had to restrict number of iterations due to prohibitively long running time (40 iterations for $n=30000$ took 3.5 hours for Gaussian likelihood and 14 hours for Poisson likelihood; VLAD took 7 seconds in both cases). For the LDA, we used Gibbs sampler \cite{griffiths2004finding} from \url{https://github.com/lda-project/lda} trained for 1000 iterations (1000 iterations for $n=30000$ took 3.6 hours; VLAD took 3min). Gibbs sampler was trained with true values of $\alpha$ and $\eta$. We used Stochastic Variational Inference \cite{hoffman2013stochastic} implementation from scikit-learn \cite{pedregosa2011scikit} and trained it with true values of $\alpha$ and $\eta$.

For the Geometric Dirichlet Means \cite{yurochkin2016geometric} we used implementation from \url{https://github.com/moonfolk/Geometric-Topic-Modeling} with 8 $K$-means restarts and ++ initialization.

VLAD was implemented in Python using numpy SVD package and scikit-learn \cite{pedregosa2011scikit} $K$-means clustering with 8 restarts and ++ initialization. The code is available at \url{https://github.com/moonfolk/VLAD}.

For the NYT data \url{https://archive.ics.uci.edu/ml/datasets/bag+of+words} we trained Gibbs sampler with $\alpha=0.1$ and $\eta=0.1$ for 1000 iterations and SVI with default settings. For the stock data we trained HMC for 100 iterations with $\alpha=0.05$.

\section{On asymmetric Dirichlet prior}
\label{sec:supp:assymetric_alpha}

In our work we assumed that $\theta_i \thicksim \text{Dir}_K(\alpha)$, where $\alpha \in \mathbb{R}_+$. When $\alpha$ is a scalar, the corresponding Dirichlet distribution is referred to as symmetric. More generally, $\alpha \in \mathbb{R}_{+}^K$ is a vector of parameters. Our algorithmic guarantees, such as alignment of CVT centroids of $\Bscr$, extreme points and centroid of $\Bscr$ and equivalence of extension parameters for all extreme points directions, fail for the general asymmetric case. \citet{wallach2009rethinking} showed that more careful treatment of the parameter $\alpha$ can improve the quality of the LDA topics. Geometric treatment of the asymmetric Dirichlet distribution remains to be the question of future studies. To facilitate the discussion, here we visualize the problem using toy $D=3,K=3$ example (similar to Fig.~\ref{fig:triangle} ) with $\alpha=(0.5,1.5,2.5)$. Results of the four different algorithms are shown in Fig. \ref{fig:triangle_alpha}. Note that for VLAD (Fig. \ref{fig:gam_alpha}) we only show the directions of the line segments of the obtained sample CVT centroids and the data center, since we do not have a procedure for extension parameter estimation in the asymmetric Dirichlet case. We see that all of the algorithms fail with various degrees of error and notice that the directions obtained by VLAD no longer appear consistent, however do not deviate drastically from the truth. We propose to call such toy triangle experiment a \emph{triangle test} and hope to "pass" the asymmetric Dirichlet \emph{triangle test} in the future work.

\begin{figure*}[t]
\vskip -0.1in
\begin{subfigure}{.5\textwidth}
  \centering
  \includegraphics[width=\linewidth]{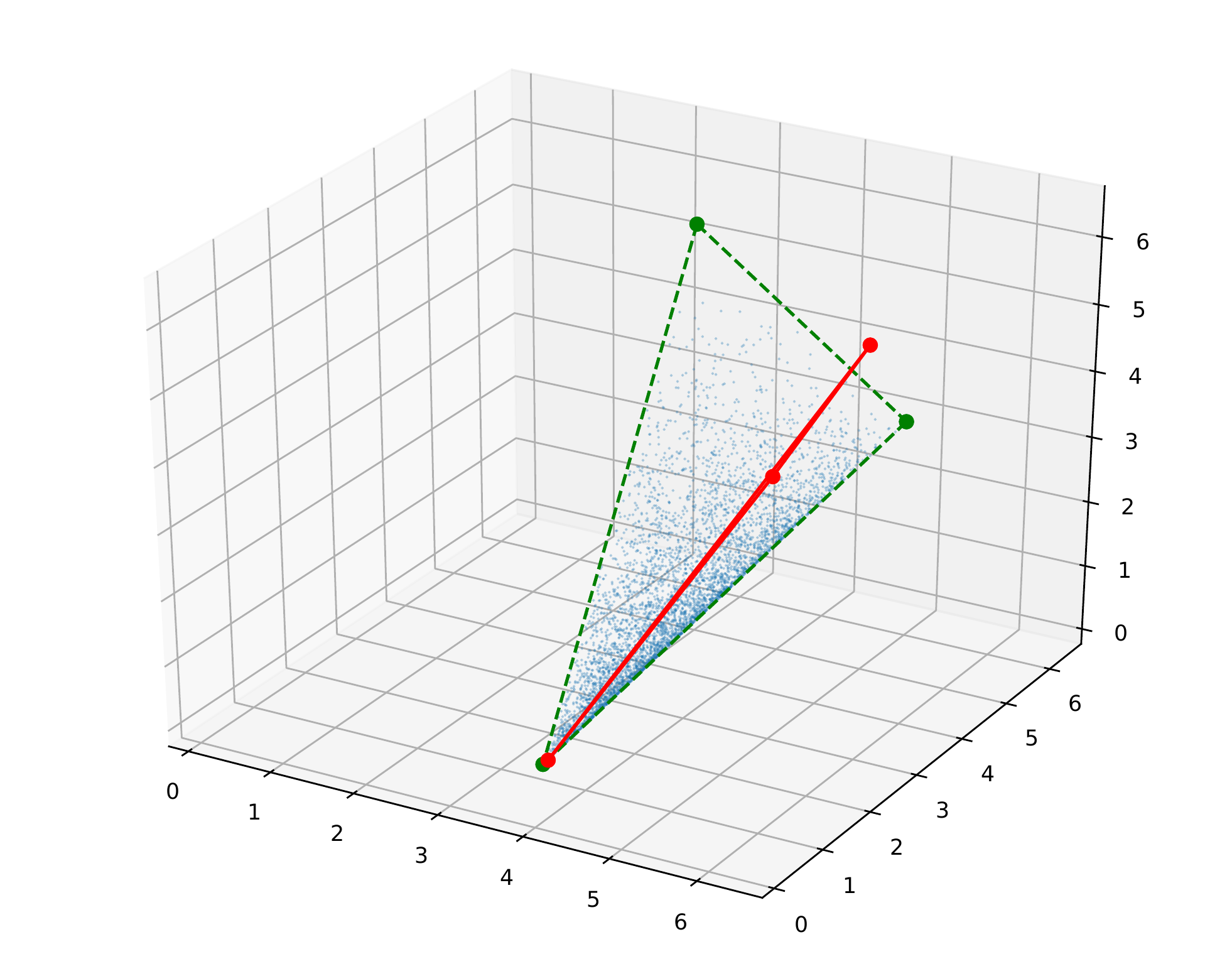}
  \caption{GDM}
  \label{fig:gdm_alpha}
\end{subfigure}
\begin{subfigure}{.5\textwidth}
  \centering
  \includegraphics[width=\linewidth]{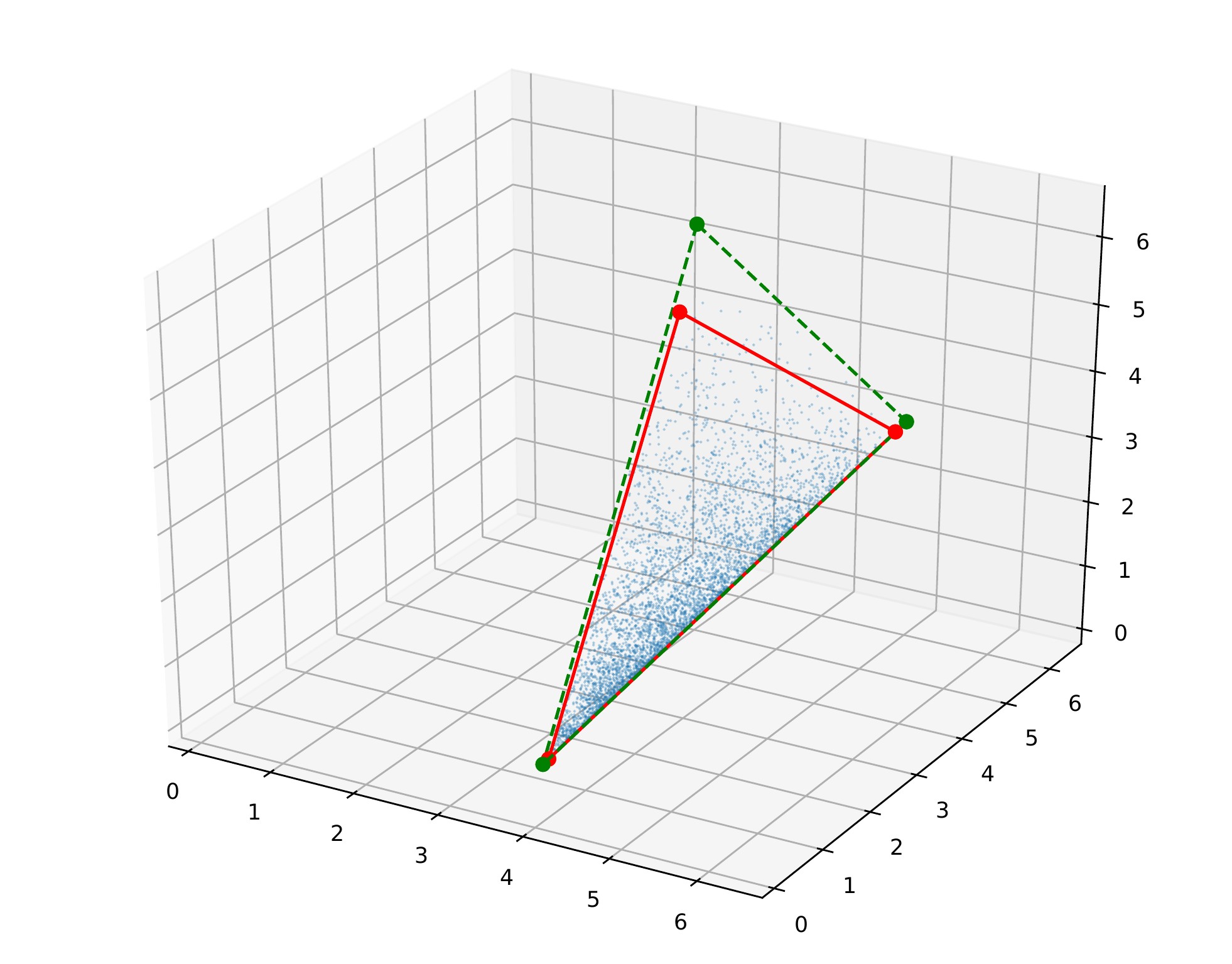}
  \caption{Xray}
  \label{fig:xray_alpha}
\end{subfigure} \\
\begin{subfigure}{.5\textwidth}
  \centering
  \includegraphics[width=\linewidth]{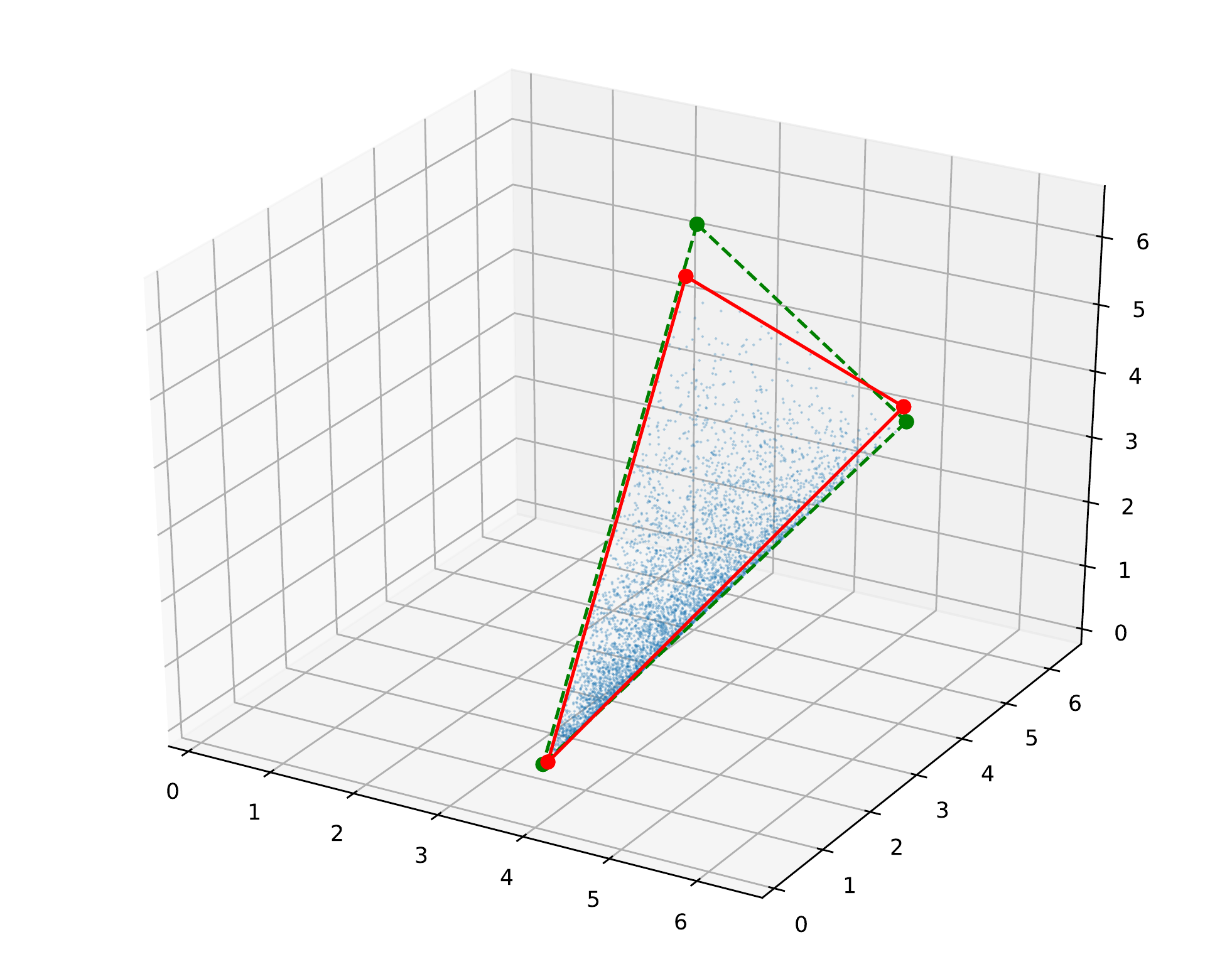}
  \caption{HMC}
  \label{fig:stan-hmc_alpha}
\end{subfigure}
\begin{subfigure}{.5\textwidth}
  \centering
  \includegraphics[width=\linewidth]{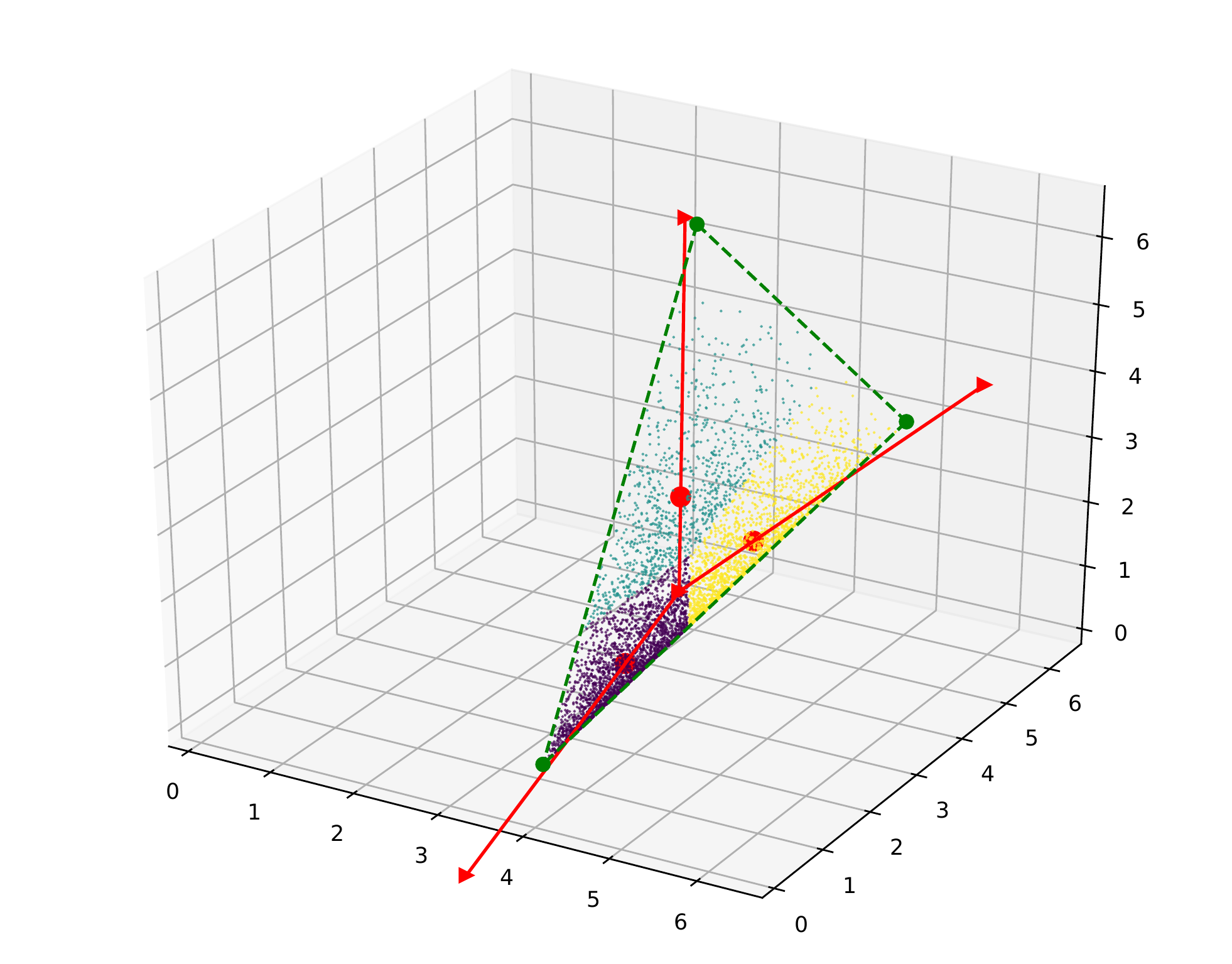}
  \caption{VLAD}
  \label{fig:gam_alpha}
\end{subfigure}
\caption{Asymmetric Dirichlet toy simplex learning: $n=5000,D=3,K=3,\alpha=(0.5,1.5,2.5)$}
\label{fig:triangle_alpha}
\vskip -0.2in
\end{figure*}

\end{document}